 \newtcolorbox{assbox}{colback=black!5!white,colframe=black!75!black}
  \newtcolorbox{thmbox}{colback=red!5!white,colframe=red!75!black}
\newcommand{\RR}{\mathbb{R}}
\newcommand{\NN}{\mathbb{N}}
\newcommand{\EE}{\mathbb{E}}
\newcommand{\Cc}{\mathcal{C}}
\newcommand{\Dd}{\mathcal{D}}
\newcommand{\Ff}{\mathcal{F}}
\newcommand{\Ll}{\mathcal{L}}
\newcommand{\Pp}{\mathcal{P}}
\newcommand{\Ss}{\mathcal{S}}
\newcommand{\Xx}{\mathcal{X}}
\newcommand{\E}{\mathbf{E}}
\renewcommand{\P}{\mathbf{P}}
\renewcommand{\d}{\mathrm{d}}
\newcommand{\Id}{\mathrm{Id}}
\newcommand{\h}{\mathfrak{h}}
\newcommand{\Lip}{\mathrm{Lip}}
\DeclareMathOperator{\dist}{dist}
\newtheorem{theorem}{Theorem}
\newtheorem{proposition}[theorem]{Proposition}
\newtheorem{lemma}[theorem]{Lemma}
\newtheorem{corollary}[theorem]{Corollary}
\newtheorem{assumption}{Assumption}
\definecolor{ForestGreen}{cmyk}{0.91,0,0.88,0.12}
\colorlet{pierrem}{ForestGreen}
\title{
Phase Diagram of Dropout for Two-Layer Neural Networks in the Mean-Field Regime
 }
\newcommand*\samethanks[1][\value{footnote}]{\footnotemark[#1]}
\author{
L\'ena\"ic Chizat\textsuperscript{1}\thanks{ Correspondence: \texttt{lenaic.chizat@epfl.ch}.} ,\quad
Pierre Marion\textsuperscript{2}\thanks{Work mostly done while at the Institute of Mathematics, EPFL.} ,\quad
Yerkin Yesbay\textsuperscript{3}\samethanks
}
\date{\small
\textsuperscript{1} École Polytechnique Fédérale de Lausanne (EPFL), Institute of Mathematics, Lausanne, Switzerland \\
\textsuperscript{2}Inria, École Normale Supérieure, PSL Research University, Paris, France \\
\textsuperscript{3}Télécom Paris, Institut Polytechnique de Paris, Palaiseau, France \\
}
\begin{document}
\maketitle
\begin{abstract}
Dropout is a standard training technique for neural networks that consists of randomly deactivating units at each step of their gradient-based training. It is known to improve performance in many settings, including in the large-scale training of language or vision models.
As a first step towards understanding the role of dropout in large neural networks, we study the large-width asymptotics of gradient descent with dropout on two-layer neural networks with the mean-field initialization scale. We obtain a rich asymptotic phase diagram that exhibits five distinct nondegenerate phases depending on the relative magnitudes of the dropout rate, the learning rate, and the width. Notably, we find that the well-studied ``penalty'' effect of dropout only persists in the limit with impractically small learning rates of order $O(1/\text{width})$. For larger learning rates, this effect disappears and in the limit, dropout is equivalent to  a ``random geometry'' technique, where the gradients are thinned randomly  \emph{after} the forward and backward pass have been computed. In this asymptotic regime, the limit is described by a mean-field jump process where the neurons' update times follow independent Poisson or Bernoulli clocks (depending on whether the learning rate vanishes or not). For {some of} the phases, we obtain a description of the limit dynamics both in path-space and in distribution-space. The convergence proofs involve a mix of tools from mean-field particle systems and stochastic processes. Together, our results lay the groundwork for a renewed theoretical understanding of dropout in large-scale neural networks.
\end{abstract}

\section{Introduction} \label{sec:intro}
Stochastic gradient descent (SGD) is the workhorse of deep learning optimization, but its raw form is rarely used in practice. Over the years, researchers have proposed many training techniques—adaptive learning rates, gradient clipping, weight decay, and more—on top of SGD to accelerate convergence or improve test performance. Among these heuristics, \emph{dropout}~\citep{srivastava2014dropout} has stood the test of time because it is easy to implement, architecture-agnostic, and the associated performance improvement is empirically robust across tasks. It consists in randomly deactivating a subset of units/neurons during each forward–backward pass. This random \emph{thinning} of the network induces a structured, non-centered, perturbation of the gradients whose theoretical analysis remains challenging. 

The original intuition for dropout's performance is that ``it prevents co-adaptation between units'' and that it approximates the ``combined opinions of exponentially many dropout nets''~\citep{srivastava2014dropout}. However, these intuitions remain hypothetical and {the recent theoretical literature on dropout follows a quite different route, which we now outline}.

Let $\theta$ be the parameters of a neural network (NN), $\Ll(\theta)$ the training loss and $\Ll^\eta(\theta)$ the training loss perturbed with a dropout mask encoded by the random variable $\eta$.
Following e.g.~\cite{wei2020implicit}, one can decompose the noise induced by dropout on gradients as
\begin{align}\label{eq:bias-variance}
\nabla \Ll^\eta(\theta) -  \nabla \Ll(\theta)&= \underbrace{\E_\eta [\nabla \Ll^\eta(\theta)]-\nabla \Ll(\theta)}_{\text{bias}}+\underbrace{\nabla \Ll^\eta(\theta)- \E_\eta [\nabla \Ll^\eta(\theta)]}_{\text{centered noise}}.
\end{align}
Exchanging gradient and expectation, we notice that the bias is of the form $ \nabla P(\theta)$ with 
\begin{align}\label{eq:penalization-def}
P(\theta) = \E_\eta[ \Ll^\eta(\theta)]- \Ll(\theta).
\end{align}
This function, known as \emph{dropout penalty}, can be made explicit in simple settings, such as linearly-parameterized models or two-layer NNs with squared loss~\citep[][{Proposition 4, and references therein}]{arora2021dropout}. More generally, an informal but empirically accurate approximation of the penalty can be given in terms of the layer-to-layer Jacobians~\citep{wei2020implicit}. Much of the theoretical literature on dropout studies the optimization and generalization properties of this penalty, in particular by investigating its connection with more classical regularization terms such as the weight $\ell_2$ norm or path norm \citep[notably,][]{wager2013dropout,helmbold2015inductive,mou2018dropout,arora2021dropout}. As for the stochastic term in~\eqref{eq:bias-variance}, its effect has been studied in~\citet{mianjy2020convergence,wei2020implicit,zhang2024implicit,zhang2024stochastic}.

Despite this extensive literature, the question of the scaling of dropout to large-width networks has received little attention. This is a particularly important issue given that dropout is still used by practitioners for large-scale models~\citep{vaswani2017attention,radford2018improving,devlin2019bert,dosovitskiy2021an,hu2021lora,ouyang2022training,ramesh2022hierarchical,taylor2022galactica}. This begs the question of revisiting the theory of dropout in the context of large-scale networks, which is our focus in this paper.  To the best of our knowledge, we are the first work to study the scaling of dropout for large NNs from a theoretical angle. A couple of experimental papers investigated specifically the question of the role of dropout in large NNs for language and vision tasks~\citep{liu2023dropout,xue2023torepeat}. These studies confirm the effectiveness of dropout in these settings.

This question falls into the broader effort of understanding the scaling properties of large-scale NNs. In the absence of dropout, prior work has shown that the training dynamics of large NNs already exhibit a rich asymptotic phase diagram~\citep{chizat2019lazy}, with in particular the so-called neural tangent kernel (NTK) regime \citep{jacot2018neural}, where (overparameterized) models may reach zero training loss albeit without building data-dependent representations, and the richer feature learning \citep{chizat2018global,mei2018mean,sirignano2020mean,rotskoff2018neural} associated to the so-called mean-field scaling or $\mu$-parameterization ($\mu$P)~\citep{yang2021tensor}. These scalings were investigated in many works for various architectures~\citep[e.g.][]{geiger2020disentangling, yang2021tensor,  vyas2023feature}. A current active topic in this line of  research concerns the large-depth scaling of ResNets~\cite[see][and references therein]{dey2025don, chizat2025hidden}.

\paragraph{Contributions.} 
This paper is the first step in a broader effort to understand the theoretical properties of dropout in large-scale models. To establish a rigorous foundation, we adopt a mathematical and descriptive approach, focusing on a restricted yet illustrative setting: two-layer networks trained with gradient descent and dropout, using the squared loss and the mean-field scaling. Despite its simplicity, this setup already reveals surprising insights and challenges several widely-held beliefs about dropout.

Our main result (Theorem~\ref{thm:main} {in Section \ref{sec:main-result}}) characterizes all possible scaling limits of the dynamics, depending on {the relative scalings of} the width, learning rate, and dropout rate. {The proof, given in Section \ref{sec:proofs}, relies on different constructions depending on the scaling of parameters.}
  
In Section~\ref{sec:discussions}, we analyze the qualitative properties of these limiting dynamics. In particular:
  \begin{itemize}
  \item All regimes involving the penalization term~\eqref{eq:penalization-def} require a learning rate of order \( O(1/\text{width}) \) which is computationally impractical. {A more efficient alternative that does not require small learning rates is to explicitly add a deterministic penalty $P(\theta)$ to the risk. We show in Proposition~\ref{prop:ram-equivalent} the equivalence of this procedure with  dropout in the small learning rate regime.}

  \item For larger learning rates, we show that the asymptotic effect of dropout is equivalent to gradient descent in a random metric—specifically, a random block-coordinate descent scheme (Proposition~\ref{prop:ram-equivalent}). We investigate the potential advantages of this regime and argue that it allows certain features to take larger steps than the local sharpness of the loss would otherwise permit (see Section~\ref{sec:LLR}). {We also show that a time-averaging of the NTK (Section~\ref{sec:NTK-time}) occurs in this regime.}
  \end{itemize}

Let us now briefly comment on the mathematical techniques used in our proofs. The two main technical contributions correspond to cases~\ref{case:WGF} and~\ref{case:jump} of Theorem~\ref{thm:main}.

In case~\ref{case:WGF}, we establish convergence of the dynamics to a penalized Wasserstein gradient flow. The analysis relies on coupling techniques for mean-field particle systems~\cite[see e.g.][]{mckean1967propagation, sznitman1991topics, lacker2018mean}, as well as martingale methods to control the continuous-time limit of stochastic terms. While similar tools have been combined previously~\cite[e.g.][]{mei2019mean}, our setting is different and requires careful error decomposition, as some of the relevant limits do not commute. We derive this result as a special case of a more general mean-field stochastic approximation theorem (Section~\ref{eq:mean-field-sto-app}), which may be of independent interest. {This approach allows to derive convergence both in path space and in distribution space.}

In case~\ref{case:jump}, we prove convergence to a mean-field jump process. This is handled by constructing an asynchronous coupling between the discrete dynamics—driven by binomial clocks—and the limiting process—driven by a Poisson clock. We also address the well-posedness of the limiting dynamics, both at the SDE and PDE levels. As we were not able to obtain a suitable uniform-in-time particle-level error control, our convergence result is formulated only in terms of time-marginal distributions, and we leave the question of pathwise convergence open for this case.

\section{Setup}
\paragraph{Two-layer neural networks.} Consider a two-layer neural network (2NN) of width $n\in \NN^*$ of the form $f(\theta,z) = \frac1n \sum_{i=1}^n \phi(x^i,z)$ where $\theta=(x^1,\dots,x^n)\in (\RR^p)^n$ are the weights, $z\in \RR^d$ is the input and $\phi:\RR^p\times \RR^d\to \RR$. The typical structure of a 2NN is recovered when ${\phi}(x,z)=a\sigma (b^\top z + c)$ where $\sigma:\RR\to \RR$ is the activation function and $x=(a,b,c)\in \RR\times \RR^d\times \RR$.

Given a training set $(z_i,y_i)_{i=1}^m \in (\RR^d\times \RR)^m$, it will be convenient to introduce the {feature} map $\bm\phi: \RR^p\to \RR^m$ defined by $\bm\phi(x)=(\phi(x,z_1),\dots, \phi(x,z_m))$. Analogously, we define $\bm f(\theta)=(f(\theta,z_1),\dots, f(\theta,z_m))$, which we call the \emph{predictor} parameterized by $\theta$. By construction, it holds
\begin{align*}
\bm f (\theta) = \frac1n \sum_{i=1}^n \bm \phi (x^i),&& \theta=(x^1,\dots,x^n).
\end{align*}

All our theoretical results are proved under the following assumption:

\begin{assumption}[Regularity of $\bm \phi$]\label{ass:phi} The function $\bm \phi:\RR^p\to \RR^m$ is bounded and differentiable with a bounded and Lipschitz differential. We write  $D\bm \phi(x)\in \RR^{m\times p}$ for the Jacobian matrix of $\bm \phi$ at $x$.
\end{assumption}

Our boundedness and regularity assumptions on $\bm \phi$ are quite strong (in particular they {require bounded weights in the case} of standard 2NN presented above) but they allow focusing on the core ideas in the proofs. We believe that it could be possible to relax these assumptions at the cost of an increased level of technicality (see e.g.~\cite{wojtowytsch2020convergence, chizat2020implicit} for the mean-field limit of ReLU 2NN, where $\bm \phi$ satisfies none of these assumptions).
 
 \paragraph{Dropout.} In our context, dropout consists in modifying the predictor at each step of the training process by applying a random mask to the sum defining $\bm f$. The mask is encoded by a vector $\eta=(\eta^1,\dots,\eta^n)\in \RR^n$ with i.i.d.~entries and the \emph{dropout predictor} is
\begin{align*}
\bm{f}^\eta(\theta) = \frac1n \sum_{i=1}^n (1+\eta^i) \bm{\phi}(x^i), &&\text{where}&& \eta^i =\begin{cases}
\frac{1-q}{q} &\text{with probability $q$}\\
-1 &\text{with probability $1-q$}
\end{cases}.
\end{align*}
The \emph{keep} rate $q\in [0,1]$ is a hyperparameter that represents the probability of \emph{not} masking a neuron. It is equal to one minus the dropout rate. Observe that $(1+\eta^i)$ equals $1/q$ with probability $q$ and equals $0$ with probability $1-q$. We will repeatedly use the following identities:
\begin{align*}
\E [\eta^i] = 0, \quad \E[(\eta^i)^2] = \frac{1-q}{q}, \quad \E[(1+\eta^i)^2] = \frac{1}{q}.
\end{align*}
 \paragraph{GD-dropout.} We consider the squared loss on the training set $(z_i,y_i)_{i=1}^m$. Writing $\bm{y}=(y_1,\dots,y_m)\in \RR^m$,  this leads, in presence of dropout, to the following objective function (which inherits randomness from $\eta$):
 $$
 \Ll^\eta(\theta)  = \frac12 \Vert \bm f^\eta(\theta)-\bm{y}\Vert_2^2.
 $$
 
 The dynamics under investigation, which we refer to as \emph{GD-dropout}, is gradient descent (GD) on the dropout loss with $\eta$ independently resampled at each iteration. Let $\mu_0\in \Pp(\RR^p)$ be an initial probability distribution and $\xi^{1},\xi^2,\dots$  a sequence of independent samples from $\mu_0$. For a given width $n\in \NN^*$, we define the (random) sequence $(\theta^n_k)_{k\in \NN}$ recursively by $\theta^n_0=(\xi^1,\dots,\xi^n)$ and
 \begin{align*}
 \theta^n_{k+1} = \theta^n_k - \tau\cdot n\cdot \nabla \Ll^{\eta_k}(\theta^n_k)
 \end{align*}
 where $\tau>0$ is the master learning rate. 
{The factor $n$ multiplying the gradient is introduced so that, in the absence of dropout, the limit is nondegenerate when $n\to \infty$ (this is the mean-field/$\mu$P learning rate, see references in Section \ref{sec:intro}). The dropout mask $\eta_k\in \RR^n$ is independently sampled at each iteration $k$. In this setting, the sources of randomness are thus (i) the random initialization and (ii) the random dropout masks. }

By the chain rule, we obtain the following dynamics in terms of  the weights $\theta^n_k = (X^{1,n}_k,\dots,X^{n,n}_k)$. From now on, we use capital letters for the parameters of the dynamics, to recall that we are dealing with random variables. For $i\in [1:n]$:
 \begin{equation}\label{eq:GD-dropout-weights}
 \left\{
 \begin{aligned}
 X^{i,n}_0 &= \xi^i, \\
 X^{i,n}_{k} &= X^{i,n}_{k-1}  -\tau\cdot (1+\eta^i_k)D\bm{\phi}(X^{i,n}_{k-1})^\top \left(\frac1n \sum_{i=1}^n (1+\eta^i_k) \bm{\phi}(X^{i,n}_{k-1})-\bm{y}\right),\; k\in \NN^*.
 \end{aligned}
 \right.
 \end{equation}
 The rest of the paper is concerned with the analysis of this dynamical system.

 \section{Main result: characterization of the infinite-width limits} \label{sec:main-result}
 There are three hyperparameters (HP) involved in the definition the dynamics: width $n$, keep rate $q$, and learning rate $\tau$. Our goal is to describe how the dynamics behaves in the large-width limit $n\to \infty$ and as a function of  the scaling of other HP with $n$.

\subsection{Notation for infinite-width 2NN} To formulate the limit dynamics, we allow $\bm{f}$ to be parameterized by a probability measure (thereby overloading notations). For $\mu \in \Pp(\RR^p)$, let
\begin{align*}
\bm{f} (\mu) = \int_{\RR^p} \bm\phi(x)\d\mu(x).
\end{align*}
A finite-width 2NN with weights $\theta=(x^1,\dots,x^n)$ is obtained as the particular case $\bm{f} (\hat \mu^n)$ where $\hat \mu^n = \frac1n \sum_{i=1}^n\delta_{x^i}$ is the empirical distribution of weights. This parameterization by a measure leads to an objective function $\Ll:\Pp(\RR^p)\to \RR$
$$
\Ll( \mu) = \frac12 \Vert \bm f(\mu)-\bm y\Vert_2^2.
$$
Before stating our main result, let us introduce some useful definitions and notations.
\begin{enumerate}
\item The mean potential $V:\Pp(\RR^p)\times \RR^p\to \RR$ is defined as $V[\mu](x) = \bm\phi(x)^\top (\bm{f}(\mu)-\bm{y})$. This function represents the Fréchet derivative of $\Ll$, seen as a functional over finite signed measures, in the sense that for any finite signed measure $\sigma$  on $\RR^p$, it holds as $\varepsilon \to 0$
$$
\Ll(\mu+\varepsilon \sigma) = \Ll(\mu)+\varepsilon \int V[\mu](x)\d\sigma(x) +o(\varepsilon).
$$
{In the following, we will frequently make use of the gradient of $V$ with respect to its second argument, that is
\[
\nabla V:  \Pp(\RR^p)\times \RR^p\to \RR^p \, , \quad \nabla V[\mu](x) = D \bm\phi(x)^\top (\bm{f}(\mu)-\bm{y}) .
\]
}
\item Given a probability measure $\mu \in \Pp(\RR^p)$ and a measurable map $T:\RR^p\to \RR^p$, the pushforward measure $T_\# \mu$ is defined as $T_\# \mu (B)= \mu(T^{-1}(B))$ for all Borel sets $B\subset \RR^p$. We also denote by $\nabla \cdot $ the divergence operator, which is the opposite of the adjoint of the gradient operator.
\item Given a Polish space $(\Xx,\dist)$ (such as $\RR^p$, see more details in Sec.~\ref{sec:preliminaries}), the $L^1$-Wasserstein distance on the space $\Pp_1(\Xx)$ of probability measures with finite first moment is defined as
$$
W_1(\mu,\nu) = \sup_{\psi\in \text{1-}\Lip(\Xx)} \int \psi \d (\mu-\nu) = \inf_{\pi\in \Pi(\mu,\nu)} \int  \dist(x,y)\d\pi(x,y)
$$
where the supremum runs over $\text{1-}\Lip(\Xx)$ the set of $1$-Lipschitz functions from $\Xx$ to $\RR$, and the infimum runs over the set of transport plans $\Pi(\mu,\nu) = \{ \pi\in \Pp(\Xx\times \Xx)\;;\; \int \pi(\cdot,\d y)= \mu \text{ and } \int \pi(\d x,\cdot) = \nu\}$.
The space $(\Pp_1(\Xx),W_1)$ is a Polish space~\cite[Chap.~6]{villani2008optimal}.
\end{enumerate}

\subsection{Main result} 
In this section, we describe the various limits for dropout-GD in the large-width limit $n\to \infty$. To this end, we consider two positive sequences of learning rates $(\tau_n)$ and keep rates $(q_n)$ indexed by the width $n$. Our main result deals with nondegenerate limits, which are obtained under the following conditions.

\begin{assumption}[Nondegenerate limits]\label{ass:nondegenerate}
Assume that the sequences $\tau_n$, $q_n$,  $\alpha_n = \frac{\tau_n}{q_n}$ and $\beta_n =\frac{1}{nq_n}$ admit \emph{finite} limits, denoted $\tau,q,\alpha$ and $\beta$ respectively. 
\end{assumption}

In addition to the sequences $(\tau_n)$ and $(q_n)$ already introduced, the two new sequences $(\alpha_n)$ and $(\beta_n)$ appearing in this definition are key to capture the asymptotic behavior of the dynamics:
\begin{itemize}
\item The scalar $\alpha_n$ represents the expected time between two updates of the same neuron, as measured in the natural time unit given by $t=k\tau_n$. Indeed, if $\eta^i_k>0$ and $k'=\min\{ \ell>k\;;\; \eta^i_\ell>0\}$ then $k'-k$ is a geometric random variable with expectation $1/q_n$, which multiplied by $\tau_n$ gives $\alpha_n$. The scalar $\alpha_n$ is also equal to the effective learning rate, that is, the scalar factor multiplying the gradient of active units in~\eqref{eq:GD-dropout-weights}. If $\alpha_n\to +\infty$ then the dynamics degenerates in two ways: (i) over a fixed time horizon $[0,T]$, only a vanishing fraction of the neurons are active and (ii) active neurons are subject to a diverging effective learning rate\footnote{The overall effect on the predictor of these two opposite trends -- fewer but larger updates -- depends in general on the growth of $\bm \phi$; but under our assumption that $\bm \phi$ is bounded, the limit dynamics when $\alpha_n\to \infty$ is stationary in predictor space because the effect of the updated units is overall negligible.}. We thus assume that $\alpha = \lim_n \alpha_n <+\infty$.

\item The scalar $\beta_n $ represents the inverse of the expected number of neurons which are updated at a given step. Indeed, this number follows a Bernoulli distribution with parameters $n$ and $q_n$, of expectation $nq_n$. When $\beta_n \to +\infty$, in the limit, almost all the GD steps are trivial with no active neuron\footnote{In this case, we can condition on having at least $1$ neuron active -- which means skipping trivial steps -- which leads, up to time rescaling, to an equivalent dynamics with $\beta\leq1$. Therefore, it would not  be restrictive to assume $\beta\leq 1$ (we only assume $\beta<+\infty$).}. We thus assume $\beta = \lim_n \beta_n<+\infty$.
\end{itemize}
With these key parameters in place, we are ready for our main result, which classifies all possible asymptotic behaviors of the empirical distribution of parameters $\hat \mu^n_k = \frac1n \sum_{i=1}^n \delta_{X^{i,n}_k}$ in the limit $n\to \infty$ (to the exception of the degenerate cases just mentioned). 

This theorem is a corollary of more precise statements given in Section~\ref{sec:proofs} that also describe the asymptotic dynamics of individual neurons. We refer to Figure~\ref{fig:phase-diagram} for a graphical illustration of the phase diagram.

\begin{figure}[t]
\centering
\begin{tikzpicture}[scale=1.55, thick, >=latex]

\pgfmathsetmacro{\h}{2*sqrt(3)}          
\coordinate (A) at ( 0,  \h);            
\coordinate (L) at (-2,  0);             
\coordinate (R) at ( 2,  0);             

\coordinate (B) at ($(L)!0.5!(R)$);      
\coordinate (BB) at ($(A)!0.5!(R)$);      
\coordinate (M) at ($(A)!0.5!(R)$);      
\coordinate (C) at ($(A)!2/3!(B)$);      

\begin{scope}
  \clip (A) -- (R) -- (L) -- cycle;

  \fill[red!25]   (A) -- (B) -- (R) -- cycle;
  \fill[red!25]   (L) -- (C) -- (B) -- cycle;

  \fill[gray!50]  (A) -- (L) -- (C) -- cycle;
\end{scope}

\draw (A) -- (R) -- (L) -- cycle;

\draw[blue,line width=1.5mm]   (A) -- (C);
\draw[ green!70!black, line width=1.5mm] (L) -- (C);

\draw[dashed] (C) -- (B);                
\draw[dashed] (C) -- (BB);    

\fill[orange]  (A) circle (.095);
\fill[purple]  (C) circle (.095);

\node[above=2pt]              at (A) {\footnotesize{$\log n$ dominates}};
\node[below left=2pt]         at (L) {\footnotesize{$\log \tau_n^{-1} $ dominates}};
\node[below right=2pt]        at (R) {\footnotesize{$\log{q_n^{-1}}$ dominates}};

\node[below=7pt] at ($(L)!0.25!(R)$)         {\footnotesize{$\alpha=0$}};
\node[below=7pt] at ($(L)!0.5!(R)$) {\footnotesize{$\alpha\in\mathbb R^{\!*}_{+}$}};
\node[below=7pt] at ($(L)!0.75!(R)$)         {\footnotesize{$\alpha=+\infty$}};

\node[above right=4pt]  at ($(A)!0.25!(R)$) {\footnotesize{$\beta=0$}};
\node[above right=0pt]  at ($(A)!0.5!(R)$) {\footnotesize{$\beta\in\mathbb R^{\!*}_{+}$}};
\node[above right=4pt]  at ($(A)!0.75!(R)$) {\footnotesize{$\beta=+\infty$}};


\draw[decorate, decoration={brace, amplitude=10pt}, thin]
      ($(A)!0.02!(R)$) -- ($(A)!0.48!(R)$);
\draw[decorate, decoration={brace, amplitude=10pt}, thin]
      ($(A)!0.52!(R)$) -- ($(A)!0.98!(R)$);
\draw[decorate, decoration={brace, amplitude=10pt}, thin]
      ($(R)!0.02!(L)$) -- ($(R)!0.48!(L)$);
\draw[decorate, decoration={brace, amplitude=10pt}, thin]
      ($(R)!0.52!(L)$) -- ($(R)!0.98!(L)$);

\end{tikzpicture}
\caption{Phase diagram of dropout in two-layer NN with mean-field scaling. For an HP scaling $(n,\tau_n^{-1},q_n^{-1})$, the limit of $(\log n, -\log \tau_n , -\log q_n)/S_n$ where $S_n= \log n -\log \tau_n  -\log q_n$ (when it exists) forms a point in the $2$-simplex, which is represented in the triangle in barycentric coordinates. For instance, the central red point represents proportional limits while the blue line corresponds to $\tau_n^{-1}$ and $q_n^{-1}$ diverging proportionally while $n$ diverges faster. {\color{red!50}Red area:} degenerate limits with either $\alpha=+\infty$ or $\beta=+\infty$. {\color{gray!70}Grey area:} Wasserstein gradient flow limit (the effect of dropout disappears in the limit). {\color{orange}Orange vertex:} discrete-time jump process limit (if $\tau,q>0$). {\color{blue}Blue line:} continuous-time jump process limit.  {\color{purple}Red vertex:} critical limit. {\color{green!70!black}Green line:} penalized Wasserstein gradient flow limit. }
\label{fig:phase-diagram}
\end{figure}
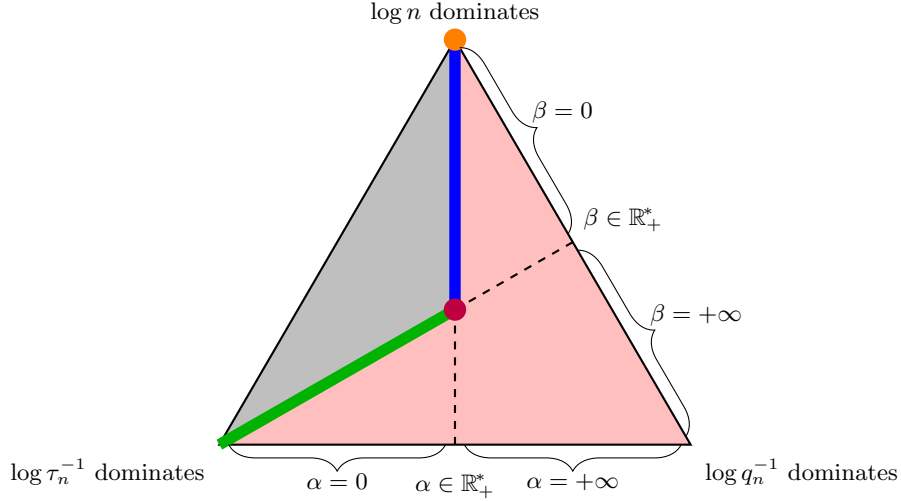

\begin{theorem}[Main theorem]\label{thm:main}
Let Assumption~\ref{ass:phi} and~\ref{ass:nondegenerate} hold. Consider the sequence of GD-dropout dynamics~\eqref{eq:GD-dropout-weights} with $\mu_0\in \Pp_1(\RR^p)$. Then {the limiting dynamics can be classified in four cases depending on the scaling of HP}:
\begin{enumerate}[label=(\Roman*)]
\item \label{case:discrete} \emph{(Discrete-time jump process limit)} If $\tau_n \to \tau>0$ (and hence $q,\alpha>0$ {and $\beta=0$}), then for all $k\in \NN$ it holds $\E [W_1(\hat \mu^n_k,\rho_k)]\to 0$, where the limit sequence $(\rho_k)_{k\in \NN}$ is defined recursively by $\rho_0=\mu_0$ and
\begin{align*}
 \rho_{k+1} = q \left(\Id-\alpha \nabla V[ \rho_k]\right)_\#  \rho_k + (1-q)\rho_k.
\end{align*}
\item \label{case:WGF} \emph{(Wasserstein Gradient flow limit (penalized if $\beta>0$))} If $\tau_n\to0$ and $\alpha = 0$, then for any $t\geq 0$, $\E [W_1(\hat \mu^n_{\lfloor t/\tau_n \rfloor}, \rho_t)]\to 0$ where $\rho \in \Cc(\RR_+; \Pp_1(\RR^p))$  is the unique weak solution to
\begin{align*}
\partial_t \rho_t = \nabla \cdot\big(\rho_t (\nabla V[\rho_t]+\beta \nabla P)\big),\qquad \rho_0=\mu_0
\end{align*}
where $P(x)=\frac12 \Vert \bm \phi(x)\Vert_2^2$ is the dropout penalty term.
\item \label{case:jump}\emph{(Continuous-time jump process limit)}  If $\tau_n \to0$, $\alpha > 0$ and $\beta=0$, then for any $t\geq 0$, $\E[W_1(\hat \mu^n_{\lfloor t/\tau_n \rfloor}, \rho_t)]\to 0$ where $\rho \in \Cc(\RR_+; \Pp_1(\RR^p))$  is the unique weak solution to
\begin{align}\label{eq:large-width-limit-3}
\partial_t \rho_t = \alpha^{-1}((\Id-\alpha \nabla V[\rho_t])_\# \rho_t -\rho_t),\qquad \rho_0=\mu_0.
\end{align}
\item \label{case:critical} \emph{(Critical limit)} Otherwise $\tau_n \to0$, $\alpha > 0$ and $\beta>0$. This case is discussed below.
\end{enumerate}
\end{theorem}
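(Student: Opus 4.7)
The plan is to prove the four cases separately, since each involves a different time-scaling and requires a different mathematical toolbox, but all rest on a common mean-field coupling template. For each particle index $i$, I would introduce an ``ideal'' particle $\bar X^i$ obeying the limit dynamics (driven by the same initial sample $\xi^i$ and the same dropout variables $\eta^i_k$ as the microscopic process \eqref{eq:GD-dropout-weights}), and then control $\E[\Vert X^{i,n}_k-\bar X^i_k\Vert]$ by propagating an inductive or Gronwall bound. Since the $(\bar X^i)_i$ are i.i.d.\ with time-marginal $\rho_t$ (resp.\ $\rho_k$) by construction, a pointwise particle-level bound transfers to a $W_1$-bound on the empirical measures via the standard $L^1$-Wasserstein empirical-measure concentration rate applied to the ideal ensemble.

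For case \ref{case:discrete}, where $\tau_n\to\tau>0$, the induction is purely discrete. The ideal particle jumps according to $\bar X^i_{k+1}=\bar X^i_k - \alpha\,\indicator\{\eta^i_{k+1}>0\}\,\nabla V[\rho_k](\bar X^i_k)$, reusing the microscopic dropout indicator. The per-step error splits into the gap between the effective learning rate $\tau_n/q_n$ and $\alpha$, the mean-field fluctuation $\bm f(\hat\mu^n_k)-\bm f(\rho_k)$ bounded by the inductive estimate plus an empirical-measure concentration term, and a Lipschitz propagation factor coming from the regularity of $D\bm\phi$ granted by Assumption~\ref{ass:phi}. Iterating over $k$ is then direct, and the time-marginal of $\bar X^i_k$ is readily checked to satisfy the stated recursion.

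For case \ref{case:WGF}, the key computation is the conditional expectation
\[
\E_\eta[\Delta X^i\mid X^i,\hat\mu^n]/\tau_n = -\nabla V[\hat\mu^n](X^i) - \beta_n(1-q_n)\nabla P(X^i),
\]
where the second term, which becomes the penalty drift in the limit, arises from the diagonal variance $\E[(1+\eta^i)^2]=1/q_n$ and is nondegenerate precisely because $\beta_n\to\beta<+\infty$ (note that $\beta>0$ forces $q_n\to 0$, so $\beta_n(1-q_n)\to\beta$). I would embed the discrete process in continuous time by piecewise-constant interpolation and invoke a mean-field stochastic approximation theorem: the drift term is controlled by the coupling template above, while the martingale component has $L^2$ norm of order $\sqrt{\tau_n/q_n}=\sqrt{\alpha_n}\to 0$ by summing independent conditional variances of size $O(\tau_n^2/q_n)$ over $O(1/\tau_n)$ steps. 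Combining the two and closing a Gronwall inequality along the limit PDE, using the Lipschitzness of $\nabla V$ and $\nabla P$, yields both convergence and uniqueness of $\rho$.

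For case \ref{case:jump}, the effective learning rate $\alpha>0$ persists but $q_n\to 0$, so each neuron develops Poisson-type jump statistics. The natural coupling embeds each binomial sequence of active steps $\{k\tau_n:\eta^i_k>0\}$ into a Poisson point process of rate $1/\alpha$, and the ideal particle jumps at the Poisson times by $-\alpha\nabla V[\rho_t]$, giving exactly \eqref{eq:large-width-limit-3}. The main obstacle, and the reason pathwise convergence remains open in this regime, is that small misalignments between the binomial jump times and their Poisson counterparts accumulate over the $\Theta(1/\tau_n)$ steps in any finite time window, preventing a uniform-in-$t$ closure of the coupling. I would therefore argue only at the level of one-time marginals, closing a Gronwall estimate on $\E[W_1(\hat\mu^n_{\lfloor t/\tau_n\rfloor},\rho_t)]$ using the fact that the generator of the limiting jump process is Lipschitz in its measure argument under Assumption~\ref{ass:phi}. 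Well-posedness of \eqref{eq:large-width-limit-3} is handled separately by a Picard fixed-point argument in $\Cc(\RR_+;\Pp_1(\RR^p))$ exploiting the same Lipschitz structure.
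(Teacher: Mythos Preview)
Your proposal is correct and mirrors the paper's approach closely: the paper also reduces each case to a separate proposition proved via a synchronous (cases~\ref{case:discrete} and~\ref{case:WGF}) or asynchronous (case~\ref{case:jump}) coupling with i.i.d.\ ideal particles driven by the same $\xi^i$ and $\eta^i_k$, with case~\ref{case:WGF} cast as an instance of a general mean-field stochastic approximation theorem and case~\ref{case:jump} handled only at the level of time-marginals via a geometric--exponential jump-time coupling plus a Picard argument for well-posedness. The only detail you gloss over is the forward-pass dropout noise term $\gamma^{i,n}_k=\frac1n\sum_j\eta^j_k D\bm\phi(X^{i,n}_{k-1})^\top\bm\phi(X^{j,n}_{k-1})$ in cases~\ref{case:discrete} and~\ref{case:jump}, but this is $O(\sqrt{\beta_n})\to 0$ and poses no difficulty.
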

\begin{proof}
The cases~\ref{case:discrete},~\ref{case:WGF} and~\ref{case:jump} are corollaries of Proposition~\ref{prop:discrete-limit}, Proposition~\ref{prop:flow-limit} and Proposition~\ref{prop:jump-limit} respectively. Also, one can check that the dichotomy exhausts all cases under Assumption~\ref{ass:nondegenerate} (this is clear from the phase diagram on Figure~\ref{fig:phase-diagram}).
\end{proof}

Regarding the case~\ref{case:critical} our ansatz is that the limit $(\rho_t)$ solves
\begin{align}\label{eq:ansatz}
\partial_t \rho_t = \alpha^{-1}\left(\int K[\rho_t](\cdot|x) \d\rho_t(x) -\rho_t\right),\qquad \rho_0=\mu_0
\end{align}
where the transition kernel $K[\rho]$ is defined as
$$
K[\rho](\cdot |x)=\mathrm{Law}(x-\alpha \nabla V[\widehat M(\rho)](x)-\alpha \beta \nabla P(x)).$$ 
Here $\widehat M:\Pp_1(\RR^p)\to\Pp_1(\RR^p)$ is the random map defined as $\widehat M(\rho) = \beta \sum_{i=1}^{N} \delta_{Z_i}$ where $N\sim \text{Poisson}(1/\beta)$ and $Z_1,\dots,Z_{N}$ are independent samples from $\rho$. In the definition of $K[\rho]$, the law is over the randomness of $\widehat M$, which is the only source of randomness in this expression. We formally justify this ansatz in Section~\ref{sec:intuition}, but we leave a rigorous analysis of this case as an open question. Note that this critical limit corresponds to learning rates of order $\Theta(1/n)$ and is therefore of  limited practical relevance. 

The proof of Theorem~\ref{thm:main} is deferred to Section~\ref{sec:proofs} and a discussion of the various limits is the object of Section~\ref{sec:discussions}. For now, we can make the following short remarks:
\begin{enumerate}
\item When proving Theorem~\ref{thm:main}, we in fact obtain in cases~\ref{case:discrete} and~\ref{case:WGF} the stronger notion of convergence at the level of particle's trajectories. In mathematical terms, for any $m\in \NN^*$ fixed, we obtain the convergence in distribution of the family of paths $(X^{1,n},\dots,X^{m,n})$ towards $(Y^1,\dots,Y^m)$ where $(Y^i)$ are independent realizations of a limit process {given in Section \ref{sec:proofs}}. 
\item Interestingly, in all nondegenerate limits,  $\bm f(\rho_t)$ evolves deterministically.
\item Our results are proved for the squared loss, but it would not be difficult to extend them to any smooth loss when $\beta=0$.  When $\beta>0$, the form of the limit would change because the dropout penalty does not have a simple closed form beyond the squared loss.
\end{enumerate}

\section{Qualitative properties of the various limits}\label{sec:discussions}

{In this section, we discuss some properties of the asymptotic processes presented previously.}

\subsection{The penalized Wasserstein gradient flow limit~\ref{case:WGF}{: equivalence with explicit penalization}}
The only two limits that exhibit the dropout penalty, that is when dropout noise remains asymptotically biased (see~\eqref{eq:penalization-def}), are cases~\ref{case:critical} and~\ref{case:WGF} with $\beta>0$. {These cases involve small learning rates $\tau_n=O(1/n)$ and keep rates $q_n=\Theta(1/n)$. Such learning rates are impractical in large-scale settings as they considerably slow down training.}

{If the penalty effect is indeed beneficial and desired, a more practical approach is to dispense with dropout altogether and instead add the deterministic penalty $P(\theta)$ directly to the objective. In the next proposition, we show that this procedure is asymptotically equivalent to dropout in regimes where the penalty emerges, focusing for simplicity on case~\ref{case:WGF}, that is, with $\tau_n=o(1/n)$. However, unlike implementing the penalty via standard dropout, this approach does not require vanishing learning rates and is therefore more widely applicable, as well as being simple to implement.}

\begin{proposition}[Explicit-implicit penalty equivalence]\label{prop:penalty-equivalent}
Let $\beta>0$. Consider an initialization $(\xi^1,\dots,\xi^n)$ as in~\eqref{eq:GD-dropout-weights} and let:
\begin{enumerate}[label=(\roman*)]
\item $(X_k^{i,n})$ the iterates of GD \emph{with} dropout on the loss $\Ll^\eta$ with $q_n=\frac1{\beta n}$ and $\tau_n = o(\frac{1}{n})$; 
\item $(\tilde X_k^{i,n})$ the iterates of GD \emph{without} dropout ($q=1$) on the \emph{penalized} loss  $\theta \mapsto \Ll(\theta)+\frac{\beta}{n}\sum_{i=1}^n P(X^{i,n})$ with the same $(\tau_n)$;
\end{enumerate}
Then we have pathwise convergence over bounded time horizons: for any $T>0$ and $m\in \NN^*$, it holds
$$
\E\Big[\sup_{k\leq T/\tau_n}\sup_{i\leq \min\{m,n\}} \Vert \tilde X_k^{i,n}- X_k^{i,n}\Vert_2\Big] \xrightarrow[n\to \infty]{} 0.
$$
In particular the respective distributional limits $(\rho_{t})$ and $(\tilde \rho_{t})$ exist and are equal $(\rho_{t})=(\tilde \rho_t)$.
\end{proposition}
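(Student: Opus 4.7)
The plan is to couple the two dynamics on a common probability space via the shared initialization $(\xi^i)_{i\geq 1}$, with the masks $(\eta_k)$ being the sole extra randomness of dynamics (i), and to prove by an $L^2$ Gronwall combined with a martingale maximal inequality that $\E[\sup_{k,i}\|\Delta_k^{i,n}\|_2]\to 0$, where $\Delta_k^{i,n}:=\tilde X_k^{i,n}-X_k^{i,n}$. First, I would perform a bias-variance decomposition of the dropout gradient. Using $\E[(1+\eta^i)(1+\eta^j)]=1/q_n$ when $i=j$ and $1$ otherwise, a direct calculation gives
\[
\E\!\left[n\nabla_{x^i}\Ll^{\eta_k}(\theta_k^n)\,\big|\,\mathcal{F}_k\right] = \nabla V[\hat\mu_k^n](X_k^{i,n}) + \tfrac{1-q_n}{n q_n}\nabla P(X_k^{i,n}),
\]
which under $q_n=1/(\beta n)$ equals $\nabla V[\hat\mu_k^n](X_k^{i,n}) + (\beta-1/n)\nabla P(X_k^{i,n})$. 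Subtracting the penalized drift $\nabla V[\hat{\tilde\mu}_k^n](\tilde X_k^{i,n}) + \beta \nabla P(\tilde X_k^{i,n})$ would yield a recursion
\[
\Delta_{k+1}^i = \Delta_k^i - \tau_n b_k^i + \tau_n M_k^i,
\]
where $M_k^i$ denotes the centered dropout noise and the Lipschitz regularity from Assumption~\ref{ass:phi} gives $\|b_k^i\|\leq C(\|\Delta_k^i\|_2 + W_1(\hat\mu_k^n,\hat{\tilde\mu}_k^n) + 1/n)$.

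Next, I would estimate the conditional variance of the noise. Splitting on whether $\eta_k^i$ is active and expanding the quadratic form using the boundedness of $\bm\phi$ and $D\bm\phi$, the bound $\E[\|M_k^i\|_2^2\,|\,\mathcal{F}_k] = O(1/q_n) = O(n)$ should follow. Summed with the $\tau_n^2$ prefactor over the $K=T/\tau_n$ steps, this contributes $O(T\tau_n n)$ to the cumulative martingale variance, which vanishes exactly because $\tau_n=o(1/n)$.

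The third step would use exchangeability: $\delta_k^2 := \E\|\Delta_k^i\|_2^2$ is independent of $i$, and Cauchy-Schwarz on $W_1(\hat\mu_k^n,\hat{\tilde\mu}_k^n)\leq \tfrac1n\sum_j\|\Delta_k^j\|_2$ gives $\E[W_1^2]\leq \delta_k^2$. Taking expectations in the squared recursion (the cross term with $M_k^i$ vanishes by the martingale property) should produce
\[
\delta_{k+1}^2 \leq (1+C'\tau_n)\delta_k^2 + C'\tau_n/n^2 + C'\tau_n^2 n,
\]
and discrete Gronwall over $k\leq T/\tau_n$ then yields $\delta_k^2 \leq e^{C'T}(T/n^2 + T\tau_n n) \to 0$. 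To upgrade this to a bound uniform in $k$, I would decompose $\Delta_k^i = -\tau_n \sum_{\ell<k} b_\ell^i + \tau_n N_k^i$ with $N_k^i := \sum_{\ell<k} M_\ell^i$ a square-integrable martingale, apply Doob's $L^2$ inequality to $N_k^i$ using the variance estimate above, and control the drift by Cauchy-Schwarz combined with the uniform-in-$\ell$ bound on $\delta_\ell^2$. This should give $\E\sup_{k\leq T/\tau_n}\|\Delta_k^i\|_2^2 \to 0$, and summing over the finite set $i\leq m$ concludes the pathwise claim. The equality of the distributional limits is then immediate: by Theorem~\ref{thm:main}~\ref{case:WGF}, $\hat\mu_{\lfloor t/\tau_n\rfloor}^n$ converges to $\rho_t$ in $W_1$-mean, and the $L^1$-vanishing of $\sup_k\|\Delta_k^i\|_2$ transfers this convergence to $\hat{\tilde\mu}_{\lfloor t/\tau_n\rfloor}^n$.

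The main obstacle is the variance estimate of the second step. The factor $(1+\eta_k^i)$ equals $1/q_n\sim n$ with probability $q_n\sim 1/n$, so bounds on moments of $(1+\eta_k^i)^2\|\bm f^{\eta_k}\|_2^2$ that fail to pair each large activation factor with its small probability of occurrence would blow up as a power of $n$. The tight $O(n)$ per-step bound \emph{exactly} matches the threshold $\tau_n=o(1/n)$, so the argument leaves essentially no slack in this respect. A secondary subtlety is the interplay between the empirical-measure difference and single-particle differences in the third step, which is resolved cleanly by exchangeability and Cauchy-Schwarz.
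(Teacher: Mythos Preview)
Your approach is correct but takes a different route than the paper. The paper's proof is essentially two lines: it observes that Proposition~\ref{prop:flow-limit} (and the underlying mean-field stochastic approximation result, Proposition~\ref{prop:MF-SA}) applies verbatim to both dynamics, so each of $(X^{i,n})$ and $(\tilde X^{i,n})$ converges pathwise to the \emph{same} limit process $(Y^i)$ determined by the penalized Wasserstein gradient flow, and the claim follows by the triangle inequality. Your argument instead compares the two finite-width dynamics directly, without passing through the infinite-width limit: you set up a Gronwall recursion for $\delta_k^2=\E\|\Delta_k^i\|_2^2$, use exchangeability to close the mean-field interaction term $W_1(\hat\mu_k^n,\hat{\tilde\mu}_k^n)$, and then upgrade to the uniform-in-$k$ bound via Doob's inequality on the martingale part plus an $L^1$ bound on the accumulated drift. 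This is more work but is self-contained and makes transparent that the vanishing of the noise hinges exactly on $\tau_n^2\cdot(T/\tau_n)\cdot O(1/q_n)=O(T\tau_n n)\to 0$, i.e., on the threshold $\tau_n=o(1/n)$. The paper's route is cleaner here because the per-step moment bounds on the dropout noise (your second step) have already been established in the proof of Proposition~\ref{prop:flow-limit}, so reproving them is unnecessary.
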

\begin{proof}
This follows from the fact that the  proof of Proposition~\ref{prop:flow-limit} applies to $X_k^{i,n}$ but also to $\tilde X_k^{i,n}$ (there are just fewer error terms in this case). So both dynamics converge pathwise to the limit process denoted $(Y^1,\dots,Y^m)$ in that statement.
\end{proof}

\subsection{The jump process limits~\ref{case:discrete} \&~\ref{case:jump}}
In this section, we discuss properties of the jump process limits. Since the continuous-time jump process limit~\ref{case:jump} is simply the small learning rate limit of the discrete-time jump process limit~\ref{case:discrete}, we show properties for whichever of these two limits is more convenient; keeping in mind that an analogous property could be studied in the other limit.

\subsubsection{Equivalence with the random metric (RaM) technique}
First, let us show that in these asymptotics, dropout can be equivalently replaced by another, simpler training technique which consists  in randomly removing units only \emph{after} the gradient has been computed. In the two-layer case, this gives the following dynamics  with $i\in [1:n]$ (compare with~\eqref{eq:GD-dropout-weights}),
 \begin{equation}\label{eq:GD-RAM-weights}
 \left\{
 \begin{aligned}
 \tilde X^{i,n}_0 &= \xi^i, \\
 \tilde X^{i,n}_{k} &= \tilde X^{i,n}_{k-1}  -\tau_n\cdot (1+\eta^i_k)D\bm{\phi}(\tilde X^{i,n}_{k-1})^\top \left(\frac1n \sum_{i=1}^n \bm{\phi}(\tilde X^{i,n}_{k-1})-\bm{y}\right),\; k\in \NN^*
 \end{aligned}
 \right.
 \end{equation}
 where $(\eta^i_k)$ is as in~\eqref{eq:GD-dropout-weights}.
 This corresponds to GD but with a random diagonal (degenerate) metric (i.e.~inverse preconditioner) with entries $(1+\eta^i_k)^{-1}$, which takes the two values $\alpha_n^{-1}$ and $+\infty$. {The updates are} equivalent to random block-coordinate descent on the loss, an optimization technique that is usually exploited to leverage the lower computational complexity of sparse updates. Since {the sparsity} is not the motivation here and since the idea can be generalized to metrics with more than two values ($\alpha_n^{-1}$ and $+\infty$), we prefer to refer to this as the \emph{random metric} (RaM) technique.
 
 Let us state the equivalence in the discrete-time case. This result is empirically illustrated on Figure~\ref{fig:dropout-ram}. 
 \begin{proposition}[Dropout-RaM equivalence] 
\label{prop:ram-equivalent}
Let $\tau,q>0$ and $\alpha=\tau/q$. Consider, with the same initialization $(\xi^1,\dots,\xi^n)$ as in~\eqref{eq:GD-dropout-weights}:
\begin{enumerate}[label=(\roman*)]
\item $(X_k^{i,n})$ the iterates of GD with dropout on the loss $\Ll^\eta$ with fixed dropout rate $q$ and learning rate $\tau$;
\item  $(\tilde X_k^{i,n})$ the RaM iterates as in~\eqref{eq:GD-RAM-weights} with $\tau_n=\tau$ and the same dropout masks $(\eta^i_k)$;
\end{enumerate}
Then we have pathwise convergence over bounded time horizons: for any $K>0$ and $m\in \NN^*$, it holds
$$
\E\Big[\sup_{k\leq K}\sup_{i\leq \min\{m,n\}} \Vert \tilde X_k^{i,n}- X_k^{i,n}\Vert_2\Big] \xrightarrow[n\to \infty]{} 0.
$$
In particular the respective distributional limits $(\rho_{k})$ and $(\tilde \rho_{k})$ exist and are equal $(\rho_{k})=(\tilde \rho_k)$.
 \end{proposition}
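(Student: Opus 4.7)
The plan is to couple the two dynamics with the \emph{same} initialization $(\xi^i)$ and the \emph{same} masks $(\eta^i_k)$, and to compare them pathwise via a discrete Gronwall inequality, using a law-of-large-numbers argument to control the forward-pass noise that is present in~\eqref{eq:GD-dropout-weights} but absent in~\eqref{eq:GD-RAM-weights}. Introducing $\delta^{i,n}_k := \tilde X^{i,n}_k - X^{i,n}_k$ (so $\delta^{i,n}_0 = 0$) and writing the dropout forward pass as $\bm f^{\eta_k}(\theta^n_{k-1}) = \bm f(\theta^n_{k-1}) + N_k$ with the centered noise
\[
N_k := \frac{1}{n}\sum_{j=1}^n \eta^j_k\, \bm\phi(X^{j,n}_{k-1}),
\]
subtracting~\eqref{eq:GD-dropout-weights} from~\eqref{eq:GD-RAM-weights} decomposes the recursion for $\delta^{i,n}_k$ into a \emph{drift} depending on the particle differences $\{\tilde X^{j,n}_{k-1} - X^{j,n}_{k-1}\}_{j \leq n}$ and a \emph{noise} contribution $\tau(1+\eta^i_k)\,D\bm\phi(X^{i,n}_{k-1})^\top N_k$ inherited from the extra dropout factor in the forward pass.

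By Assumption~\ref{ass:phi}, $\bm\phi$ and $D\bm\phi$ are bounded and Lipschitz; combined with the pointwise bound $\tau(1+\eta^i_k) \leq \tau/q = \alpha$, this yields the estimate
\[
\|\delta^{i,n}_k\| \;\leq\; \|\delta^{i,n}_{k-1}\| + \alpha\, C_1\, \max_{j\leq n}\|\delta^{j,n}_{k-1}\| + \alpha\, C_2\, \|N_k\|
\]
for constants $C_1, C_2$ depending only on the bounds and Lipschitz constants of $\bm\phi, D\bm\phi$ and on $\|\bm y\|$ (the drift splits into one piece from perturbing the argument of $D\bm\phi$ and another from perturbing the mean predictor $\bm f$, both bounded by $\max_j\|\delta^{j,n}_{k-1}\|$). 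Setting $\Delta_k := \max_{i\leq n}\|\delta^{i,n}_k\|$ and iterating gives $\Delta_k \leq \alpha C_2 (1+C_1\alpha)^{K} \sum_{k'=1}^K \|N_{k'}\|$ for $k \leq K$. Conditional on the past, the $(\eta^j_k)_j$ are i.i.d.\ centered with variance $(1-q)/q$, so
\[
\E\bigl[\|N_k\|^2\bigr] \;\leq\; \frac{(1-q)\sup\|\bm\phi\|^2}{q\,n},
\]
and thus $\E\|N_k\| = O(n^{-1/2})$. Summing over $k \leq K$ yields $\E[\sup_{k \leq K}\Delta_k] = O(K/\sqrt n) \to 0$, and since $\sup_{i \leq \min\{m,n\}}\|\delta^{i,n}_k\| \leq \Delta_k$, the pathwise claim follows.

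For the second assertion, Proposition~\ref{prop:discrete-limit} (which underlies case~\ref{case:discrete} of Theorem~\ref{thm:main}) gives the existence of the distributional limit $(\rho_k)$ for the dropout iterates. The pathwise bound above furnishes a coupling between the empirical measures that matches the $i$-th RaM particle to the $i$-th dropout particle, so $\E[W_1(\hat{\tilde \mu}^n_k, \hat\mu^n_k)] \leq \E[\Delta_k] \to 0$; hence $(\tilde\rho_k)$ exists and coincides with $(\rho_k)$. There is no deep obstacle: since $K$ and $m$ are fixed, the exponential-in-$K$ Gronwall constant is harmless, and the centered dropout noise $N_k$, which is the sole mathematical difference between the two schemes, decays as $n^{-1/2}$ by independence of the masks across neurons. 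The only book-keeping subtlety is that the drift coupling closes only when working with $\Delta_k = \max_i\|\delta^{i,n}_k\|$ rather than with a single particle, because the mean forward pass $\bm f$ couples all particles together.
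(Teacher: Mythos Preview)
Your argument is correct, and it takes a genuinely different route from the paper's own proof. The paper does not compare the two dynamics directly: instead it observes that the proof of Proposition~\ref{prop:discrete-limit} applies verbatim to the RaM iterates $\tilde X^{i,n}$ (with fewer error terms, since the forward-pass noise $\gamma^{i,n}_k$ is absent), so that \emph{both} $(X^{i,n})_{i\leq m}$ and $(\tilde X^{i,n})_{i\leq m}$ converge pathwise to the \emph{same} coupled limit $(Y^1,\dots,Y^m)$ defined in~\eqref{eq:def-discrete-dynamics}. The pathwise closeness of $X$ and $\tilde X$ then follows by the triangle inequality through $Y$.

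Your approach is more elementary and more quantitative: by coupling directly and closing a Gronwall inequality on $\Delta_k=\max_{i\leq n}\|\delta^{i,n}_k\|$, you avoid invoking the limit process altogether and obtain an explicit rate $O(K(1+\alpha C_1)^K n^{-1/2})$. The paper's route, on the other hand, is essentially free once Proposition~\ref{prop:discrete-limit} is available, and it immediately identifies the common distributional limit $(\rho_k)$ without the separate $W_1$ argument you add at the end. Both are valid; yours is self-contained, the paper's is more economical given its surrounding results.
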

 \begin{proof}
This follows from the fact that the proof of Proposition~\ref{prop:discrete-limit} applies to $X_k^{i,n}$ but also to $\tilde X_k^{i,n}$ (there are just fewer error terms in this case). So both dynamics converge pathwise to the limit process denoted $(Y^1,\dots,Y^m)$ in that statement.
 \end{proof}
 
\begin{figure}
\centering
\includegraphics[scale=0.48]{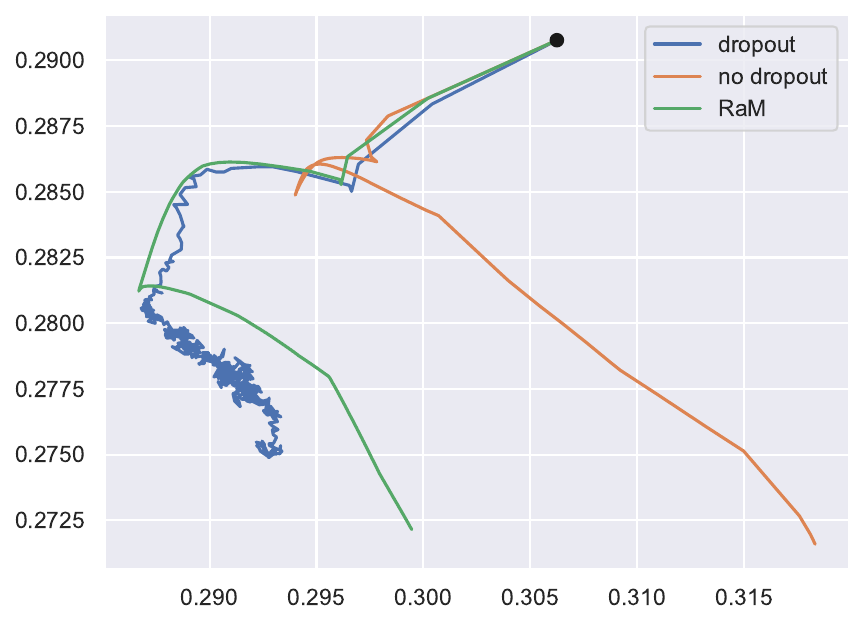}
\hspace{0.5cm}
\includegraphics[scale=0.48]{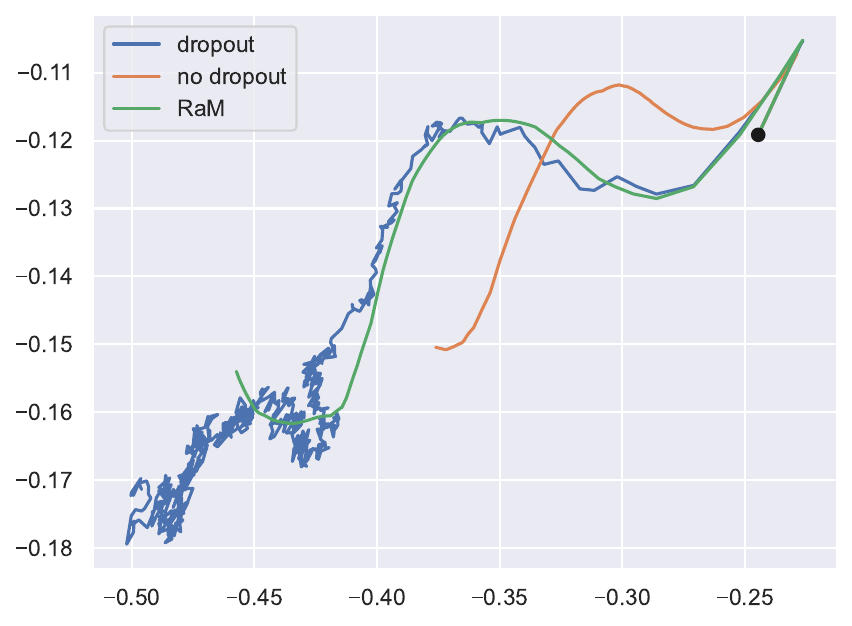}
\caption{Illustration of the pathwise convergence between random metric (RaM) and dropout dynamics (Proposition~\ref{prop:ram-equivalent}). We train a width-5000 two-layer NN on a synthetic teacher-student task with the quadratic loss, either with GD (orange), GD with dropout (blue) or GD with RaM (green), with coupled randomness (of initialization and masks $(\eta^i_k)$). The paths shown correspond to two-dimensional projections of the trajectory of two randomly chosen neurons. The pathwise similarity between dropout and RaM illustrates Proposition~\ref{prop:ram-equivalent} (the similarity degrades at large times because the width is finite). More details and additional plots are given in Appendix \ref{subsec:exp-details-trajs}. }\label{fig:dropout-ram}
\end{figure}
 
\subsubsection{A benefit of RaM: large feature updates beyond the sharpness bound}\label{sec:LLR}
RaM has an interesting consequence in the large learning rate regime: it makes it possible for certain neurons to take larger steps than what the sharpness of the loss would otherwise permit, as long as the average effective learning rate is small enough. To present this idea, let us study the evolution of the loss along one random-metric GD step in the infinite-width limit.
 
For the purpose of this discussion, instead of parameterizing the neural network by a measure $\mu \in \Pp(\RR^p)$, it is more convenient to  parameterize it by a random variable $X\in L^2(\Omega;\RR^p)$, in the spirit of Lions calculus~\citep[Section~6.1]{cardaliaguet2010notes} (here $\Omega$ is an abstract probability space). These two parameterizations lead to the same predictor if $\mathrm{Law}(X)=\mu$. In this parameterization, for $\RR^p$-valued $L^2$ random variables $X,Y_1,Y_2$, we have the following loss and $L^2$-derivatives~\citep[Section~6.1]{cardaliaguet2010notes}
 \begin{align*}
 \Ll(X) &= \frac12 \big\Vert \E [\bm \phi(X)]
-\bm y\big\Vert_2^2,\\
D\Ll(X)(Y_1) &= ( \E [\bm \phi(X)]
-\bm y)^\top \E\big[D\bm \phi(X) [Y_1] \big]\\
D^2\Ll (X)(Y_1,Y_2) &= \E\big[D\bm \phi(X) [Y_1]\big]^\top \E\big[D\bm \phi(X) [Y_2] \big] + ( \E [\bm \phi(X)]
-\bm y)^\top\E\big[D^2\bm \phi(X) [Y_1,Y_2] \big]
\end{align*}
From the expression of the first differential, we obtain that the gradient's expression is $\nabla \Ll(X)=D\bm \phi(X)^\top ( \E [\bm \phi(X)]-\bm y)$ (this is a bounded random variable under Assumption~\ref{ass:phi}). Now given $X$ and an independent nonnegative $L^2$ random variable $\eta$, the random geometry update is of the form $X_+=X-\eta \nabla \Ll(X)$. 
 
 In these notations, the one-step evolution of the loss is given, up to second-order terms, by
 $$
 \Ll(X_+) \approx \Ll(X)- D\Ll(X) (\eta \nabla \Ll(X)) +\frac12  D^2\Ll (X)(\eta \nabla \Ll(X),\eta \nabla \Ll(X)) 
 $$
 Ignoring higher order terms, the decrease of the objective is guaranteed provided $S(X)\leq 1$ with
 $$
S(X) \coloneqq \frac{D^2\Ll(\eta \nabla \Ll(X),\eta \nabla \Ll(X)) }{2 D\Ll(X) (\eta \nabla \Ll(X))} .
 $$

 Using the independence of $\eta$, the denominator is given by
 $$
 2D\Ll(X) (\eta \nabla \Ll(X))  =  2 \E[\eta] (\E[\bm\phi(X)]-\bm y)^\top \E[D\bm \phi(X)D\bm \phi(X)^\top] (\E[\bm\phi(X)]-\bm y) = A(X)\E[\eta]
 $$
 for some $A(X)\geq 0$.  The numerator is the sum of two terms $(I)+(II)$ with
 \begin{align*}
 (I) &= \frac12 \E\big[D\bm \phi(X) [\eta \nabla \Ll(X)]\big]^\top \E\big[D\bm \phi(X) [\eta \nabla \Ll(X)] \big]\\
 &=\frac12   \E[\eta]^2 \big\Vert\E\big[D\bm \phi(X)D\bm \phi(X)^\top (\E[\bm \phi(X)]-\bm y)\big]\big\Vert^2\\
 & = B(X) \E[\eta]^2
 \intertext{for some $B(X)\geq 0$ and}
 (II) &= \frac12 ( \E [\bm \phi(X)]
-\bm y)^\top\E\big[D^2\bm \phi(X) [\eta\nabla \Ll(X),\eta \nabla \Ll(X)] \big]\\
 &=\frac12   \E[\eta^2] (\E[\bm \phi(X)-\bm y])^\top \E\big[D^2\bm \phi(X)[D\bm \phi(X)^\top(\E[\bm \phi(X)]-\bm y), D\bm \phi(X)^\top(\E[\bm \phi(X)]-\bm y)]\big]\\
 & = C(X) \E[\eta^2]
 \end{align*}
for some $C(X)\in \RR$. It follows that
  $$
S(X)= \frac{B(X)\E[\eta]^2 + C(X)\E[\eta^2]}{A(X) \E[\eta]}
$$
 If $\eta=\eta_0$ is deterministic, then the descent condition $S(X)\leq 1$ is $\eta_0\leq \eta_{\max}\coloneqq \frac{A(X)}{B(X)+C(X)}$ which is a local version of the usual smoothness-based descent lemma in optimization.
 
But when $\eta$ is random, the factor in front of the two terms of the Hessian is different.  Moreover, it is a pervasive empirical observation in the context of neural networks~\cite[e.g.][]{wei2020implicit} that it typically holds $B(X)\gg |C(X)|$. This means that provided one chooses $\E[\eta]$ a bit smaller than $\eta_{\max}$, there is ample room to increase the variance of $\eta$---and therefore allowing randomly chosen neurons to take large steps, beyond the sharpness threshold---while maintaining the descent property. We note that this discussion is valid as soon as $\E[\eta]>0$ and $\E[\eta^2]<+\infty$, in particular the nonnegativity of $\eta$ is not essential.

\subsubsection{Another asymptotic property: dynamical NTK averaging}\label{sec:NTK-time}
Let us mention another interesting property of dropout in the jump process limits: the associated Neural Tangent Kernel (NTK) is smoothed in time via an exponential moving average. Let us present this phenomenon, which is an intuitive consequence of the fact that only a subset of the neurons are updated at each step, for the continuous-time dynamics (case~\ref{case:jump}) where its expression is the cleanest.

As shown in Proposition~\ref{prop:existence-uniqueness-alt}, the limit dynamics $(\rho_t)$ in case~\ref{case:jump} can be characterized as the unique solution to the following integral equation
\begin{align}\label{eq:sequentia-limit-integral-form}
\rho_t = \frac{1}{\alpha}\int_{0}^t e^{(s-t)/\alpha}T_{\alpha}(\rho_s)\d s +e^{-t/\alpha}\mu_0
\end{align}
where $T_{\alpha}(\mu) = (\Id-\alpha\nabla V[\mu])_\# \mu$ is the measure obtained from $\mu$ by taking a single GD step with learning rate $\alpha$  on the loss $\Ll$.

The expression has a counterpart in terms of the NTK. Indeed, the tangent kernel\footnote{With a slight abuse of notation, we identify the NTK with its Gram matrix on the training set.} of a 2NN is a linear function of the measure $\mu \in \Pp(\RR^p)$ that parameterizes it:
$$
\text{NTK}(\mu) = \int D \bm{\phi}(w) D\bm{\phi}(w)^\top \d\mu(w).
$$
After one gradient step with learning rate $\alpha$, the updated tangent kernel is $\text{NTK}(T_{\alpha}(\mu))$. Hence, by~\eqref{eq:sequentia-limit-integral-form}, we see that the tangent kernel at time $t$ writes {as the exponential moving average}
$$
\text{NTK}(\rho_t) = \frac{1}{\alpha} \int_0^t e^{(s-t)/\alpha} \text{NTK}(T_{\alpha}(\rho_s))\d s + e^{-t/\alpha} \text{NTK}(\rho_0).
$$

\subsection{Informal justification of the ansatz for the critical limit~\ref{case:critical}}\label{sec:intuition}
In this section, we justify \emph{informally} the ansatz given in~\eqref{eq:ansatz} for the limit in the critical case $\tau_n\to 0$, $\alpha,\beta>0$. 

At each timestep, there is a probability $q_n$ that a neuron moves and $1-q_n$ that it stays still. When $n$ is large, the distribution of neurons at iteration $k+1$ is therefore of the form
 $
 \hat \mu^{n}_{k+1} = q_n \bar \mu^n_{k+1}+(1-q_n)\hat \mu^{n}_{k}
 $ 
 where $\bar \mu^n_{k+1}$ is the distribution of the new position of the active neurons. Reorganizing this update as $ \frac{1}{\tau_n}(\hat \mu^{n}_{k+1}-\hat \mu^{n}_{k})=\frac{q_n}{\tau_n} (\bar \mu^n_{k+1}-\mu^{n}_{k})$, this suggests a limit equation of the form
 $$
 \partial_t \rho_t = \frac{1}{\alpha} (\bar \rho_t-\rho_t)
 $$
where $\bar \rho_t$ is the distribution of the new positions of the neurons that have moved at time $t$ (this is the same form as the jump process limit~\eqref{eq:large-width-limit-3}). 

To derive the expression for $\bar \rho_t$, let us take a closer look at the update equation~\eqref{eq:GD-dropout-weights} for a single neuron. Conditioning on the neuron $i$ moving at iteration $k$ (that is, conditioning on $(1+\eta^i_{k+1})=1/q_n$), its update can be decomposed as
 \begin{align*}
  X^{i,n}_{k+1} - X^{i,n}_{k} &=   -\tau_n \cdot (1+\eta^i_{k+1})D\bm{\phi}(X^{i,n}_{k})^\top \left(\frac1n \sum_{j=1}^n (1+\eta^j_{k+1}) \bm{\phi}(X^{j,n}_{k})-\bm{y}\right)\\
  &=\underbrace{-\frac{\tau_n}{q_n} D\bm{\phi}(X^{i,n}_{k})^\top \left(\frac1n \sum_{j\neq i} (1+\eta^j_{k+1}) \bm{\phi}(X^{j,n}_{k})-\bm{y}\right)}_{(I)} \underbrace{- \frac{\tau_n}{nq^2_n} D\bm{\phi}(X^{i,n}_{k})^\top\bm{\phi}(X^{i,n}_{k})}_{(II)}
 \end{align*}
 
 The second term $(II)$ is simple: as $n\to \infty$, it converges to $-\alpha\beta \nabla P(X^{i,n}_k)$. The first term $(I)$ is more subtle because the number of terms in the sum follows a $\text{Binom}(n-1,q_n)$ distribution, whose expectation $(n-1)q_n \to 1/\beta$  does not diverge, so there is no limit theorem taking place to simplify this sum. Asymptotically, the sum behaves as drawing $N$ according to a Poisson distribution with parameter $1/\beta$ (this is the limit of $\text{Binom}(n-1,q_n)$ distributions) and drawing $N$ independent samples $Z_1,\dots, Z_N$ according to $\rho_t$  to form the sum $1/(nq_n)\sum_{j=1}^N {\bm \phi}(Z_j)$. Hence the first term as a whole $(I)$ is, in the limit, sampled via the formula $-\alpha \nabla V[\widehat M(\rho_t)](X_k^{i,n})$ where $\widehat M$ is the operator defined below~\eqref{eq:ansatz}.

Finally, the update of all the neurons that move at iteration $k$ are certainly not independent; however, in the limit, since $\tau_n \to 0$ there are infinitely many such moves over an infinitesimal time interval $[t,t+\delta t]$, and they are independent in time (considering that $\rho_t$ changes negligibly over this time interval). This leads to an averaging-over-time effect: the neurons located near $x$ and that move during this time interval are distributed, at time $t+\delta t$, according to
$$
K[\rho_t](\cdot|x) = \text{Law}(x-\alpha \nabla V[\widehat M(\rho_t)](x)-\alpha\beta \nabla P(x) )
$$
which is deterministic. This leads to the expression $\bar \rho_t = \int K[\rho_t](\cdot|x)\d\rho_t (x)$ and concludes the justification of ansatz~\eqref{eq:ansatz}.

\subsection{A synthesis: the three effects of dropout and experimental illustration}\label{sec:numerics}
Our theory allows identifying three distinct effects of dropout, with each limit in the phase diagram characterized by the combination of one or several of these effects. {This can be seen by suitably decomposing the update equation~\eqref{eq:GD-dropout-weights}, which we recall is
$$
 X^{i,n}_{k+1} = X^{i,n}_{k}  -\tau_n\cdot (1+\eta^i_{k+1})D\bm{\phi}(X^{i,n}_{k})^\top \left(\frac1n \sum_{j=1}^n (1+\eta^j_{k+1}) \bm{\phi}(X^{j,n}_{k})-\bm{y}\right).
$$
Introducing an independent copy $\tilde{\eta}^{j}_{k+1}$ of $\eta^{j}_{k+1}$, we get 
\begin{align*}
X^{i,n}_{k+1} &= \underbrace{X^{i,n}_k - \tau_n D\bm \phi(X^{i,n}_k)^\top(\bm f(\hat \mu^n_k)-\bm y)}_{\text{update without dropout}} - \underbrace{\frac{\tau_n(1 + \eta_{k+1}^i)}{n} \sum_{j=1}^n \tilde{\eta}^{j}_{k+1} D\bm\phi(X^{i,n}_k)^\top \bm\phi(X^{j,n}_k)}_{\text{PN}} \\
 &-\underbrace{\tau_n \eta^{i}_{k+1} D\bm \phi(X^{i,n}_k)^\top (\bm f(\hat \mu^{n}_k)-\bm y)}_{\text{RaM}} + \underbrace{\tau_n \frac{\eta^i_{k+1} \tilde{\eta}^{i}_{k+1} -(\eta^i_{k+1})^2}{n} D\bm \phi(X^{i,n}_k)^\top \bm \phi(X^{i,n}_k)}_{\text{Penalty}}.
\end{align*}
Let us now give an interpretation of each of the terms:
}
\begin{enumerate}
\item \textbf{Propagation noise {(PN) - active in case~\ref{case:critical}}.}  This is the perturbation of the signal in the forward pass due to the random thinning and rescaling {of the forward pass}. It manifests itself in the randomness of the predictor. In our context, it is a centered perturbation of the update of units of variance ${\Theta(\frac{\tau^2}{n^2}\E[(1+\eta)^2]\E[(\sum_{i} \eta^i)^2])}=\Theta(\tau \alpha\beta)$. 
\item \textbf{Random metric (RaM) { - active in cases~\ref{case:discrete},~\ref{case:jump}, and~\ref{case:critical}}.} This is, given a full gradient, the effect of its random thinning and rescaling. It is due to the mask applied in the backward pass. It leads to a centered perturbation of the update of all units of variance $\Theta(\tau^2\E[\eta^2])=\Theta(\tau\alpha)$.
\item \textbf{Dropout penalty {- active in cases~\ref{case:WGF} and~\ref{case:critical}}.} This is the bias of the dropout gradient, which is precisely the effect of the non-independence between the masks used in the forward and the backward pass. {In the limit, the randomness of this term vanishes, and} it leads to a deterministic perturbation of the update of all units of average magnitude $\Theta(\tau \beta)$ {(see proof of Proposition~\ref{prop:flow-limit} for details)}.
\end{enumerate}
{This decomposition} can be used to intuit the phase diagram of Figure~\ref{fig:phase-diagram}, by separating the cases according to whether $\alpha$ only is positive ({case~\ref{case:discrete} with nonvanishing learning rate and  case~\ref{case:jump} with vanishing learning rate}) or $\beta$ only is positive (case~\ref{case:WGF} with $\beta>0$) or both (case~\ref{case:critical}) or neither (case~\ref{case:WGF} with $\beta=0$).

{To give experimental insight on the effect of each of these terms,} we plot in Figure~\ref{fig:risk} the training logs for a {width 4000} two-layer ReLU NN trained on binary classification on two classes of MNIST with the logistic loss (see details in Appendix \ref{subsec:exp-details-mnist}). The  training configurations illustrate various combinations of the three effects presented above:  
\begin{itemize}
\item  {``PN''} refers to dropout applied in the forward pass only, and shows the isolated effect of propagation noise.
\item  {``RaM''} refers to dropout applied in the backward pass only, and shows the isolated effect of RaM.
\item  {``PN + RaM''} refers to dropout applied in both the forward and the backward passes, but with independent masks. This shows the combined effect of propagation noise and RaM. The difference  {with} ``dropout'' is precisely the effect of the dropout penalty.
\item  {``dropout'' corresponds to standard dropout, which combines the effect of PN, RaM, and dropout penalty, while ``no dropout'' refers to vanilla SGD.}
\end{itemize}
We observe that RaM almost achieves the performance of dropout. Adding dropout in the forward pass does not improve, or even degrades, performance, unless one uses the same masks in the forward and backward passes---that is standard dropout. In the early phase of training, the dropout, RaM and PN+RaM curves match, which suggests that the speed-up observed for dropout in this early phase is entirely due to RaM.

\begin{figure}
\centering
\begin{subfigure}{0.5\linewidth}{}
\centering
\includegraphics[scale=0.6]{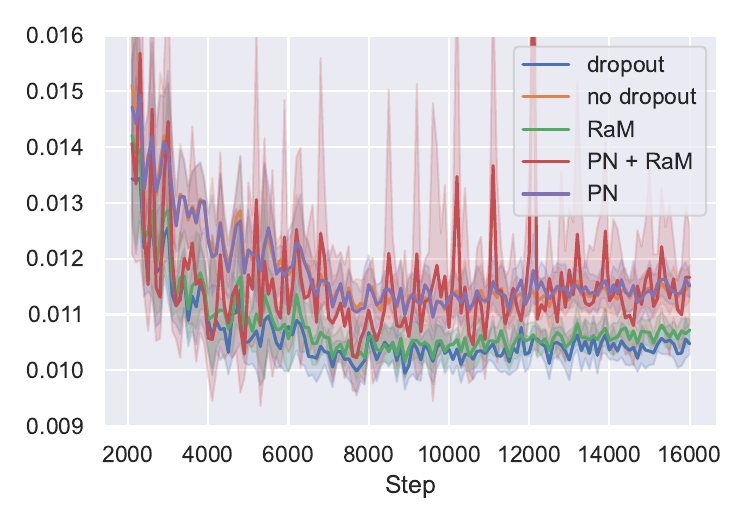}
\caption{Test loss history ($q=0.6$)}
\end{subfigure}%
\begin{subfigure}{0.5\linewidth}{}
\centering
\includegraphics[scale=0.6]{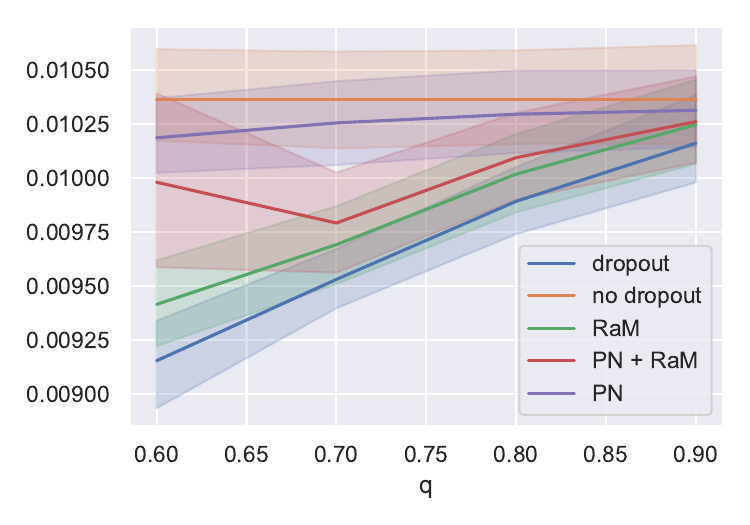}
\caption{Best test loss}
\label{fig:best-test-loss-mnist}
\end{subfigure}%
\caption{Comparison of variants of dropout for the training of two-layer NNs on two classes of MNIST with logistic loss {for several variants of dropout (see details in text), as a function of training steps (left) and of keep probability (right)}. More details are given in Appendix~\ref{subsec:exp-details-mnist}.}\label{fig:risk}
\end{figure}

\section{Proof of Theorem~\ref{thm:main}}\label{sec:proofs}

 Let us begin with some preliminary results which will be useful in all limits.

\subsection{Preliminaries}\label{sec:preliminaries}
The following consequence of Assumption~\ref{ass:phi} will be used repeatedly in the proofs, often without explicit reference. 
\begin{lemma}\label{lem:MF-Lip}
Under Assumption~\ref{ass:phi}, there exists $C>0$ such that, for any $\mu,\mu'\in \Pp_1(\RR^p)$ and $x,x'\in \RR^p$ it holds
\begin{align}\label{eq:field-regularity}
\Vert \nabla V[\mu](x) -\nabla V[\mu'](x')\Vert_2 &\leq C(\Vert x-x'\Vert_2+W_1(\mu,\mu')),
\end{align}
and $\Vert \nabla V[\mu](x)\Vert_2<C$, $\Vert \nabla P(x)\Vert_2\leq C$, $\Ll(\mu)<C$.
\end{lemma}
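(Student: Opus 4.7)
The plan is to prove the three boundedness claims directly from the boundedness of $\bm\phi$ and $D\bm\phi$ given in Assumption~\ref{ass:phi}, and then to prove the joint Lipschitz estimate by a standard ``add–subtract'' decomposition, splitting the variation in $x$ from the variation in $\mu$.

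First I would fix constants $M, L, L'$ such that $\|\bm\phi(x)\|_2\le M$, $\|D\bm\phi(x)\|_{\mathrm{op}}\le L$ and $D\bm\phi$ is $L'$-Lipschitz; these exist by Assumption~\ref{ass:phi}. The bounds $\|\nabla V[\mu](x)\|_2\le L(M+\|\bm y\|_2)$, $\|\nabla P(x)\|_2=\|D\bm\phi(x)^\top \bm\phi(x)\|_2\le LM$, and $\Ll(\mu)\le \tfrac12(M+\|\bm y\|_2)^2$ all follow from Cauchy--Schwarz and the triangle inequality applied to $\bm f(\mu)-\bm y=\int \bm\phi\,\mathrm d\mu-\bm y$.

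For the joint Lipschitz estimate, I would write
\[
\nabla V[\mu](x)-\nabla V[\mu'](x')
= \bigl(D\bm\phi(x)-D\bm\phi(x')\bigr)^\top\!(\bm f(\mu)-\bm y)
+ D\bm\phi(x')^\top\bigl(\bm f(\mu)-\bm f(\mu')\bigr).
\]
The first term is bounded by $L'(M+\|\bm y\|_2)\,\|x-x'\|_2$ using that $D\bm\phi$ is $L'$-Lipschitz. For the second term, the key step is to bound $\|\bm f(\mu)-\bm f(\mu')\|_2$ by $W_1(\mu,\mu')$: since $\|D\bm\phi\|_{\mathrm{op}}\le L$, each coordinate $\phi(\cdot,z_j)$ is $L$-Lipschitz, so by the Kantorovich--Rubinstein dual characterization of $W_1$ used in the definition of $W_1$ above, $|f(\mu,z_j)-f(\mu',z_j)|\le L\,W_1(\mu,\mu')$, hence $\|\bm f(\mu)-\bm f(\mu')\|_2\le \sqrt m\, L\, W_1(\mu,\mu')$. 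Combined with $\|D\bm\phi(x')\|_{\mathrm{op}}\le L$, this yields a bound of $\sqrt m\, L^2\, W_1(\mu,\mu')$ on the second term. Taking $C$ to be the maximum of all constants obtained, the conclusion follows.

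No step here is really an obstacle; the only subtle point is the reduction of $\|\bm f(\mu)-\bm f(\mu')\|_2$ to $W_1(\mu,\mu')$, which relies on the fact that $\bm\phi$, having a bounded differential, is globally Lipschitz, so that the Kantorovich--Rubinstein duality can be invoked componentwise.
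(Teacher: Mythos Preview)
Your proposal is correct and follows essentially the same route as the paper: both prove the boundedness claims directly from the bounds on $\bm\phi$ and $D\bm\phi$, and both handle~\eqref{eq:field-regularity} via the same add--subtract decomposition, bounding the $x$-variation through the Lipschitz property of $D\bm\phi$ and the $\mu$-variation through the Lipschitz property of $\bm\phi$ together with the Kantorovich--Rubinstein duality. The only cosmetic difference is that the paper bounds $\|\bm f(\mu)-\bm f(\mu')\|_2$ directly by $\Lip(\bm\phi)\,W_1(\mu,\mu')$ (using the vector-valued Lipschitz constant), whereas you go componentwise and pick up an explicit $\sqrt m$; both are fine.
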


\begin{proof}
We recall that Assumption~\ref{ass:phi} states that $\bm \phi$ and $D\bm \phi$ are uniformly bounded. The last three bounds directly follow from the expressions $\bm f(\mu)=\int \bm \phi\d\mu$, $\nabla V[\mu](x)=D\bm \phi(x)^\top (\bm f(\mu)-\bm y)$, $\nabla P(x)=D\bm \phi(x)^\top\bm \phi(x)$ and $\Ll(\mu) = \frac12 \Vert \bm f(\mu)-\bm y\Vert_2^2$. To prove~\eqref{eq:field-regularity}, we proceed as follows:
\begin{align*}
\Vert \nabla V[\mu](x) - \nabla V[\mu'](x')\Vert_2 &= \Vert D \bm \phi(x)^\top (\bm f(\mu)-\bm y) - D \bm \phi(x')^\top (\bm f(\mu')-\bm y) \Vert_2 \\
&\leq \Vert D \bm \phi(x)^\top (\bm f(\mu)-\bm y) - D \bm \phi(x')^\top (\bm f(\mu)-\bm y) \Vert_2\\
&\qquad +\Vert D \bm \phi(x')^\top (\bm f(\mu)-\bm y) - D \bm \phi(x')^\top (\bm f(\mu')-\bm y) \Vert_2\\
&\leq \Vert D \bm \phi(x)-D \bm \phi(x')\Vert_{2\to 2}\cdot \Vert \bm f(\mu)-\bm y\Vert_2 \\
&\qquad+ \Vert D\bm \phi(x')\Vert_{2\to 2} \Vert \bm f(\mu)-\bm f(\mu')\Vert_2\\
&\leq \Lip(D\bm \phi)\cdot \sqrt{2\Ll(\mu)}\cdot \Vert x-x'\Vert_2+\Lip(\bm \phi)\Vert \bm f(\mu)-\bm f(\mu')\Vert_2\\
&\leq \Lip(D\bm \phi)\cdot (\Vert \bm \phi\Vert_\infty+\Vert \bm y\Vert_2)\cdot \Vert x-x'\Vert_2+\Lip(\bm \phi)^2W_1(\mu,\mu').\quad \qedhere
\end{align*}
\end{proof}

We note that the term $W_1(\mu,\mu')$ gives in general pessimistic estimates when it comes to quantitative convergence estimates to the mean-field limit, but this is not our goal in this paper so we adopt this bound for convenience. See~\cite{mei2019mean} or~\cite{chizat2025hidden} for the techniques to obtain tight error bounds.

We also recall a $W_1$ version of the law of large numbers for empirical distributions.
\begin{lemma}\label{lem:varadarajan}
Let $(\Xx,\dist)$ be a Polish space (that is, a separable metric space that admits a complete metric equivalent to $\dist$), let $\mu \in \Pp_1(\Xx)$, let $X^1,X^2,\dots$ be independent samples from $\mu$, and let $\hat \mu^n = \frac{1}{n}\sum_{i=1}^n \delta_{X^i}$ be the corresponding sequence of empirical measures. Then it holds $W_1(\mu,\hat \mu^n)\xrightarrow[]{a.s.} 0$ (hence, in particular $\E[W_1(\mu,\hat \mu^n)]\to 0$).
\end{lemma}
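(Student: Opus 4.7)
The plan is to combine the strong law of large numbers (SLLN) with the standard characterization of $W_1$-convergence on Polish spaces, and then upgrade the resulting a.s.~convergence to convergence in expectation via uniform integrability.

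First, I would invoke the classical characterization \cite[Thm.~6.9]{villani2008optimal}: for $\nu_n,\nu \in \Pp_1(\Xx)$, one has $W_1(\nu_n,\nu)\to 0$ if and only if $\nu_n\weakto \nu$ weakly and $\int \dist(x,x_0)\d\nu_n(x)\to \int \dist(x,x_0)\d\nu(x)$ for some (and hence any) $x_0\in \Xx$. The weak convergence $\hat\mu^n\weakto \mu$ almost surely is Varadarajan's theorem: for each fixed bounded continuous $f$, the SLLN gives $\int f\d\hat\mu^n = \frac1n\sum_{i=1}^n f(X^i)\to \int f\d\mu$ a.s.; separability of $\Xx$ allows one to pick a \emph{countable} convergence-determining family (e.g.~truncated Lipschitz functions centered at a countable dense subset of $\Xx$), so intersecting countably many a.s.~events yields weak convergence on a single a.s.~event. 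For the first-moment condition, since $\mu\in \Pp_1(\Xx)$ the random variable $\dist(X^1,x_0)$ is integrable, and the SLLN gives
\[
\int \dist(x,x_0)\d\hat\mu^n(x) = \frac1n\sum_{i=1}^n \dist(X^i,x_0)\xrightarrow[n\to\infty]{\text{a.s.}} \int \dist(x,x_0)\d\mu(x).
\]
These two facts together give $W_1(\hat\mu^n,\mu)\to 0$ almost surely.

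Next, to upgrade to convergence in expectation, I would use the Kantorovich-Rubinstein dual form of $W_1$: testing against $1$-Lipschitz $\psi$, one may assume without loss of generality $\psi(x_0)=0$ (shifting $\psi$ by a constant does not change $\int \psi \d(\hat\mu^n-\mu)$), so that $|\psi(x)|\leq \dist(x,x_0)$. This yields the crude but uniform bound
\[
W_1(\hat\mu^n,\mu)\;\leq\; \int \dist(x,x_0)\d\mu(x) + \frac1n\sum_{i=1}^n \dist(X^i,x_0).
\]
The right-hand side converges a.s.~and in $L^1$ (by the $L^1$ form of the SLLN for integrable i.i.d.~variables) to $2\int \dist(x,x_0)\d\mu(x)$, hence forms a uniformly integrable sequence. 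Since $W_1(\hat\mu^n,\mu)$ is dominated by this sequence and vanishes a.s., Vitali's convergence theorem gives $\E[W_1(\hat\mu^n,\mu)]\to 0$.

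The argument is essentially mechanical once the characterization of $W_1$-convergence is imported from \cite{villani2008optimal}; the only moderately delicate point is the passage from a.s.~to $L^1$ convergence, and the simple $L^1$-domination above takes care of it, so I do not anticipate a serious obstacle.
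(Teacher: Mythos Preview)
Your proposal is correct and follows essentially the same approach as the paper: both combine Varadarajan's a.s.~weak convergence with the SLLN for $\dist(\cdot,x_0)$ and then invoke \cite[Thm.~6.9]{villani2008optimal} to obtain $W_1(\hat\mu^n,\mu)\to 0$ a.s. The paper simply cites Varadarajan's theorem rather than sketching its proof, and it leaves the passage from a.s.~to $L^1$ convergence implicit, whereas you supply a clean uniform-integrability argument via the domination $W_1(\hat\mu^n,\mu)\leq \int\dist(x,x_0)\d\mu+\frac1n\sum_i\dist(X^i,x_0)$; this extra care is welcome and fills a small gap the paper glosses over.
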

\begin{proof}
We provide a proof of this well-known fact for completeness. According to the law of large numbers for empirical measures~\citep{varadarajan1958convergence} $\hat \mu^n$ converges weakly (in duality with continuous and bounded functions) to $\mu$, almost surely. Moreover, posing $x_0\in \Xx$ and $\psi(x)= {\dist}(x_0,x)$, we have $\int \psi(x)\d\hat \mu^n = \frac1n \sum_{i=1}^n \psi(X^i)$ which is a sum of i.i.d.~random variables with finite first moment because $\mu\in \Pp_1(\Xx)$. So, by the strong law of large numbers, $\int  {\dist}(x_0,x)\d\hat \mu^n(x)$ converges to $\int {\dist}(x_0,x)\d \mu(x)$ almost surely. It follows, by~\citet[Theorem~6.9]{villani2008optimal} that $W_1(\hat \mu^n,\mu)\xrightarrow[n\to \infty]{}0$ almost surely.
\end{proof}

We will consider the following Polish spaces:
\begin{itemize}
\item The usual Euclidean space $((\RR^p)^K,\Vert \cdot \Vert_2)$;
\item The space of continuous paths $(\Cc([0,t];\RR^p),d_t)$ where $d_t(x,x')=\sup_{s\leq t} \Vert x(s)-x'(s)\Vert_2$;
\item The Skorokhod space $(\Dd([0,t],\RR^p), d'_t)$ of càdlàg functions (right-continuous functions with left limits at all points) where $d'_t$ is the Skorokhod metric~\citep[Chap.~3]{billingsley2013convergence}: 
$$
d_t'(x,x') = \inf_{\lambda\in \Lambda} \sup_{s\in [0,t]} \vert \lambda(s)-s\vert + \sup_{s\in [0,t]} \Vert x(\lambda(s))-x'(s)\Vert_2
$$
where $\Lambda$ is the set of strictly increasing continuous maps from $[0,t]$ to $[0,t]$.
\end{itemize}
The topologies above are those that we associate to these spaces throughout. 

\subsection{Case~\ref{case:discrete} : the discrete-time jump process limit}
In this section we assume that $n\to \infty$ and that $ q_n \to q\in ]0,1]$ and $\tau_n \to \tau>0$. This is the simplest limit to study, in particular because it remains discrete-in-time in the limit.

\paragraph{Limit dynamics.} Consider the (discrete-time) stochastic process $(Y_k)_{k\in \NN}$ defined recursively by $Y_0\sim \mu_0$ and 
\begin{align}   \label{eq:def-discrete-dynamics}
Y_{k+1} = \begin{cases}
Y_k &\text{with probability $1-q$}\\
Y_k - \alpha \nabla V[\rho_k](Y_k)&\text{with probability $q$}\\
\end{cases},&& \rho_k = \mathrm{Law}(Y_k).
\end{align}
By construction, it holds 
\begin{align*}
\rho_0=\mu_0, &&\rho_{k+1} =q(\Id-\alpha \nabla V[\rho_k])_\#\rho_k + (1-q)\rho_k, \; \forall k\geq 1.
\end{align*}
Then the discrete-time jump process limit in the main theorem (Theorem~\ref{thm:main}\ref{case:discrete}) is a consequence of the following, more precise statement:
\begin{proposition}\label{prop:discrete-limit}
Let Assumption~\ref{ass:phi} hold and assume that $ q_n \to q\in ]0,1]$, that $\tau_n \to \tau>0$. For any $k\in \NN$,  it holds $\E[W_1(\hat \mu^n_k,\rho_k)]\xrightarrow[n\to \infty]{}0$. Moreover, for any fixed $m\in \NN^*$ and time horizon $K\in \NN^*$, it holds
$$
(X^{1,n},\dots,X^{m,n}) \xrightarrow[n\to \infty]{\text{distr.}} (Y^1,\dots,Y^m)
$$
where $(Y^1,\dots,Y^m)$ are independent copies of the $(\RR^p)^{K+1}$-valued random variable $Y$.
\end{proposition}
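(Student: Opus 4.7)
The strategy is the classical synchronous coupling used in propagation-of-chaos arguments, adapted to the extra randomness coming from the dropout masks. The plan is to build, on the same probability space as the finite-width system, an i.i.d.~family $(Y^i)_{i\geq 1}$ of limit processes and control $X^{i,n}-Y^i$ by induction on the time step $k$.

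First, I would enrich the probability space with i.i.d.\ uniforms $(U^i_k)_{i\geq 1,\, k\geq 1}$, independent of the initial samples $\xi^i$. Using them, rewrite the finite-width mask as $(1+\eta^i_k)=\frac{1}{q_n}\mathbf 1_{U^i_k\le q_n}$ (which has the correct distribution). Define the coupled limit paths $(Y^i_k)$ by $Y^i_0=\xi^i$ and
\[
Y^i_{k+1} = Y^i_k - \alpha\,\mathbf 1_{U^i_{k+1}\le q}\,\nabla V[\rho_k](Y^i_k),
\]
where $\rho_k=\mathrm{Law}(Y^i_k)$ satisfies the recursion in the statement; by construction the $Y^i$ are i.i.d.\ and $\hat\nu^n_k\coloneqq\frac1n\sum_i\delta_{Y^i_k}$ are empirical measures of i.i.d.\ samples from $\rho_k$, so $\E[W_1(\hat\nu^n_k,\rho_k)]\to 0$ by Lemma~\ref{lem:varadarajan}.

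Second, for each fixed $k$, I would decompose the one-step increment difference $\Delta^{i,n}_{k+1}\coloneqq(X^{i,n}_{k+1}-Y^i_{k+1})-(X^{i,n}_k-Y^i_k)$ as the sum of four pieces:
\begin{enumerate}[label=(\alph*)]
\item a Lipschitz/mean-field term $-\alpha\,\mathbf 1_{U^i_{k+1}\le q}\bigl(\nabla V[\hat\mu^n_k](X^{i,n}_k)-\nabla V[\rho_k](Y^i_k)\bigr)$, bounded by Lemma~\ref{lem:MF-Lip} by $\alpha C(\|X^{i,n}_k-Y^i_k\|_2+W_1(\hat\mu^n_k,\rho_k))$;
\item a threshold-mismatch term supported on $\{U^i_{k+1}\in(q_n\wedge q,\,q_n\vee q]\}$ of probability $|q_n-q|\to 0$, with integrand uniformly bounded via Lemma~\ref{lem:MF-Lip};
\item a learning-rate mismatch term of size $|\alpha_n-\alpha|\cdot\|\nabla V[\hat\mu^n_k](X^{i,n}_k)\|_2\to 0$;
\item a forward-pass noise term $-\alpha_n\mathbf 1_{U^i_{k+1}\le q_n}D\bm\phi(X^{i,n}_k)^\top\bigl(\bm f^{\eta_{k+1}}(\theta^n_k)-\bm f(\hat\mu^n_k)\bigr)$.
\end{enumerate}
The crucial piece is (d): conditionally on $\theta^n_k$ and $\eta^i_{k+1}$, the bracketed quantity is a sum $\frac1n\sum_{j\ne i}\eta^j_{k+1}\bm\phi(X^{j,n}_k)$ of independent centered terms of variance $\Theta(1/(nq_n))=\Theta(\beta_n)=o(1)$, plus a deterministic $O(1/n)$ bias from the $i$-th term. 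Boundedness of $\bm\phi$ and $D\bm\phi$ then yields $\E\|\text{(d)}\|_2=O(1/\sqrt n)+O(1/n)\to 0$.

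Setting $e^n_k\coloneqq\E\|X^{1,n}_k-Y^1_k\|_2$ (independent of $i$ by exchangeability) and summing these bounds gives $e^n_{k+1}\le(1+\alpha C)e^n_k+\alpha C\,\E[W_1(\hat\mu^n_k,\rho_k)]+\epsilon_n$ with $\epsilon_n\to 0$. The triangle inequality
\[
W_1(\hat\mu^n_k,\rho_k)\le \tfrac1n\sum_{i=1}^n\|X^{i,n}_k-Y^i_k\|_2+W_1(\hat\nu^n_k,\rho_k)
\]
gives $\E[W_1(\hat\mu^n_k,\rho_k)]\le e^n_k+\E[W_1(\hat\nu^n_k,\rho_k)]$. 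Since $e^n_0=0$ and the last term vanishes, a finite induction on $k\in\{0,\dots,K\}$ yields $e^n_k\to 0$ and hence $\E[W_1(\hat\mu^n_k,\rho_k)]\to 0$ for every $k$, which is the first claim.

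For the path-space convergence, fix $m,K\in\NN^*$. By exchangeability and the above, $\E\|X^{i,n}_k-Y^i_k\|_2\to 0$ for every $i\le m$ and $k\le K$, so $(X^{1,n},\dots,X^{m,n})\to(Y^1,\dots,Y^m)$ in $L^1$ on $(\RR^p)^{(K+1)m}$, and a fortiori in distribution; since the $Y^i$ are i.i.d.\ copies of the limit process by construction, the second claim follows. The main technical obstacle is the sharp control of the forward-pass noise (d) with a dependence on $X^{i,n}_k$ through $D\bm\phi$, but this is handled by the uniform bounds of Lemma~\ref{lem:MF-Lip} combined with the $1/(nq_n)=\beta_n\to 0$ scaling; no martingale machinery is needed since the time horizon is finite and discrete.
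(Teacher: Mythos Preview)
Your proposal is correct and follows essentially the same synchronous-coupling/Gr\"onwall strategy as the paper: the paper couples the limit particles through the same dropout masks, isolates the forward-pass noise as an error term $\gamma^{i,n}_k$ with $\E\Vert\gamma^{i,n}_k\Vert_2=O(\sqrt{\beta_n})$, and closes a discrete Gr\"onwall loop on $h^n_k=\frac1n\sum_i\Vert X^{i,n}_k-Y^i_k\Vert_2$. Your version is slightly more explicit in that you handle non-constant $(q_n,\tau_n)$ directly via the uniform-random-variable coupling (the paper only treats the constant case and remarks the general one is analogous), and you exploit exchangeability to track $e^n_k=\E\Vert X^{1,n}_k-Y^1_k\Vert_2$ instead of the empirical average, but these are cosmetic differences rather than a genuinely different route.
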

\begin{proof}
Let us consider constant sequences $\tau_n=\tau$ and $q_n=q$, as the general case can be proved along the same lines.

\paragraph{Preparation.} The update equation~\eqref{eq:GD-dropout-weights} can be decomposed, for $k\in \NN$, as
 \begin{align*}
  X^{i,n}_{k+1} &= X^{i,n}_{k}  -\tau \cdot (1+\eta^i_{k+1})D\bm{\phi}(X^{i,n}_{k})^\top \left(\frac1n \sum_{j=1}^n (1+\eta^j_{k+1}) \bm{\phi}(X^{j,n}_{k})-\bm{y}\right)\\
  &=X^{i,n}_{k}  -\tau \cdot (1+\eta^i_{k+1}) \left(\nabla V[\hat \mu^n_k](X^{i,n}_{k}) +\gamma^{i,n}_{k+1}\right)
 \end{align*}
 where we have gathered errors terms in $\gamma^{i,n}_k$ defined as
\begin{align*}
\gamma^{i,n}_k 
&= {\frac1n \sum_{j=1}^n \eta^{j}_k D\bm \phi(X^{i,n}_{k-1})^\top \bm \phi(X^{j,n}_{k-1})}.
\end{align*}
We have under Assumption~\ref{ass:phi}
\begin{align}\label{eq:proof-i-gamma}
\E \Vert\gamma^{i,n}_k\Vert_2\leq \sqrt{\E \Vert\gamma^{i,n}_k\Vert_2^2} =\sqrt{O\Big({\frac{n\E [\eta^2]}{n^2}}\Big)} = O(\sqrt{\beta_n})\xrightarrow[n\to \infty]{} 0.
\end{align}

\paragraph{Coupling argument.}
Consider $n$ independent samples from $Y$, labelled $(Y^1,\dots,Y^n)$ which we couple with  $(X^{1,n},\dots, X^{n,n})$ via the initialization and the random dropout masks as follows: $Y^i_0=X^{i,n}_0=\xi^i $ and
$$
Y^{i}_{k+1} =Y^{i}_{k} -\tau (1+\eta^i_{k+1}) \nabla V[\rho_k](Y^{i}_{k}),
$$
{where we recall that $\rho_k$ is defined in \eqref{eq:def-discrete-dynamics}.}
Observe that the definition of $Y^{i}_{k+1}$ involves the same dropout masks as that of $X^{i,n}_{k+1}$. Let us also define $\hat \rho^n_\ell=\frac1n \sum_{i=1}^n \delta_{Y^i_\ell}$.

By construction, it holds $\Vert X^{i,n}_{0}-Y^i_{0}\Vert_2=0$ and for $k\geq 1$, using Lemma~\ref{lem:MF-Lip} and the fact that $|1+\eta|\leq \frac1q$,
\begin{align}
\Vert X^{i,n}_{k+1}-Y^i_{k+1}\Vert_2 &\leq \Vert X^{i,n}_{k}-Y^i_{k}\Vert_2+ \frac{\tau}{q} \Vert \nabla V[\hat \mu^n_{k}](X^{i,n}_{k}) -  \nabla V[\rho_{k}](Y^{i}_{k})\Vert_2+  \frac{\tau}{q}\Vert \gamma_{k+1}^{i,n}\Vert_2 \nonumber \\
&\leq (1+C\alpha)\Vert X^{i,n}_{k}-Y^i_{k}\Vert_2 + C\alpha W_1(\hat \mu^n_{k}, \rho_{k})+ \alpha \Vert \gamma_{k+1}^{i,n}\Vert_2\label{eq:proof-(i)-I}\\
&\leq \sum_{\ell=0}^{k} (1+\alpha C)^{k-\ell} (\alpha C W_1(\hat \mu^n_{\ell}, \rho_{\ell})+\alpha \Vert \gamma_{\ell+1}^{i,n}\Vert_2)\label{eq:proof-(i)-II}.
\end{align}
Denoting $h^n_k = \frac1n \sum_{i=1}^n \Vert X^{i,n}_{k}-Y^i_{k}\Vert_2$, it holds $h^n_0=0$, and we deduce from~\eqref{eq:proof-(i)-I} and from the inequality $W_1(\hat \mu^n_k,\hat \rho^n_k)\leq h^n_k$ that 
\begin{align*}
h^n_{k+1} &\leq (1+C\alpha) h_k^n + C\alpha W_1(\hat \mu^n_k, \hat \rho^n_k)+ C\alpha W_1(\hat \rho^n_k, \rho_k)+ \frac{\alpha}{n}\sum_{i=1}^n \Vert \gamma_{k+1}^{i,n}\Vert_2\\
&\leq (1+2C\alpha) h_k^n + E^n_k
\end{align*}
where $E^n_k:=C\alpha W_1(\hat \rho^n_k, \rho_k)+ \frac{\alpha}{n}\sum_{i=1}^n \Vert \gamma_{k+1}^{i,n}\Vert_2$. Applying this inequality recursively (a discrete-time Gr\"onwall argument), we get
$$
h_k^n \leq\sum_{\ell=0}^{k{-1}} (1+2C\alpha)^{k{-1}-\ell} E^n_\ell \leq (1+2C\alpha)^{k} \sum_{\ell=0}^{k{-1}} E^n_\ell.
$$
Since $\E [W_1(\hat \rho_\ell^n,\rho_\ell)]\to 0, \forall \ell\in \NN$, (Lemma~\ref{lem:varadarajan}), and $\E [\Vert \gamma_{k+1}^{i,n}\Vert_2] \to 0$ (by~\eqref{eq:proof-i-gamma}), we have $\E [E^n_\ell]\to 0$. We deduce that 
\begin{align*}
\E[h^n_k]\leq (1+2C\alpha)^{k} \sum_{\ell=0}^{k{-1}} \E[E_\ell^n]\xrightarrow[n\to \infty]{} 0, \; \forall k\in \NN.
\end{align*}
 It follows
$$
\E[W_1(\hat \mu^n_k,\rho_k)]\leq \E[W_1(\hat \mu^n_k,\hat \rho_k^n)]+\E[W_1(\hat \rho_k^n,\rho_k)] \leq \E[h^n_k]+\E[W_1(\hat \rho_k^n,\rho_k)] \xrightarrow[]{}0
$$
which is the first claim.  For the second claim, we have by~\eqref{eq:proof-(i)-II} that for $m\in \NN^*$,
\begin{align*}
\E\Big[\max_{k\in [1:K]}\max_{i\in [1: m]} \Vert X^{i,n}_k-Y^i_k\Vert_2\Big] \leq \sum_{\ell=0}^{{K-1}} (1+\alpha C)^{{K-1}-\ell} \E\Big[\alpha C W_1(\hat \mu^n_{\ell}, \rho_{\ell}) &+ \alpha  \max_{i\in [1: m]} \Vert \gamma_{\ell+1}^{i,n}\Vert_2\Big] \\
&\xrightarrow[n\to \infty]{} 0.    
\end{align*}
It follows that $(X^{1,n},\dots,X^{m,n})$ converges in $L^1$ to $(Y^1,\dots,Y^m)$. This implies the second claim.
\end{proof}

\subsection{Case~\ref{case:WGF} : the Wasserstein gradient flow limit}
In this section we assume that $n\to \infty$, that $\tau_n\to 0$ and that $ \alpha_n=\frac{\tau_n}{q_n} \to 0$. Because $\alpha_n$, which represents the average time period between two updates of a neuron -- vanishes in the limit, we get a flow-like dynamics in the limit, with a continuous trajectory for each neuron.

\subsubsection{Precise statement and proof}

\paragraph{Limit dynamics.} Consider the continuous-time stochastic process $Y$ that satisfies $Y_0\sim \mu_0$ and 
\begin{align*}
\d Y_t = -\nabla V[\rho_t](Y_t)\d t -\beta \nabla P(Y_t) \d t,\qquad \rho_t = \mathrm{Law}(Y_t)
\end{align*}
where we recall that $P(x)=\frac12 \Vert \bm \phi(x)\Vert_2^2$ (hence $\nabla P(x) = D\bm \phi(x)^\top \bm \phi(x)$).
This is a MacKean-Vlasov process of a simple type because it is noiseless and the only source of randomness is the initialization. Under Assumption~\ref{ass:phi}, the existence and uniqueness of strong solutions is guaranteed (see for instance~\citep{bauer2019existence} which deals with much weaker spatial regularity). Also we have for any time horizon $T$, $\mathrm{Law}(Y) \in \Pp_1(\Cc([0,T];\RR^p))$. It is moreover not difficult to show that $\rho = (\rho_t)_t$ belongs to $\Cc(\RR_+;\Pp_1(\RR^p))$ and that it is the unique solution in the weak sense\footnote{By ``weak solution'', we mean that $t\mapsto \rho_t$ is absolutely continuous for $W_1$ and that for any $\psi\in \Cc^1(\RR^p)$ and a.e.~$t\in \RR_+^*$, it holds $\frac{\d}{\d t} \int \psi \d\rho_t = -\int \nabla \psi \cdot \nabla [V[\rho_t] +\beta P]\d\rho_t$, see e.g.~\cite[Sec.~4.1.2]{santambrogio2015optimal}.} of
\begin{align*}
\partial_t \rho_t = \nabla \cdot (\rho_t \nabla [V[\rho_t] +\beta P]),\qquad \rho_0=\mu_0.
\end{align*}
If $\mu_0\in \Pp_2(\RR^p)$ (that is, if $\mu_0$ has finite second moments), then this dynamics can be interpreted as the \emph{Wasserstein ($W_2$) gradient flow}~\cite[Chap.~8]{santambrogio2015optimal} of the penalized risk
$$
\mu\mapsto \frac12 \Vert \bm f(\mu)-\bm y\Vert_2^2 +\beta \int P(x)\d\mu(x).
$$ 

\paragraph{Interpolated dynamics.} Our statement will also involve the following continuous-time process. We denote by $(\bar X^{i,n})$ the piecewise affine interpolation of the discrete-time dynamics, defined by
$$
\bar X_t^{i,n} = X_k^{i,n} + \frac{t-k\tau_n}{\tau_n} (X_{k+1}^{i,n}-X_k^{i,n}), \quad \text{for $t\in [k\tau_n, (k+1)\tau_n[$}.
$$
The gradient flow limit of the main theorem (Theorem~\ref{thm:main}\ref{case:WGF}) is then a consequence of the following, more precise statement:
\begin{proposition}\label{prop:flow-limit}
Assume that $\tau_n\to 0$, that $ \alpha_n =\frac{\tau_n}{q_n}\to 0$, and Assumption~\ref{ass:phi}. Then for any $t\geq 0$, it holds $\E[W_1(\hat \mu^n_{\lfloor t/\tau{_n}\rfloor},\rho_t)]\xrightarrow[n\to \infty]{}0$. Moreover, for any fixed $m\in \NN^*$ and time horizon $T>0$, it holds (in $\Pp(\Cc([0,T];\RR^p)^m)$)
$$
(\bar X^{1,n},\dots, \bar X^{m,n}) \xrightarrow[n\to \infty]{\text{distr.}} (Y^1,\dots,Y^m)
$$
where $(Y^1,\dots,Y^m)$ are independent copies of $Y$.
\end{proposition}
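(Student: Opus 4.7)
The plan is to couple the discrete dynamics $(X^{i,n}_k)$ with independent realizations $(Y^i_t)$ of the McKean--Vlasov limit process, all driven by the same initial sample $\xi^i$, and to propagate small deviations via a discrete Gr\"onwall argument combined with martingale bounds. The first step is to rewrite each update in \eqref{eq:GD-dropout-weights} in semimartingale form. Conditioning on $\Ff_k:=\sigma(\eta_1,\dots,\eta_k)$ and using $\E[\eta^j]=0$, $\E[(1+\eta^i)^2]=1/q_n$, together with the observation that only the self-term $j=i$ survives in $\E[(1+\eta^i_{k+1})\eta^j_{k+1}]$, one obtains
\begin{align*}
\E\bigl[X^{i,n}_{k+1}-X^{i,n}_k\mid \Ff_k\bigr] = -\tau_n \nabla V[\hat \mu^n_k](X^{i,n}_k) - \tau_n\beta_n(1-q_n)\nabla P(X^{i,n}_k).
\end{align*}
Since $\beta_n(1-q_n)\to \beta$ under Assumption~\ref{ass:nondegenerate}, this identifies the limit drift $-(\nabla V[\rho_t]+\beta\nabla P)$ and defines a martingale increment $M^{i,n}_{k+1}$ that carries both the random-metric and the propagation-noise fluctuations.

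I would then run the coupling: set $Y^i_0=\xi^i$, drive $Y^i$ by the limiting McKean--Vlasov ODE with law $\rho_t$, and define the skeleton $\bar Y^{i,n}_k:=Y^i_{k\tau_n}$. Subtracting one Euler step of this ODE from the update for $X^{i,n}$, applying Lemma~\ref{lem:MF-Lip} to Lipschitz-bound drift differences in $(x,\mu)$, using $W_1(\hat\mu^n_k,\hat\rho^n_k)\leq \frac{1}{n}\sum_i \Vert X^{i,n}_k-\bar Y^{i,n}_k\Vert_2$ and invoking Lemma~\ref{lem:varadarajan} to get $\E[W_1(\hat\rho^n_k,\rho_{k\tau_n})]\to 0$, one obtains for $h^n_k:=\frac{1}{n}\sum_i \Vert X^{i,n}_k-\bar Y^{i,n}_k\Vert_2$ a recursion
\begin{align*}
\E[h^n_{k+1}]\leq (1+C\tau_n)\E[h^n_k]+\tau_n\E[E^n_k]+\tfrac{1}{n}\sum_i\E\Vert M^{i,n}_{k+1}\Vert_2,
\end{align*}
where $E^n_k$ collects the deterministic mismatches (the coefficient error $|\beta_n(1-q_n)-\beta|$, the Euler discretization of $Y^i$, and $W_1(\hat\rho^n_k,\rho_{k\tau_n})$), each vanishing in expectation uniformly in $k\leq T/\tau_n$. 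Iterating (discrete Gr\"onwall) and controlling the martingale sum $\sum_\ell M^{i,n}_\ell$ by Doob's maximal inequality gives $\sup_{k\leq T/\tau_n}\E[h^n_k]\to 0$, whence the first claim $\E[W_1(\hat\mu^n_{\lfloor t/\tau_n\rfloor},\rho_t)]\to 0$. For the pathwise statement, the same recursion applied to $\max_{i\leq m}\Vert X^{i,n}_k-\bar Y^{i,n}_k\Vert_2$ yields uniform control at the mesh; since the jump size $\Vert X^{i,n}_{k+1}-X^{i,n}_k\Vert_2=O(\alpha_n)+\Vert M^{i,n}_{k+1}\Vert_2$ vanishes and $Y^i$ is Lipschitz in $t$, the piecewise-affine interpolations converge in $\Cc([0,T];\RR^p)^m$.

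The technical heart is the variance bookkeeping for $M^{i,n}_{k+1}$: a crude pointwise bound $\Vert M^{i,n}_{k+1}\Vert_2=O(\alpha_n)$ would blow up as $\alpha_n\sqrt{T/\tau_n}$ when $\tau_n\ll \alpha_n^2$, so one must exploit conditional orthogonality carefully. Splitting the inner sum $\frac{1}{n}\sum_j \eta^j_{k+1}D\bm\phi(X^{i,n}_k)^\top\bm\phi(X^{j,n}_k)$ into the $j=i$ self-interaction (whose bias is absorbed in the $\nabla P$ term) and the centered sum over $j\neq i$ with conditional second moment $O(\beta_n)$, and then carrying the outer $(1+\eta^i_{k+1})$ prefactor through with $\E[(1+\eta^i)^2]=1/q_n$, one arrives at the sharp per-step estimate
\begin{align*}
\E\bigl[\Vert M^{i,n}_{k+1}\Vert_2^2\bigm|\Ff_k\bigr]\leq C\tau_n\alpha_n(1+\beta_n).
\end{align*}
Accumulated over $T/\tau_n$ independent-in-time steps, this gives a cumulative martingale fluctuation of order $\sqrt{T\alpha_n(1+\beta_n)}\to 0$, which is precisely where the assumption $\alpha_n\to 0$ (combined with $\beta_n$ bounded) enters. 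This clean separation between the random-metric contribution ($\tau_n\alpha_n$) and the propagation-noise contribution ($\tau_n\alpha_n\beta_n$) is the main subtle point; it reflects the fact, mentioned in the introduction, that the two limits do not commute under a naive $L^2$ treatment and must be disentangled via the semimartingale decomposition above.
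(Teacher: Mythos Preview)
Your approach is essentially the same as the paper's: a coupling of $(X^{i,n})$ with independent copies of the McKean--Vlasov process via shared initial conditions, a Gr\"onwall argument on the path error, and a Doob bound on the cumulative martingale fluctuations, with the key per-step variance estimate $\E\Vert M^{i,n}_{k+1}\Vert_2^2 = O(\tau_n\alpha_n(1+\beta_n)) = o(\tau_n)$. The paper organizes this as a corollary of a general mean-field stochastic approximation result (its Proposition~\ref{prop:MF-SA}), working in continuous time with the piecewise-affine interpolation so that the martingale part separates out as $\sup_{s\leq t}\Vert M^{i,n}_s\Vert_2$ before Gr\"onwall is applied, but the substance is identical and your four-term variance decomposition matches the paper's (I)--(IV).

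One point to clean up: the recursion you display, with $\tfrac{1}{n}\sum_i \E\Vert M^{i,n}_{k+1}\Vert_2$ already inside, cannot be iterated and then controlled by Doob --- iterating yields $\sum_{\ell\leq T/\tau_n}\E\Vert M^{i,n}_\ell\Vert_2 = O(\sqrt{\alpha_n/\tau_n}) = O(q_n^{-1/2})$, which diverges when $q_n\to 0$, exactly the blow-up you warn about. To make your later sentence ``controlling the martingale sum $\sum_\ell M^{i,n}_\ell$ by Doob'' actually available, you must keep the recursion for the \emph{un-averaged} error $e^{i,n}_k$, sum it telescopically so that the cumulative martingale $\sum_{\ell\leq k} M^{i,n}_\ell$ appears as a single vector before taking norms, and only then invoke Gr\"onwall and Doob. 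That is precisely how the paper's continuous-time formulation (integrating $\tfrac{\d}{\d t}e^{i,n}_t$ and bounding the martingale piece by its running supremum) sidesteps the issue.
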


\begin{proof}
In order to better highlight the structure of the argument, we obtain this result as a corollary of a general and stronger \emph{mean-field stochastic approximation} result, Proposition~\ref{prop:MF-SA}, stated and proved the next subsection. Let us here show that the assumptions of that proposition are satisfied, that is, that we can indeed decompose the update as in~\eqref{eq:discrete-ii}, with $\gamma^{i,n}_k$ a Martingale difference sequence (MDS) satisfying $\max_{k\leq T/\tau_n, i\leq n} \EE \Vert \gamma^{i,n}_k\Vert_2^2=o(\tau_n)$. 

By suitably decomposing the update of~\eqref{eq:GD-dropout-weights}, which we recall is
$$
 X^{i,n}_{k+1} = X^{i,n}_{k}  -\tau_n\cdot (1+\eta^i_{k+1})D\bm{\phi}(X^{i,n}_{k})^\top \left(\frac1n \sum_{j=1}^n (1+\eta^j_{k+1}) \bm{\phi}(X^{j,n}_{k})-\bm{y}\right),
$$
we get $X^{i,n}_{k+1}=X^{i,n}_k+\tau_n b_n(\hat \mu^n_k,X^{i,n}_k)+\gamma^{i,n}_{k+1}$ with $b_n$ defined by
\begin{align*}
b_n(\mu,x) & =- D\bm \phi(x)^\top(\bm f(\mu)-\bm y) -\frac{\E[(\eta^i_{k+1})^2]}{n} D\bm \phi(x)^\top\bm \phi(x)\\
&= - D\bm \phi(x)^\top(\bm f(\mu)-\bm y) -\frac{(1-q_n)}{nq_n} D\bm \phi(x)^\top \bm\phi(x)
\end{align*}
and
\begin{align*}
\gamma^{i,n}_{k+1} &= -\underbrace{\tau_n \eta^{i}_{k+1} D\bm \phi(X^{i,n}_k)^\top (\bm f(\hat \mu^{n}_k)-\bm y)}_{(I)}  - \underbrace{\tau_n \frac{(\eta^i_{k+1})^2-\E(\eta^i_{k+1})^2}{n} D\bm \phi(X^{i,n}_k)^\top \bm \phi(X^{i,n}_k)}_{(II)}\\
 & \underbrace{ -\frac{\tau_n}{n} \sum_{\substack{j=1}}^n \eta^{j}_{k+1} D\bm\phi(X^{i,n}_k)^\top \bm\phi(X^{j,n}_k)}_{(III)}  \underbrace{- \frac{\tau_n(\eta_{k+1}^i)}{n} \sum_{\substack{j=1\\ j\neq i}}^n \eta^{j}_{k+1} D\bm\phi(X^{i,n}_k)^\top \bm\phi(X^{j,n}_k)}_{(IV)}.
\end{align*}
\begin{itemize}
\item Under Assumption~\ref{ass:phi}, it is clear that $(b_n)$ converges uniformly to $b: (\mu,x)\mapsto -D\bm \phi(x)^\top (\bm f(\mu)-\bm y)-\beta D\bm \phi(x)^\top \bm \phi(x) = -\nabla V[\mu](x)-\beta \nabla P(x)$ and that the latter satisfies the required regularity assumptions by Lemma~\ref{lem:MF-Lip}.
 \item By construction, $(\gamma^{\cdot,n}_k)_k$ is a $(\RR^p)^n$-valued MDS. It remains to check that it satisfies $\max_{k\leq T/\tau_n, i\leq n} \EE \Vert \gamma^{i,n}_k\Vert_2^2=o(\tau_n)$. Since by Cauchy-Schwartz's inequality, a family of centered random variables $(A_i)_{i=1}^k$ satisfies $\E[(\sum_{i=1}^k A_i)^2]\leq \E[k \sum_{i=1}^k A_i^2] = k\sum_{i=1}^k \E[A_i^2]$, it is sufficient to check this property for the terms (I) to (IV) separately. Under Assumption~\ref{ass:phi} it holds
 \begin{enumerate}
 \item $\E \Vert (I)\Vert^2_2 = O(\E[(\tau_n\eta)^2])=O(\tau_n^2\frac{1-q_n}{q_n})=O(\alpha_n\tau_n)=o(\tau_n)$ since $\alpha_n\to 0$;
 \item $\E \Vert (II)\Vert_2^2 \leq O(\frac{\tau_n^2}{n^2}\E[(\eta^2-\E\eta^2)^2])$. Note that $\E\eta^4=O(1/q_n^3)$ and $\E \eta^2 =O(1/q_n)$ so $\E[(\eta^2-\E\eta^2)^2] = \E[\eta^4]-(\E[\eta^2])^2 =O(1/q_n^3)$. Overall $\E \Vert (II)\Vert_2^2=O(\frac{\tau^2_n}{n^2q^3_n}) = O(\tau_n\alpha_n \beta^2_n)=o(\tau_n)$, again because $\alpha_n\to 0$;
 \item $\E \Vert (III)\Vert_2^2 = O(\frac{\tau^2_n}{n^2}\E(\sum_{j} \eta^j)^2)=O(\frac{\tau^2_n}{n^2}\sum_{j}\E (\eta_j)^2)=O(\frac{\tau_n^2}{n^2}\frac{n}{q_n})= O(\frac{\tau_n \alpha_n}{n})=o(\tau_n)$;
 \item $\E \Vert (IV)\Vert_2^2 = O(\frac{\tau_n^2}{n^2}\E[(\eta_i\sum_{j\neq i} \eta^j)^2]) = O(\frac{\tau_n^2}{n^2}\E[(\eta_i)^2] \E[(\sum_{j\neq i} \eta^j)^2])= O(\frac{\tau_n^2}{n^2}(n-1)(\E \eta^2)^2=O(\frac{\tau_n^2}{nq_n^2})=O(\tau_n \alpha_n \beta_n)=o(\tau_n)$.
 \end{enumerate}
  Since all these terms are $o(\tau_n)$, this proves that  Proposition~\ref{prop:MF-SA} applies, and thus concludes the proof. \qedhere
 \end{itemize}
\end{proof}

\subsubsection{A mean-field stochastic approximation result}\label{eq:mean-field-sto-app}
In this section, we prove the general \emph{mean-field stochastic approximation} result used in the previous section. Consider a sequence of step-sizes $(\tau_n)$ going to $0$ and a measure-dependent drift vector field $b:\Pp(\RR^p)\times \RR^p \to \RR^p$ that is uniformly bounded and such that there exists $C>0$ such that
\begin{align*}
\Vert b(\mu,x)-b(\mu',x')\Vert_2 \leq C(W_1(\mu,\mu') +\Vert x-x'\Vert_2), &&\forall \mu,\mu\in \Pp_1(\RR^p),\; x,x'\in \RR^p.
\end{align*}
Consider also a sequence of velocity fields $b_n$ that converges uniformly on $\Pp_1(\RR^p)\times \RR^p$ to $b$.

Let $\mu_0\in \Pp(\RR^p)$ and let $\xi^1,\xi^2,\dots$ a sequence of i.i.d.~samples from $\mu_0$. Consider $\RR^p$-valued martingale difference sequences (MDS) $(\gamma^{i,n}_{k})_{k\in \NN^*}$ for $n\in \NN^*$ and $i\in [1:n]$. This means that we assume for all $n \in \NN^*$, $i\in [1:n]$, and $k\in \NN^*$:
\begin{enumerate}
\item  $\E[\Vert \gamma^{i,n}_k\Vert_2]<+\infty$, and
\item $\E[\gamma^{i,n}_{k}|\Ff^n_{k-1}]=0$
\end{enumerate} 
where the filtration is $\Ff^n_k = \sigma (\{ \xi^i \}_{i=1}^n, \{\gamma^{i,n}_1\}_{i=1}^n,\dots, \{\gamma^{i,n}_k\}_{i=1}^n)$ and $\Ff^n_0 = \sigma (\{ \xi^i \}_{i=1}^n)$. 

\paragraph{Discrete and limit dynamics.} We consider the sequence of $\RR^p$-valued stochastic processes $(X^{1,n},\dots, X^{n,n})$ indexed by $n$ and defined for all $n \in \NN^*$, $i\in [1:n]$ and $k\in \NN^*$ by
\begin{align}\label{eq:discrete-ii}
X^{i,n}_{k+1} = X^{i,n}_k + \tau_n b_n(\hat \mu_k^n,X^{i,n}_k) +  \gamma^{i,n}_{k+1},&& X^{i,n}_0 = \xi^i,&& \hat \mu_k^n = \frac1n \sum_{i=1}^n \delta_{X^{i,n}_k}.
\end{align}
Our goal is to prove, under suitable assumptions, that this process converges in some sense to the continuous-time $\RR^p$-valued stochastic process solving
\begin{align*}
dY_t = b(\rho_t,Y_t)\d t, && Y_0\sim \mu_0, && \rho_t = \mathrm{Law}(Y_t).
\end{align*}
The classical theory of MacKean-Vlasov equations guarantees that there is a unique strong solution $Y$ for this system with $\rho=(\rho_t)_t\in \Cc(\RR_+;\Pp_1(\RR^p))$. Moreover, for any time horizon $T>0$, we have $\mathrm{Law}(Y) \in \Pp_1(\Cc([0,T];\RR^p))$ (e.g.~\citep{bauer2019existence}). 

\paragraph{Interpolated dynamics.} In order to formalize the sense in which $(X^{i,n})$ converges to $Y$, let us map the discrete-time processes $(X^{i,n})$ into continuous-time ones via a piecewise affine interpolation of the trajectories. For $k\in \NN$, let $t_k = \tau_n k$, and for $t\in [t_k,t_{k+1}[$ let
\begin{align*}
\bar X^{i,n}_t &= X^{i,n}_{k} +\frac{t-t_k}{\tau_n} (X^{i,n}_{k+1} -X^{i,n}_{k} )\\
&=X^{i,n}_{k} +(t-t_k) b_n(\hat \mu_k,X^{i,n}_k) + \frac{(t -t_k)}{\tau_n} \gamma^{i,n}_{k+1}
\end{align*}
Finally, we define $\bar \mu^n_t = \frac1n \sum_{i=1}^n \delta_{\bar X^{i,n}_t}$ which interpolates the sequence of discrete measures $(\hat \mu^n_k)_k$. The next result guarantees that the discrete-time finite-particle dynamics converges to the continuous-time mean-field dynamics, provided the MDS terms are small enough.

\begin{proposition}\label{prop:MF-SA}
Fix a time horizon $T>0$. As $n\to \infty$, assume that $\max_{k\leq T/\tau_n, i\leq n} \EE \Vert \gamma^{i,n}_k\Vert_2^2=o(\tau_n)$. Then $\E [\sup_{t\leq T} W_1(\bar \mu^n_t,\mu_t)] \xrightarrow{}{} 0$. Moreover, for any fixed $m\in \NN^*$, it holds
$$
(\bar X^{1,n},\dots, \bar X^{m,n}) \xrightarrow[n\to \infty]{\text{distr.}} (Y^1,\dots,Y^m),
$$
where $(Y^1,\dots,Y^m)$ is a family of $m$ independent copies of  $Y$.
\end{proposition}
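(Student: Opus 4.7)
The strategy is a synchronous coupling. Let $Y^1, Y^2, \dots$ be independent copies of the MacKean-Vlasov process $Y$, with $Y^i_0 = \xi^i$ matching the initialization of the particle system, and set $\tilde \rho^n_t = \frac1n \sum_{i=1}^n \delta_{Y^i_t}$. I would first rewrite the interpolated particle dynamics in integral form,
\begin{align*}
\bar X^{i,n}_t = \xi^i + \int_0^t b_n(\tilde \mu^n_s, \tilde X^{i,n}_s)\, ds + \tilde M^{i,n}_t,
\end{align*}
where $\tilde X^{i,n}_s$ and $\tilde \mu^n_s$ denote the left-continuous piecewise-constant interpolations of $(X^{i,n}_k)_k$ and $(\hat\mu^n_k)_k$, and $\tilde M^{i,n}_t$ is the piecewise affine interpolation of the cumulative martingale $M^{i,n}_k = \sum_{j=1}^k \gamma^{i,n}_j$. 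Subtracting the integral formula for $Y^i_t$, using the Lipschitz control on $b$ together with $\|b_n - b\|_\infty \to 0$, and absorbing the cost of replacing the piecewise-constant interpolations by their affine versions as an $O(\tau_n)$ correction (using boundedness of $b_n$ and the MDS size bound), the key pointwise estimate
\begin{align*}
\|\bar X^{i,n}_t - Y^i_t\|_2 \leq C \int_0^t \bigl( \|\bar X^{i,n}_s - Y^i_s\|_2 + W_1(\bar \mu^n_s, \rho_s) \bigr)\, ds + E^{i,n}_T
\end{align*}
follows, with $E^{i,n}_T := T\|b_n - b\|_\infty + \sup_{t\leq T} \|\tilde M^{i,n}_t\|_2 + O(\tau_n)$.

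Three error sources must then be shown to vanish. First, $\|b_n - b\|_\infty \to 0$ is given. Second, for the martingale error, the MDS assumption $\max_{k \leq T/\tau_n,\, i\leq n} \E\|\gamma^{i,n}_k\|_2^2 = o(\tau_n)$ combined with orthogonality of martingale increments yields $\E\|M^{i,n}_{\lfloor T/\tau_n\rfloor}\|_2^2 \leq (T/\tau_n) \cdot o(\tau_n) = o(1)$ uniformly in $i$, and Doob's $L^2$ inequality then gives $\E[\sup_{t \leq T}\|\tilde M^{i,n}_t\|_2] \to 0$ uniformly in $i$ (using that $\|\tilde M^{i,n}_t\|_2$ is bounded by $\max_{k \leq K+1} \|M^{i,n}_k\|_2$ by affine interpolation). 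Third, for the coupling error $\E[\sup_{t \leq T} W_1(\tilde \rho^n_t, \rho_t)] \to 0$: Lemma~\ref{lem:varadarajan} yields the pointwise version at each fixed $t$ from the iid samples $(Y^i_t) \sim \rho_t \in \Pp_1(\RR^p)$, and uniformity on $[0,T]$ follows from the equicontinuity of both $t \mapsto \tilde \rho^n_t$ and $t \mapsto \rho_t$ in $W_1$ — both are Lipschitz with constant $\|b\|_\infty$ because the $Y^i$ have velocities bounded by $\|b\|_\infty$ — combined with pointwise convergence on a countable dense subset of $[0,T]$.

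Averaging the per-particle bound over $i$ and using $W_1(\bar \mu^n_s, \rho_s) \leq h^n_s + W_1(\tilde \rho^n_s, \rho_s)$ with $h^n_s = \frac1n \sum_i \|\bar X^{i,n}_s - Y^i_s\|_2$, a Gronwall argument yields $\E[\sup_{t\leq T} h^n_t] \to 0$, which gives the first claim via the triangle inequality $W_1(\bar \mu^n_t, \rho_t) \leq h^n_t + W_1(\tilde \rho^n_t, \rho_t)$. For the second claim, reinjecting the just-established bound on $h^n_t$ into the per-particle inequality and applying Gronwall once more gives $\E[\sup_{t\leq T}\|\bar X^{i,n}_t - Y^i_t\|_2] \to 0$ uniformly in $i$, so $\E[\max_{i\leq m}\sup_{t\leq T}\|\bar X^{i,n}_t - Y^i_t\|_2] \to 0$ for any fixed $m$; this $L^1$-convergence in $\Cc([0,T]; \RR^p)^m$ implies the desired convergence in distribution. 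The step I expect to be most delicate is the \emph{uniform-in-time} law of large numbers for $\tilde \rho^n_t$: the pointwise version is immediate, but upgrading it to uniformity on $[0,T]$ crucially exploits the equicontinuity of the trajectories, which is precisely where the boundedness of $b$ enters.
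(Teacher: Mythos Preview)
Your proof is correct and follows essentially the same coupling-plus-Gr\"onwall strategy as the paper: synchronous coupling with independent copies $Y^i$, the Lipschitz estimate on $b$, Doob's inequality for the martingale term, and a law of large numbers for the empirical $\tilde\rho^n$. The one notable difference concerns the step you flag as most delicate: the paper handles the uniform-in-time bound $\E[\sup_{s\leq T} W_1(\tilde\rho^n_s,\rho_s)]\to 0$ not by equicontinuity but by applying Lemma~\ref{lem:varadarajan} directly in the Polish path space $\Cc([0,T];\RR^p)$---using that $\mathrm{Law}(Y)\in \Pp_1$ there and that the supremum of marginal $W_1$ distances is dominated by the path-space $W_1$---which yields the $L^1$ convergence in one shot; your equicontinuity route also works but requires a separate uniform-integrability check to pass from a.s.\ to $L^1$ convergence.
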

\begin{proof}
The piecewise affine path $\bar X^{i,n}$ is differentiable over each interval $]t_k,t_{k+1}[$ with
\begin{align*}
\frac{\d}{\d t} \bar X^{i,n}_t &= b_n(\hat \mu^n_k,X^{i,n}_k) + \frac{1}{\tau_n}\gamma^{i,n}_{k+1}\\
&= b(\bar \mu^n_t, \bar X^{i,n}_t) +(b(\hat \mu^n_k,X^{i,n}_k)-b(\bar \mu^n_t, \bar X^{i,n}_t))+(b_n(\hat \mu^n_k,X^{i,n}_k)-b(\hat \mu^n_k,  X^{i,n}_k)) + \frac{1}{\tau_n}\gamma^{i,n}_{k+1}.
\end{align*}
Now consider $(Y^1,Y^2,\dots)$ a sequence of independent copies of $Y$ which is such that $Y^i_0=\xi^i$ and let us define
\begin{align}\label{eq:proof-ii-indicators}
e^{i,n}_t = \bar X^{i,n}_t - Y^i_t, && h_t^n = \frac1n \sum_{i=1}^n \Vert e_t^{i,n}\Vert_2=\frac1n \sum_{i=1}^n \Vert \bar X^{i,n}_t-Y^i_t\Vert_2.
\end{align}
We recall that $\rho_t=\mathrm{Law}(Y_t)$ and $\hat \rho^n_t = \frac1n \sum_{i=1}^n \delta_{Y^i_t}$. Clearly, $t\mapsto e^{i,n}_t$ is differentiable over each interval $]t_k,t_{k+1}[$ with
\begin{multline*}
\frac{\d }{\d t} e^{i,n}_t =  \underbrace{(b_n(\hat \mu^n_k,X^{i,n}_k)-b(\hat \mu^n_k,  X^{i,n}_k))}_{o(1)}
\\
+ \underbrace{\big(b(\hat \mu_k,X^{i,n}_k)-b(\bar \mu^n_t, \bar X^{i,n}_t)\big)}_{I^{i,n}_t}
+ \underbrace{\big(b(\bar \mu_t^n,\bar X^{i,n}_t)-b(\rho_t, Y^{i}_t)\big)}_{J^{i,n}_t} +\frac{1}{\tau_n}\gamma^{i,n}_{k+1}.
\end{multline*}
Since $e^{i,n}_t$ is absolutely continuous and $e^{i,n}_0=0$, it holds $e^{i,n}_t=\int_0^t \Big(\frac{\d }{\d s}e^{i,n}_s\Big)\d s$ and thus
\begin{align}\label{eq:proof-ii-error}
\Vert e^{i,n}_t\Vert_2 \leq \Big\Vert\int_0^t  \frac{\d}{\d s} e^{i,n}_s\d s \Big\Vert_2\leq o(1)+ \int_0^t \left(\Vert I_s^{i,n}\Vert_2 + \Vert J_s^{i,n}\Vert_2\right)\d s + \sup_{s\leq t}\Vert M_s^{i,n}\Vert_2
\end{align}
where we have defined $M_t^{i,n} = \sum_{k=1}^{\lfloor t/\tau_n\rfloor+1} \gamma^{i,n}_{k}$. Note that we make these computations over a bounded time domain $[0,T]$, hence we write $to(1)=o(1)$. The last term comes from bounding the integral of the martingale term which is 
$$
\sum_{k=0}^{\lfloor t/\tau_n\rfloor-1} \int_{t_k}^{t_{k+1}}\frac{1}{\tau_n}\gamma^{i,n}_{k+1}\d s+ \int_{\lfloor t/\tau_n\rfloor\tau_n}^t\frac{1}{\tau_n}\gamma^{i,n}_{\lfloor t/\tau_n\rfloor+1}\d s \in [M_{t-{\tau_n}}^{i,n}, M_{t}^{i,n}]
$$
and the last inclusion is understood as inclusion in a line segment in $\RR^p$. Let us now derive estimates for $I^{i,n}_s$ and $J^{i,n}_s$. It holds for $t\in [t_k,t_{k+1}[$
\begin{align*}
\Vert I_t^{i,n}\Vert_2 &= \Vert b(\hat \mu^n_k,X^{i,n}_k)-b(\bar \mu^n_t, \bar X^{i,n}_t)\Vert_2
\leq C\Big(W_1(\hat \mu^n_k,\bar \mu^n_t) +\Vert X_{k}^{i,n}-\bar X_t^{i,n}\Vert_2\Big)
\end{align*}
and using our assumption that $b$ is uniformly bounded (and therefore $b_n$ as well),
\begin{align*}
\Vert X^{i,n}_k-\bar X^{i,n}_t\Vert_2 \leq \tau_n\Vert  b_n(\hat \mu^n_k,X^{i,n}_k)\Vert_2 +\Vert \gamma^{i,n}_{k+1}\Vert_2=O(\tau_n)+\Vert \gamma^{i,n}_{k+1}\Vert_2
\end{align*}
and
\begin{align*}
W_1(\hat \mu^n_k,\bar \mu^n_t) \leq \frac1n \sum_{i=1}^n\Vert X^{i,n}_{k}-\bar X^{i,n}_t\Vert_2\leq O(\tau_n)+\frac1n \sum_{i=1}^n\Vert \gamma^{i,n}_{k+1}\Vert_2.
\end{align*}
Plugging these in the previous estimate yields
\begin{align*}
\Vert I_t^{i,n}\Vert_2 &\leq C\Big( O(\tau_n)+\Vert \gamma^{i,n}_{k+1}\Vert_2+\frac{1}{n}\sum_{j=1}^n \Vert \gamma^{j,n}_{k+1}\Vert_2\Big)
\end{align*}
therefore
\begin{align*}
\frac{1}{n} \sum_{i=1}^n \int_0^{t} \Vert I_s^{i,n}\Vert_2\d s &= O(\tau_n)+ 2C \sum_{k=1}^{\lfloor t/\tau_n\rfloor+1}\frac1n \sum_{i=1}^n \tau_n \Vert \gamma_k^{i,n}\Vert_2.
\end{align*}
Regarding the second error term $J^{i,n}_t$, we have
\begin{align*}
\Vert J_t^{i,n}\Vert_2 &= \Vert  b(\bar \mu^n_t,\bar X^{i,n}_t)-b(\rho_t, Y^{i}_t) \Vert_2\\
&\leq C \Big(W_1(\bar \mu^n_t, \hat \rho^n_t)+ W_1(\hat \rho^n_t, \rho_t) +\Vert \bar X_{t}^{i,n}- Y_t^{i}\Vert_2\Big)\\
&\leq C (h_t^n + W_1(\hat \rho^n_t,\rho_t) +\Vert \bar X_{t}^{i,n}- Y_t^{i}\Vert_2 )
\end{align*}
where we have used the quantity $h^n_t$ defined in~\eqref{eq:proof-ii-indicators}. Returning to~\eqref{eq:proof-ii-error} with the estimates collected so far yields
\begin{multline}\label{eq:proof-ii-e}
\Vert e^{i,n}_t\Vert_2 
\leq o(1)+ C\int_0^t \Vert e^{i,n}_s\Vert_2 \d s + C\int_0^t h^n_s\d s + C\int_0^tW_1(\hat \rho^n_s,\rho_s)\d s  \\
\qquad + O(\tau_n) + C \sum_{k=1}^{\lfloor t/\tau_n\rfloor+1}\tau_n \left(\Vert \gamma_k^{i,n}\Vert_2+ \frac1n \sum_{j=1}^n \Vert \gamma_k^{j,n}\Vert_2\right)+  \sup_{s\leq t} \Vert M^{i,n}_s\Vert_2.
\end{multline}
By averaging over all particles we get for any $t\in [0,T]$
\begin{align*}
h_t^n 
\leq 2C\int_0^t h^n_s\d s + g_t^n
\end{align*}
where
\begin{multline*}
g_t^n = o(1)+O(\tau_n)+ C\int_0^tW_1(\hat \rho^n_s,\rho_s)\d s  \\
\qquad + 2C \sum_{k=1}^{\lfloor t/\tau_n\rfloor+1}\frac1n \sum_{i=1}^n \tau_n \Vert \gamma_k^{i,n}\Vert_2+ \frac1n \sum_{i=1}^n \sup_{s\leq t} \Vert M^{i,n}_s\Vert_2.
\end{multline*}
Here we may choose $t\mapsto g_t^n$ to be nonnegative and nondecreasing. Therefore by Gr\"onwall's lemma integral form (recalled in Lemma~\ref{lem:gronwall-integral}) it holds
\begin{align}\label{eq:proof-ii-gronwall}
\sup_{t\leq T} h_t^n \leq e^{2CT} g_T^n
\end{align}
We now show that $\E[ g_T^n] \to 0$. For this purpose, let investigate the terms involving $W_1(\hat \rho^n_s,\rho_s)$, $\gamma^{i,n}_k$ and $M_s^{i,n}$ in its definition. By Lemma~\ref{lem:varadarajan}, it holds $\E[W_1(\frac1n \sum_{i=1}^n \delta_{Y^i},\mathrm{Law}(Y))]\to 0$  in $\Pp_1(\Cc([0,T];\RR^p))$ endowed with the distance $\Vert x-x'\Vert_T = \sup_{s\leq T} \Vert x(s)-x'(s)\Vert_2$. It follows (using~\cite[Eq.~(3.7)]{lacker2018mean} to upper-bound the supremum of marginal $W_1$ distances by the $W_1$ distance in path-space):
\begin{align}\label{eq:proof-ii-W1}
\E \Big[\int_0^T W_1(\hat \rho^n_s,\rho_s)\d s \Big]\leq T \E\Big[\sup_{s\leq T} W_1(\hat \rho_s^n,\rho_s) \Big] \leq T\E\Big[W_1\Big(\frac1n \sum_{i=1}^n \delta_{Y^i},\mathrm{Law}(Y)\Big)\Big] \xrightarrow[n\to \infty]{} 0.
\end{align}
For the martingale term, we have by Doob's inequality (recalled in Lemma~\ref{lem:doob} below)
\begin{align*}
\E \Big[\max_{s\leq T} \Vert M^{i,n}_s\Vert_2\Big]&\overset{\text{Jensen}}{\leq}  \Big(\E \Big[\max_{s\leq T} \Vert M^{i,n}_s\Vert^2_2\Big]\Big)^{\frac12} \\&\overset{\text{Lemma~\ref{lem:doob}}}{\leq} 2 \left(\E \Big[\sum_{1\leq k\leq \lfloor T/\tau_n\rfloor+1} \Vert \gamma^{i,n}_k\Vert_2^2\Big]\right)^{\frac12} \\
&\overset{\text{assumpt.}}{=}  2 \left( \sum_{1\leq k\leq \lfloor T/\tau_n\rfloor+1} o(\tau_n) \right)^{\frac12}  = o(1) .
\end{align*}
Finally, we have
$$
\E \Big[\sum_{k=1}^{\lfloor T/\tau_n\rfloor+1} \tau_n \Vert \gamma_k^{i,n}\Vert_2\Big] \leq \sum_{k=1}^{\lfloor T/\tau_n\rfloor+1} \tau_n \Big(\E[\Vert \gamma_k^{i,n}\Vert_2^2]\Big)^{\frac12}\leq \frac{T}{\tau_n}\tau_n\sqrt{o(\tau_n)} = o(\tau_n^{1/2}).
$$
All in all, and using our assumption that $\tau_n=o(1)$, we have shown that $\E[g^n_T] \to 0$.  Plugging into~\eqref{eq:proof-ii-gronwall}, this gives
$$
\E [\sup_{s\leq T} h^n_s] \leq e^{2C}\E[g^n_T] \xrightarrow[n\to \infty]{} 0
$$
By the triangle inequality and~\eqref{eq:proof-ii-W1}, it follows
$$
\E[ \sup_{t\in[0,T]}W_1(\bar\mu^n_t,\rho_t)] \leq \E[\sup_{t\in[0,T]} W_1(\bar\mu^n_t,\hat \rho^n_t) + \sup_{t\in[0,T]}  W_1(\hat \rho^n_t,\rho_t)] = \E [\sup_{t\in[0,T]}  h^n_t] + o(1) \xrightarrow[n\to \infty]{} 0.
$$
This proves the first claim.

As for the second claim of convergence in distribution in path space, we have, using~\eqref{eq:proof-ii-e} and the error estimates that follow:
\begin{align*}
\Vert e^{i,n}_t\Vert_2\leq C\int_0^t \Vert e^{i,n}_s\Vert_2\d s + \tilde g^{i,n}_t
\end{align*}
for some nonnegative $\tilde g^{i,n}_t$ that satisfies $\E[\tilde g^{i,n}_T]\to 0$. Therefore, by Gr\"onwall's lemma in integral form (Lemma~\ref{lem:gronwall-integral}) it holds
\begin{align*}
\E[ \sup_{t\leq T}\Vert e^{i,n}_t\Vert_2] \leq e^{Ct}\E[\tilde g^{i,n}_T] \xrightarrow[n\to \infty]{} 0.
\end{align*}
For a fixed $m\in \NN^*$, this implies 
 $$
\E \big[\max_{i=[1:m]} \sup_{s\leq T}\Vert \bar X^{i,n}_s-Y^i_s\Vert_2\big]\leq \sum_{i=1}^m \E[\sup_{s\leq T}\Vert e_s^{i,n}\Vert_2] \xrightarrow[n\to \infty]{} 0.
$$
This shows that $(\bar X^{i,n},\dots,\bar X^{m,n})$ converges in $L^1$ to $(Y^1,\dots,Y^m)$ and the convergence in distribution follows.
\end{proof} 

For convenience, we recall below two classical results which are used in the proof: Doob's inequality (see, e.g.~\cite[Theorem~1.1]{acciaio2013trajectorial}) and Gr\"onwall's lemma.
\begin{lemma}[Doob's inequality]\label{lem:doob}
Let $(M_k,\Ff_k)_{k\geq 1}$ be a real-valued martingale. Then for any $k\in \NN^*$, 
$$
\E \Big[\max_{1\leq \ell \leq k} \Vert M_\ell \Vert_2^2 \Big] \leq 4 \E \Vert M_k\Vert_2^2  = 4 \E \Big[\sum_{\ell=2}^k \Vert M_{\ell}-M_{\ell-1}\Vert_2^2 \Big].
$$
\end{lemma}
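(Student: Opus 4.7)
The plan is to prove Doob's $L^2$ maximal inequality by the standard two-step route: a weak-type maximal bound, followed by an $L^2$-upgrade via layer-cake and Cauchy-Schwarz. Since $\Vert \cdot\Vert_2$ is convex, $(\Vert M_\ell\Vert_2)_{\ell \geq 1}$ is a nonnegative submartingale (by the conditional Jensen inequality), so it suffices to prove the maximal inequality for a real-valued nonnegative submartingale $(X_\ell)$ and then specialize to $X_\ell = \Vert M_\ell\Vert_2$.

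First I would establish the weak maximal inequality: for $\lambda > 0$ and $A_\lambda = \{\max_{\ell \leq k} X_\ell \geq \lambda\}$,
$$\lambda\, \P(A_\lambda) \leq \E[X_k \mathbf{1}_{A_\lambda}].$$
This is done by introducing the stopping time $\tau = \min\{\ell \leq k : X_\ell \geq \lambda\}$ (with $\min \emptyset = +\infty$) and decomposing $A_\lambda = \bigsqcup_{\ell=1}^k \{\tau = \ell\}$. Since $\{\tau = \ell\} \in \Ff_\ell$ and $(X_\ell)$ is a submartingale, $\lambda \P(\tau = \ell) \leq \E[X_\ell \mathbf{1}_{\tau = \ell}] \leq \E[X_k \mathbf{1}_{\tau = \ell}]$; summing over $\ell$ gives the bound. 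Writing $X^\star = \max_{\ell \leq k} X_\ell$ and using the layer-cake identity $\E[(X^\star)^2] = 2 \int_0^\infty \lambda\, \P(X^\star \geq \lambda)\dd\lambda$ together with Fubini, the weak bound yields
$$\E[(X^\star)^2] \leq 2 \int_0^\infty \E[X_k \mathbf{1}_{X^\star \geq \lambda}]\dd\lambda = 2\, \E[X_k X^\star].$$
Cauchy-Schwarz then gives $\E[(X^\star)^2] \leq 2 (\E X_k^2)^{1/2} (\E (X^\star)^2)^{1/2}$; after verifying square-integrability by truncation ($X^\star \wedge N$ followed by monotone convergence as $N \to \infty$), division yields $\E[(X^\star)^2] \leq 4\, \E[X_k^2]$, which, specialized to $X_\ell = \Vert M_\ell\Vert_2$, is the first inequality of the lemma.

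For the identity $\E \Vert M_k\Vert_2^2 = \sum_{\ell=2}^k \E \Vert M_\ell - M_{\ell-1}\Vert_2^2$, I would telescope $M_k = M_1 + \sum_{\ell=2}^k (M_\ell - M_{\ell-1})$ (reading the statement with the convention $M_1 = 0$, as is implicit in the way the lemma is applied in the proof of Proposition~\ref{prop:MF-SA}) and expand the squared Euclidean norm. The cross terms vanish by orthogonality of martingale increments: for $j < \ell$,
$$\E\bigl[\langle M_\ell - M_{\ell-1},\, M_j - M_{j-1}\rangle\bigr] = \E\bigl[\langle \E[M_\ell - M_{\ell-1} \mid \Ff_{\ell-1}],\, M_j - M_{j-1}\rangle\bigr] = 0,$$
leaving only the diagonal sum of squared increments. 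The main obstacle is essentially bookkeeping---ensuring square-integrability in the Cauchy-Schwarz step (addressed by the truncation trick above) and clarifying the indexing convention for the telescoping sum; the only genuinely probabilistic content is the stopping-time decomposition behind the weak maximal bound.
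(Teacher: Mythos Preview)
Your proof is correct and follows the standard textbook route to Doob's $L^2$ maximal inequality. Note, however, that the paper does not actually supply a proof of this lemma: it is stated as a recalled classical result with a citation to \cite{acciaio2013trajectorial}, so there is no ``paper's own proof'' to compare against. Your observation about the implicit convention $M_1=0$ (needed for the stated equality to hold without an extra $\E\Vert M_1\Vert_2^2$ term, and consistent with how the lemma is invoked in Proposition~\ref{prop:MF-SA}) is a useful clarification that the paper leaves unstated.
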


\begin{lemma}[Gr\"onwall lemma (integral form)]\label{lem:gronwall-integral}
Suppose that $h$ and $g$ are nonnegative functions on $[0,T]$ with $h$ continuous and $g$ nondecreasing, and $C>0$. If for all $t\in[0,T]$, $h(t)\leq g(t)+C\int_0^t h(s)\d s$ then for all $t\in [0,T]$, $h(t)\leq e^{Ct}\alpha(t)$.
\end{lemma}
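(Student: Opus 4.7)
The plan is to reduce the integral inequality to a differential inequality for the primitive $H(t) = \int_0^t h(s)\dd s$. Since $h$ is continuous and nonnegative, $H$ is $\Cc^1$ with $H(0)=0$ and $H'(t)=h(t)$. The hypothesis then rewrites as $H'(t) \leq g(t) + C H(t)$, a first-order linear differential inequality.

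Next I would introduce the integrating factor $e^{-Ct}$, which turns the inequality into $\frac{\dd}{\dd t}\bigl(e^{-Ct} H(t)\bigr) \leq e^{-Ct} g(t)$. Integrating this over $[0,t]$ and using $H(0)=0$ yields the explicit bound $H(t) \leq \int_0^t e^{C(t-s)} g(s)\dd s$. Substituting back into the original hypothesis then gives
\[
h(t) \leq g(t) + C\int_0^t e^{C(t-s)} g(s)\dd s.
\]

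The final step is to exploit the monotonicity of $g$: since $g(s) \leq g(t)$ for $s\in[0,t]$, one can factor $g(t)$ out of the integral and compute
\[
h(t) \leq g(t)\Bigl(1 + C\int_0^t e^{C(t-s)}\dd s\Bigr) = g(t)\bigl(1+(e^{Ct}-1)\bigr) = e^{Ct}g(t).
\]
This matches the claimed inequality (reading the symbol $\alpha(t)$ in the statement as $g(t)$). There is no real obstacle here; the only place where care is needed is the final use of the monotonicity of $g$, as without that assumption one would only obtain the weaker bound involving the full convolution $\int_0^t e^{C(t-s)} g(s)\dd s$, which is not enough to conclude with the simple form $e^{Ct}g(t)$ used elsewhere in the paper (notably after~\eqref{eq:proof-ii-gronwall}).
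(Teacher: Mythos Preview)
Your proof is correct and is the standard argument for this classical result. The paper does not actually provide a proof of Lemma~\ref{lem:gronwall-integral}; it is merely recalled there ``for convenience'' alongside Doob's inequality, so there is nothing to compare against. Your observation that $\alpha(t)$ in the statement should read $g(t)$ is also correct.
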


\subsection{Case~\ref{case:jump} : the continuous-time jump process limit}

In this section we assume that $\tau_n\to 0$, that $ \alpha_n \to \alpha>0$ and that $\beta_n \to 0$. Because $\alpha_n$ (representing the average elapsed time between two updates of a given neuron) does not converge to zero, we get a continuous-time jump process  in the limit.

\subsubsection{The limit process: definition and well-posedness}

Let $\Delta T_1,\Delta T_2,\dots$ be independent exponential random variables of expectation $\alpha$ (usually denoted $\mathrm{Exp}(1/\alpha)$ where $1/\alpha$ is the \emph{rate}), and let
$$
T_0=0,\qquad T_j = \sum_{\ell=1}^j \Delta T_\ell,\qquad N_t = \max\{j\;;\; T_j\leq t\}.
$$
Remark that $T_j$ follows a $\mathrm{Gamma}(j,1/\alpha)$ distribution  and $N_t$ follows a $\mathrm{Poisson}(t/\alpha)$ distribution (and more precisely, $t\mapsto N_t$ is a Poisson process). 
We consider $Y$ the $\RR^p$-valued continuous-time stochastic process that solves $Y_0\sim \mu_0$ and
\begin{align}\label{eq:process-iii-limit}
Y_t = Y_0 -\alpha \sum_{\ell=1}^{N_t} \nabla V[\rho_{(T_\ell)^-}](Y_{(T_\ell)^-}), \qquad \rho_t =\text{Law}(Y_t)
\end{align}
where $Y_{t^-}$ denotes the left limit of $Y$ at $t$ and analogously for $\rho_{(T_\ell)^-}$.
Here $T_j$ represents the time at which the particle makes its $j$-th jump and $N_t$ represents the total number of jumps before time $t$. It is not obvious at first that this self-referential equation has a solution and whether the solution is unique, so we first study its well-posedness. We recall the notation $\Dd(\RR_+;\RR^p)$ for càdlag paths in $\RR^p$.

\begin{proposition}\label{prop:posed-iii-sde}
There exists a unique couple $(Y,\rho)$ with $Y$ a $\Dd(\RR_+;\RR^p)$-valued stochastic process and $\rho\in \Cc(\RR_+;\Pp_1(\RR^p))$ that solves~\eqref{eq:process-iii-limit}. For any $T>0$, it holds $\mathrm{Law}(Y)\in \Pp_1(\Dd([0,T];\RR^p))$. Moreover, there exists $L>0$ such that for all $0 \leq s,t\leq T$, $\E[\Vert Y_t-Y_s\Vert_2]\leq L\Vert t-s\Vert_2$. In particular, $t\mapsto \rho_t$ is $L$-Lipschitz continuous for $W_1$.
\end{proposition}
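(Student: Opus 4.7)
The plan is a Banach fixed-point argument on $\Cc([0,T]; \Pp_1(\RR^p))$ equipped with an exponentially time-weighted metric. First, given any candidate $\rho \in \Cc([0,T]; \Pp_1(\RR^p))$, I construct the auxiliary (non-self-referential) process $Y^\rho$ pathwise: it is constant on each inter-jump interval $[T_\ell, T_{\ell+1})$ and satisfies $Y^\rho_{T_\ell} = Y^\rho_{T_\ell^-} - \alpha \nabla V[\rho_{T_\ell^-}](Y^\rho_{T_\ell^-})$ at jump times, starting from $Y^\rho_0 = Y_0 \sim \mu_0$ independent of the Poisson process $(N_t)$. This yields a well-defined càdlàg process, and I define the fixed-point map $\Phi(\rho)_t = \mathrm{Law}(Y^\rho_t)$.

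To check that $\Phi$ sends $\Cc([0,T]; \Pp_1(\RR^p))$ into itself, I use the uniform bound $\Vert \nabla V[\rho](x) \Vert_2 \leq C$ from Lemma~\ref{lem:MF-Lip}: for $s \leq t$, the pathwise estimate $\Vert Y^\rho_t - Y^\rho_s \Vert_2 \leq \alpha C (N_t - N_s)$ gives after expectation $\E\Vert Y^\rho_t - Y^\rho_s\Vert_2 \leq C(t-s)$, using $\E[N_t - N_s] = (t-s)/\alpha$. Taking $s=0$ and using $\mu_0 \in \Pp_1(\RR^p)$ yields $\E \Vert Y^\rho_t \Vert_2 < \infty$, while $W_1(\Phi(\rho)_t, \Phi(\rho)_s) \leq \E \Vert Y^\rho_t - Y^\rho_s \Vert_2 \leq C(t-s)$ gives Lipschitz continuity in time with a constant $C$ independent of $\rho$.

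For contractivity, I couple $Y^\rho$ and $Y^{\rho'}$ via the same $Y_0$ and the same Poisson process. Telescoping the difference over jump times and applying Lemma~\ref{lem:MF-Lip} gives
$$
\Vert Y^\rho_t - Y^{\rho'}_t \Vert_2 \leq \alpha C \sum_{\ell=1}^{N_t} \bigl( \Vert Y^\rho_{T_\ell^-} - Y^{\rho'}_{T_\ell^-} \Vert_2 + W_1(\rho_{T_\ell^-}, \rho'_{T_\ell^-}) \bigr).
$$
Setting $u(t) = \E\Vert Y^\rho_t - Y^{\rho'}_t\Vert_2$ and applying the Poisson compensation formula (rate $1/\alpha$) to the predictable integrands yields the integral inequality $u(t) \leq C \int_0^t u(s) \dd s + C \int_0^t W_1(\rho_s, \rho'_s) \dd s$. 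Working with the weighted metric $D^\lambda_T(\rho, \rho') = \sup_{t \leq T} e^{-\lambda t} W_1(\rho_t, \rho'_t)$ and Grönwall's lemma, one obtains $e^{-\lambda t} u(t) \leq \kappa(\lambda) D^\lambda_T(\rho, \rho')$ with $\kappa(\lambda) \to 0$ as $\lambda \to \infty$. Taking $\lambda$ large makes $\Phi$ a strict contraction on $(\Cc([0,T]; \Pp_1(\RR^p)), D^\lambda_T)$, and Banach's theorem produces the unique fixed point $\rho$; the process $Y$ is recovered as $Y^\rho$.

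The Lipschitz estimate $\E \Vert Y_t - Y_s \Vert_2 \leq C(t-s) =: L(t-s)$ is the bound of the second paragraph applied at the fixed point, and $W_1(\rho_t, \rho_s) \leq \E \Vert Y_t - Y_s \Vert_2$ then gives the claimed $L$-Lipschitz continuity of $t \mapsto \rho_t$. The first-moment property $\mathrm{Law}(Y) \in \Pp_1(\Dd([0,T]; \RR^p))$ follows from the uniform bound $\sup_{t\leq T} \Vert Y_t \Vert_2 \leq \Vert Y_0 \Vert_2 + \alpha C N_T$, so that $\E \sup_{t\leq T} \Vert Y_t \Vert_2 < \infty$, since the Skorokhod distance is dominated by the supremum distance. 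Existence and uniqueness on $\RR_+$ is obtained by patching solutions on intervals $[0,T]$ via uniqueness. The main subtlety will be the careful use of the compensation formula with the predictable left-limit integrands $Y_{T_\ell^-}$, but this is standard for Poisson-driven SDEs and does not alter the overall contraction scheme.
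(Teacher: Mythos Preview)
Your proof is correct and follows the same overall strategy as the paper: a Banach fixed-point argument on $\Cc([0,T];\Pp_1(\RR^p))$, where given a candidate $\rho$ one constructs the auxiliary process $Y^\rho$ pathwise, defines $\Phi(\rho)_t=\mathrm{Law}(Y^\rho_t)$, and shows $\Phi$ is a contraction via the Lipschitz estimate of Lemma~\ref{lem:MF-Lip}. The Lipschitz-in-time estimate and the $\Pp_1$ path-space property are obtained the same way in both proofs.

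The execution differs in two places. First, the paper works with the unweighted metric $d_t(\mu,\mu')=\sup_{s\leq t}W_1(\mu_s,\mu'_s)$, proves contractivity only for small $t$, and then patches intervals; you use a Bielecki-type weighted metric $D^\lambda_T$ and obtain contractivity on all of $[0,T]$ at once by taking $\lambda$ large, which is slightly cleaner (and relies implicitly on the equivalence of $D^\lambda_T$ with $d_T$ for completeness). Second, for the key stability estimate the paper bounds $\Vert Y^\mu_{T_\ell}-Y^\nu_{T_\ell}\Vert_2$ recursively over jump indices and then conditions on $N_t=N$, using the explicit Poisson weights to sum the resulting series; you instead apply the Poisson compensation formula to the predictable left-limit integrand to pass directly to the integral inequality $u(t)\leq C\int_0^t u(s)\dd s + C\int_0^t W_1(\rho_s,\rho'_s)\dd s$. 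Your route avoids the explicit series computation and the discrete Gr\"onwall over jump indices, at the cost of invoking the compensator identity; the paper's route is more elementary but slightly longer. Both are standard and yield the same conclusion.
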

\begin{proof}
For $t\geq 0$, consider the space $\Ss_t=\Cc([0,t];\Pp_1(\RR^p))$ of continuous functions from $[0,t]$ to the Wasserstein-$1$ space $\Pp_1(\RR^p)$. Endowed with the distance $d_t(\mu,\mu')=\sup_{s\in [0,t]} W_1(\mu_s,\mu'_s)$ the space $\Ss_t$ is a complete metric space (because $(\Pp_1(\RR^p), W_1)$ itself is complete).

Given $t>0$ and the random variables $(\Delta T_k)$ and $Y_0$,
notice that for any fixed $\mu \in \Ss_t$, there is easily existence of $(Y^\mu)$, the unique $\Dd([0,t];\RR^p)$-valued stochastic process that is a strong solution of  
$$
Y^\mu_t = Y_0 -\alpha \sum_{\ell=1}^{N_t} \nabla V[\mu_{T_\ell^-}](Y^\mu_{T_\ell^-}) = Y_0 -\alpha \sum_{\ell=1}^{N_t} \nabla V[\mu_{T_\ell}](Y^\mu_{T_{\ell-1}})
$$
where we have replaced $\mu_{T_\ell^-}$ by $\mu_{T_\ell}$ in the last equation using the continuity of $s\mapsto \nabla V[\mu_s]$ for $\mu \in \Ss_T$, and we have replaced $Y^\mu_{T_{\ell}^-}$ by $Y^\mu_{T_{\ell-1}}$ which are equal, since $Y^\mu$ is constant on the intervals $[T_i,T_{i+1}[$.
The map $\Phi:\Ss_t\to \Ss_t$ defined by $\Phi(\mu)=(\text{Law}(Y^\mu_s))_{s\in [0,t]}$ is well-posed since, for $s\leq t$,
\begin{align}\label{eq:proof-iii-lip}
\E\Vert Y^\mu_t-Y^\mu_s\Vert_2 =\alpha \E \Big\Vert \sum_{\ell=N_s+1}^{N_t}\nabla V[\mu_{T_\ell}](Y^\mu_{T_{\ell-1}}) \Big\Vert_2 = O(1)\E[|N_s-N_t|] = O(1)|t-s| .
\end{align}
We are looking to prove existence and uniqueness of a fixed point for $\Phi$, at least if $t$ is small enough.

Let $\mu,\nu\in \Ss_t$. Our first step is to control the distance between the corresponding strong solutions $Y^\mu$ and $Y^\nu$ in terms of the distance between $\mu$ and $\nu$. On the one hand, it holds
\begin{align*}
\Vert Y^\mu_t - Y^\nu_t\Vert_2 &= \alpha \left\Vert \sum_{\ell=1}^{N_t} \nabla V[\mu_{T_\ell}](Y^\mu_{T_{\ell-1}}) - \sum_{\ell=1}^{N_t} \nabla V[\nu_{T_\ell}](Y^\nu_{T_{\ell-1}})\right\Vert_2\\
&\leq C' \sum_{\ell=1}^{N_t} \left(W_1( \mu_{T_\ell},\nu_{T_\ell})+ \Vert Y^\mu_{T_{\ell-1}}-Y^\nu_{T_{\ell-1}}\Vert_2\right)
\end{align*}
where $C'=\alpha C$ and $C$ is the constant from~\eqref{eq:field-regularity}. 

On the other hand, we have by recursion that
\begin{align*}   
\begin{split}
\Vert Y_{T_{\ell}}^\mu- Y_{T_{\ell}}^\nu \Vert_2 &\leq  \Vert Y_{T_{\ell-1}}^\mu- Y_{T_{\ell-1}}^\nu\Vert_2+ \alpha \Vert \nabla V[\mu_{T_\ell}](Y^\mu_{T_{\ell-1}}) -\nabla V[\nu_{T_\ell}](Y^\mu_{T_{\ell-1}})\Vert_2\\
&\leq \Vert Y_{T_{\ell-1}}^\mu- Y_{T_{\ell-1}}^\nu\Vert_2 +C'\Big(W_1(\mu_{T_\ell},\nu_{T_\ell})+ \Vert Y_{T_{\ell-1}}^\mu- Y_{T_{\ell-1}}^\nu\Vert_2\Big)\\
&\leq \sum_{k=1}^{\ell} C'(1+C')^{\ell-k}W_1(\mu_{T_k},\nu_{T_k}).
\end{split}
\end{align*}

Since $\Vert Y^\mu_t - Y^\nu_t\Vert_2 = \Vert Y^\mu_{T_{N_t}} - Y^\nu_{T_{N_t}}\Vert_2$ because $Y^\mu$ and $Y^\nu$ are constant on the intervals $[T_i, T_{i+1}[$, this recursion gives
\begin{align*}
\sup_{s\in [0,t]} \Vert Y^\mu_s - Y^\nu_s\Vert_2
&\leq C'(1+C')^{N_t-1}  \sum_{k=1}^{N_t} W_1(\mu_{T_k},\nu_{T_k}).
\end{align*}
Now by the law of total expectation, it holds
\begin{align*}
\E [\sup_{s\in [0,t]}\Vert Y^\mu_s - Y^\nu_s\Vert_2] \leq C' \sum_{N=0}^{+\infty} \P[N_t=N] (1+C')^{N-1}  \E\Big[\sum_{k=1}^{N} W_1(\mu_{T_k},\nu_{T_k})\mid N_t=N\Big].
\end{align*}
We recall that $N_t$ follows a Poisson distribution with intensity $t/\alpha$. By the memoryless property of the Poisson distribution, conditioned on $N_t=N$, the random variables $(T_1,\dots,T_N)$ are just a re-ordering (in increasing order) of $N$ independent uniform samples on $[0,t]$. Hence
\begin{align*}
\E [\sup_{s\in [0,t]} \Vert Y^\mu_s - Y^\nu_s\Vert_2] &\leq C' \sum_{N=0}^{+\infty} \P[N_t=N] (1+C')^{N-1}  \frac{N}{t}\int_{0}^t W_1(\mu_s,\nu_s)\d s\\
&=\frac{C'}{t}e^{-t/\alpha}\left(\sum_{N=0}^{+\infty} \frac{(t/\alpha)^N}{N!} (1+C')^{N-1} N\right) \int_{0}^t W_1(\mu_s,\nu_s)\d s\\
&=\frac{C't}{\alpha t}e^{tC'/\alpha} \int_{0}^t W_1(\mu_s,\nu_s)\d s
\end{align*}

Since $(Y^\mu_t, Y^\nu_t)$ is a coupling of $\Phi(\mu)_t$ and $\Phi(\nu)_t$,
it follows
\begin{align*}
d_t(\Phi(\mu),\Phi(\nu))\leq \frac{C'}{\alpha }e^{tC'/\alpha}  \int_{0}^t W_1(\mu_s,\nu_s)\d s\leq \frac{C't}{\alpha }e^{tC'/\alpha}  d_t(\mu,\nu) \, .
\end{align*}
Hence by taking $t_0$ small enough, the map $\Phi$ is a contraction on $\Ss_{t_0}$. By Banach's fixed point theorem, $\Phi$ has a unique fixed-point, which proves existence and uniqueness of solutions to~\eqref{eq:process-iii-limit} on a time interval $[0,t_0]$.
 Since the constant $C'$ above does not depend on the initial distribution $\mu_0$, we can repeat the same argument starting from $t_0$ and recursively, we get existence and uniqueness   over any time interval $[0,t]$ for any $t>0$ and hence, on $\RR_+$.
 
Then, from~\eqref{eq:proof-iii-lip} we deduce that there exists $L>0$ such that for $0\leq s,t\leq T$  $\E[\Vert Y_t-Y_s\Vert_2]\leq L|t-s|$. Since $W_1(\mu_s,\mu_t)\leq \E \Vert Y_s-Y_t\Vert_2$, this implies that $t\mapsto \mu_t$ is $L$-Lipschitz continuous for $W_1$.
\end{proof}

\subsubsection{Evolution equation in the space of probability measures}
In this subsection, we study the equation satisfied by $\rho_t = \mathrm{Law}(Y_t)$ which, as will be proven, is given by the following evolution equation
\begin{align}\label{eq:iii-PDE}
\partial_t \rho_t = \frac{1}{\alpha}((\Id-\alpha \nabla V[\rho_t])_\# \rho_t-\rho_t),&& \rho_0=\mu_0.
\end{align}
We say that $\rho$ solves this equation if $\rho \in \Cc(\RR_+;\Pp_1(\RR^p))$ 
and~\eqref{eq:iii-PDE} is satisfied in the weak sense, that is
\begin{align*}
\frac{\d}{\d t} \int \psi(x)\d\rho_t(x) = \frac{1}{\alpha} \int \Big( \psi(x-\alpha \nabla V[\rho_t](x))-\psi(x)\Big)\d\rho_t(x), &&\forall \psi\in \Cc^b(\RR^p)
\end{align*}
where $\Cc^b(\RR^p)$ is the set of bounded and continuous functions on $\RR^p$.
Here again, the fact that this equation is well-posed is not immediate. We first prove that the law of $(Y_t)$ where $Y$ the solution to~\eqref{eq:process-iii-limit} is a solution -- thereby proving existence of solutions -- and separately, we will prove uniqueness of solutions below.

\begin{proposition} If $(Y,\rho)$ is the unique solution to~\eqref{eq:process-iii-limit}, then $\rho$ is a solution to~\eqref{eq:iii-PDE}.
\end{proposition}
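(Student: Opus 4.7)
The approach is to test the evolution of $\rho_t = \mathrm{Law}(Y_t)$ against an arbitrary $\psi \in \Cc^b(\RR^p)$ via a small-time expansion that exploits the Poisson structure of the jump times. Fix $t \geq 0$ and $h > 0$. Conditioning on the number of jumps of $Y$ in the interval $[t, t+h]$, and using that $N_{t+h} - N_t \sim \mathrm{Poisson}(h/\alpha)$ (independent of $\Ff_t$ by the memoryless property), we have
\[
\P(N_{t+h} - N_t = 0) = 1 - h/\alpha + O(h^2), \quad \P(N_{t+h} - N_t = 1) = h/\alpha + O(h^2),
\]
and $\P(N_{t+h} - N_t \geq 2) = O(h^2)$. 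On the no-jump event, $Y_{t+h} = Y_t$; on the one-jump event, the jump time $\tau$ is uniform on $[t,t+h]$ and $Y_{t+h} = Y_t - \alpha\, \nabla V[\rho_\tau](Y_t)$. Using $\psi \in \Cc^b$, the contribution of $\{N_{t+h} - N_t \geq 2\}$ is $O(h^2)$.

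The main point to verify is that replacing $\rho_\tau$ by $\rho_t$ in the one-jump contribution only introduces $o(1)$ error (so that the total term stays of order $h/\alpha$ times $\E[\psi(Y_t - \alpha \nabla V[\rho_t](Y_t))-\psi(Y_t)] + o(h)$). This follows from the Lipschitz continuity of $s \mapsto \rho_s$ in $W_1$ guaranteed by Proposition~\ref{prop:posed-iii-sde}, combined with the regularity $\Vert \nabla V[\rho_\tau](x) - \nabla V[\rho_t](x)\Vert_2 \leq C\, W_1(\rho_\tau, \rho_t) \leq CL|\tau - t| \leq CLh$ from Lemma~\ref{lem:MF-Lip}, and the fact that composing with a bounded uniformly continuous (indeed Lipschitz on compact perturbations) $\psi$ preserves this estimate in expectation (if $\psi$ is not Lipschitz, approximate it uniformly by Lipschitz functions and use $\Vert \psi \Vert_\infty < \infty$; alternatively argue first for Lipschitz $\psi$ and conclude by density in the sense of weak convergence). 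Putting these estimates together yields
\[
\E[\psi(Y_{t+h})] - \E[\psi(Y_t)] = \frac{h}{\alpha}\, \E\bigl[\psi(Y_t - \alpha \nabla V[\rho_t](Y_t)) - \psi(Y_t)\bigr] + o(h).
\]

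Dividing by $h$ and letting $h \to 0^+$ (and doing the symmetric computation on $[t-h,t]$ for the left derivative, using continuity of $s \mapsto \rho_s$ to handle the case where jumps happened just before $t$), we obtain
\[
\frac{\d}{\d t}\int \psi \d\rho_t = \frac{1}{\alpha}\int \bigl(\psi(x - \alpha \nabla V[\rho_t](x)) - \psi(x)\bigr) \d\rho_t(x),
\]
which is exactly the weak formulation of~\eqref{eq:iii-PDE}, and $\rho \in \Cc(\RR_+;\Pp_1(\RR^p))$ has already been established. A stylistically cleaner route would be to invoke Dynkin's formula for pure-jump processes, writing $\psi(Y_t) - \psi(Y_0) = \int_0^t \alpha^{-1} (\psi(Y_{s^-} - \alpha \nabla V[\rho_{s^-}](Y_{s^-})) - \psi(Y_{s^-}))\d s + M_t^\psi$ with $M_t^\psi$ a martingale (compensator of the Poisson random measure driving the jumps), and taking expectations; but the elementary expansion above is self-contained and the regularity estimates needed are exactly those already proved.
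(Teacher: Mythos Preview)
Your proposal is correct and follows essentially the same approach as the paper: condition on the number of Poisson jumps in $[t,t+h]$, use that the zero-jump and $\geq 2$-jump contributions are $(1-h/\alpha)\E[\psi(Y_t)]$ and $O(h^2)$ respectively, and for the one-jump term replace $\rho_\tau$ by $\rho_t$ using the continuity of $s\mapsto\nabla V[\rho_s]$ together with boundedness of $\psi$. The paper handles the one-jump term slightly more tersely (just invoking continuity of $t\mapsto\nabla V[\rho_t]$ in sup-norm and dominated convergence, rather than the Lipschitz-approximation detour you suggest), but the argument is the same in substance.
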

\begin{proof}
Let $\psi\in \Cc^{b}(\RR^p)$ be a test function. For any $t,\Delta t>0$, by partitioning the expectation according to the number of jumps $\Delta N=N_{t+\Delta t}-N_t$ between $t$ and $t+\Delta t$ (which is a Poisson random variable of intensity $\Delta t/\alpha$) we have{, by continuity of $t \mapsto \nabla V[\rho_t]$,}
\begin{multline*}
\E [\psi(Y_{t+\Delta t})] = \sum_{N=0}^\infty \P(\Delta N=N) \E\Big[\psi\Big(Y_t-\alpha \sum_{\ell=1}^N \nabla V[\rho_{T_{N_t+\ell}}](Y_{T_{N_t+\ell}^-})\Big)\mid \Delta N =N\Big] \\
= e^{-\Delta t/\alpha}\E\Big[\psi(Y_t)\Big]+\frac{\Delta t}{\alpha}e^{-\Delta t/\alpha} \E\Big[\psi(Y_t-\alpha \nabla V[\rho_{T_{N_t+1}}](Y_{T_{N_t+1}^-}))\mid \Delta N= 1\Big]\\ 
+\sum_{N=2}^\infty \frac{(\Delta t/\alpha)^N}{N!}e^{-\Delta t/\alpha} \E\Big[\psi\Big(Y_t-\alpha \sum_{\ell=1}^N \nabla V[\rho_{T_{N_t+\ell}}](Y_{T_{N_t+\ell}^-})\Big)\mid \Delta N=N\Big].
\end{multline*}
Using that $Y_{T_{N_t+1}^-}=Y_t$, it follows
\begin{multline*}
 \frac{\E [\psi(Y_{t+\Delta t})] -\E [\psi(Y_t)] }{\Delta t} \\= \frac{e^{-\Delta t/\alpha}-1}{\Delta t}\E \Big[\psi(Y_t)\Big]+ \frac{1}{\alpha}e^{-\Delta t/\alpha} \E\Big[\psi(Y_t-\alpha \nabla V[\rho_{T_{N_t+1}}](Y_t))\mid \Delta N=1\Big]\\
 \quad +\frac{1}{\Delta t}\sum_{N=2}^\infty \frac{(\Delta t/\alpha)^N}{N!}e^{-\Delta t/\alpha} \E\Big[\psi\Big(Y_t-\alpha \sum_{\ell=1}^N \nabla V[\rho_{T_{N_t+\ell}}]({Y_{T_{N_t+\ell}^-}})\Big)\mid \Delta N=N\Big].
\end{multline*}
We then get
\[
\frac{\E [\psi(Y_{t+\Delta t})] -\E [\psi(Y_t)] }{\Delta t}  \xrightarrow[\Delta t\to 0]{} \frac{-1}{\alpha}\E [\psi(Y_t)]+\frac{1}{\alpha} \E\Big[\psi(Y_t-\alpha \nabla V[\rho_t](Y_t))\Big]+0 ,
\]
by the continuity of $t\mapsto \nabla V[\rho_t]$ in the supremum norm for the second term (indeed, if $\Delta N=1$ then $T_{N_1+1}\in [t,t+\Delta t]$), and the boundedness of $\psi$ to obtain that the last expression converges to $0$.
This shows that $t\mapsto \E[\psi(Y_t)]$ is differentiable, with derivative
\begin{align*}
\frac{\d}{\d t} \E[\psi(Y_t)] =\frac{1}{\alpha} \E[\psi(Y_t-\alpha \nabla V[\rho_t](Y_t))-\psi(Y_t)]
\end{align*}
which, complemented with the initial condition $\E[\psi(Y_0)]=\int \psi\d\mu_0$, shows that $\rho$ is a weak solution to~\eqref{eq:iii-PDE}.
\end{proof}
Regarding uniqueness of solutions of the evolution equation~\eqref{eq:iii-PDE}, we will use a fixed-point argument that directly proves existence and uniqueness, independently of the stochastic process solving~\eqref{eq:process-iii-limit}.
\begin{proposition}\label{prop:existence-uniqueness-alt}
Given $\mu_0\in \Pp_1(\RR^p)$, there exists a unique $\rho \in \Cc(\RR_+;\Pp_1(\RR^p))$ that solves~\eqref{eq:iii-PDE}. It is also the unique solution to the integral equation:
\begin{align*}
\rho_t = e^{-t/\alpha} \mu_0 +\frac{1}{\alpha} \int_0^t e^{(s-t)/\alpha}(\Id-\alpha \nabla V[\rho_s])_\# \rho_s\d s.
\end{align*}
\end{proposition}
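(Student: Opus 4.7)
The strategy is a Banach fixed-point argument directly on the integral equation, followed by verification that the unique solution is also a weak solution of the PDE.

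\emph{Step 1: Equivalence of the PDE and the integral equation.} For any test function $\psi \in \Cc^b(\RR^p)$, the weak formulation of~\eqref{eq:iii-PDE} reads
\[
\frac{\d}{\d t}\int \psi \d\rho_t + \frac{1}{\alpha}\int \psi \d\rho_t = \frac{1}{\alpha}\int \psi \d\bigl((\Id - \alpha\nabla V[\rho_t])_\#\rho_t\bigr).
\]
Multiplying by the integrating factor $e^{t/\alpha}$ makes the left-hand side $\frac{\d}{\d t}\bigl(e^{t/\alpha}\int\psi\d\rho_t\bigr)$. Integrating over $[0,t]$ and dividing by $e^{t/\alpha}$ yields the integral equation, tested against $\psi$. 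Conversely, any $\rho\in\Cc(\RR_+;\Pp_1(\RR^p))$ satisfying the integral equation is differentiable in $t$ after integration against $\psi$, and reversing the computation shows it solves~\eqref{eq:iii-PDE}. Since testing against all $\psi\in\Cc^b(\RR^p)$ determines the measure uniquely, the two formulations are equivalent.

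\emph{Step 2: Setting up the fixed-point operator.} Fix $T>0$ and define, on the complete metric space $\Ss_T = \Cc([0,T];\Pp_1(\RR^p))$ endowed with $d_T(\rho,\tilde\rho)=\sup_{t\in[0,T]} W_1(\rho_t,\tilde\rho_t)$, the operator
\[
\Psi(\rho)_t = e^{-t/\alpha}\mu_0 + \frac{1}{\alpha}\int_0^t e^{(s-t)/\alpha}(\Id - \alpha\nabla V[\rho_s])_\#\rho_s \d s.
\]
The coefficients $e^{-t/\alpha}$ and $\frac{1}{\alpha}e^{(s-t)/\alpha}$ sum to one, so $\Psi(\rho)_t$ is a probability measure; since $\nabla V$ is bounded (Lemma~\ref{lem:MF-Lip}), each pushforward lies in $\Pp_1(\RR^p)$, and $\Psi(\rho)_t$ inherits finite first moment by convexity. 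Continuity of $t\mapsto\Psi(\rho)_t$ in $W_1$ follows by bounding $W_1(\Psi(\rho)_{t+h},\Psi(\rho)_t)$ by $O(h)$ using boundedness of $\nabla V$. Thus $\Psi:\Ss_T\to\Ss_T$.

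\emph{Step 3: Contraction via a key Lipschitz bound on pushforwards.} For $\rho,\tilde\rho\in\Ss_T$, couple $\rho_s$ and $\tilde\rho_s$ by an optimal transport plan $\pi_s$; the coupling $(x-\alpha\nabla V[\rho_s](x),\,\tilde x-\alpha\nabla V[\tilde\rho_s](\tilde x))$ of the two pushforwards and Lemma~\ref{lem:MF-Lip} yield
\[
W_1\bigl((\Id-\alpha\nabla V[\rho_s])_\#\rho_s,(\Id-\alpha\nabla V[\tilde\rho_s])_\#\tilde\rho_s\bigr) \leq (1+2\alpha C)\, W_1(\rho_s,\tilde\rho_s).
\]
Combining this with the convexity of $W_1$ applied to $\Psi(\rho)_t$ and $\Psi(\tilde\rho)_t$ gives
\[
W_1(\Psi(\rho)_t,\Psi(\tilde\rho)_t) \leq \frac{1+2\alpha C}{\alpha}\int_0^t e^{(s-t)/\alpha}\,W_1(\rho_s,\tilde\rho_s)\d s.
\]
Picking $T=t_0$ small enough so that $t_0(1+2\alpha C)/\alpha < 1$, $\Psi$ is a strict contraction on $\Ss_{t_0}$ and Banach's theorem gives a unique fixed point. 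Because the contraction constant does not depend on $\mu_0$, the argument can be iterated on successive intervals $[jt_0,(j+1)t_0]$ to produce a unique $\rho\in\Cc(\RR_+;\Pp_1(\RR^p))$ solving the integral equation, hence (by Step~1) the PDE.

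\emph{Main obstacle.} The only nontrivial step is the pushforward Wasserstein estimate in Step~3; once in hand, the argument is a routine Banach fixed point. An alternative avoiding the iteration in Step~3 is to use a weighted norm $\sup_{t\in[0,T]} e^{-Kt}W_1(\rho_t,\tilde\rho_t)$ with $K$ sufficiently large, which yields a global contraction in one shot on any $[0,T]$.
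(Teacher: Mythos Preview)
Your proposal is correct and follows essentially the same approach as the paper: both set up the same integral operator, show it is a contraction on $\Cc([0,t_0];\Pp_1(\RR^p))$ for small $t_0$ via a Lipschitz bound on the pushforward $(\Id-\alpha\nabla V[\rho_s])_\#\rho_s$ combined with convexity/subadditivity of $W_1$, and then iterate. The only cosmetic differences are that you derive the PDE--integral equivalence via a direct integrating-factor computation (the paper instead shows the difference $h(t)=\int\psi\,\d(\Phi(\rho)_t-\rho_t)$ satisfies $h'=-h/\alpha$), and you obtain the pushforward estimate by a single coupling bound, whereas the paper splits it as $W_1((\Id+T)_\#\rho,(\Id+T')_\#\rho')\leq \Lip(T)W_1(\rho,\rho')+\|T-T'\|_{L^1(\rho')}$ before applying Lemma~\ref{lem:MF-Lip}.
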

\begin{proof}
Consider the operator $\Phi:\Cc(\RR_+;\Pp_1(\RR^p))\to \Cc(\RR_+;\Pp_1(\RR^p))$ defined as
$$
\Phi(\rho)_t =  e^{-t/\alpha} \mu_0 +\frac{1}{\alpha} \int_0^t e^{(s-t)/\alpha}(\Id-\alpha \nabla V[\rho_s])_\# \rho_s\d s.
$$
Let us check that this operator indeed maps into $\Cc(\RR_+;\Pp_1(\RR^p))$. First, the right-hand side of this equation should be understood in duality with bounded continuous functions, that is $\Phi(\rho)_t$ is characterized by
$$
\int_{\RR^p}  \psi \d \Phi(\rho)_t  = e^{-t/\alpha} \int_{\RR^p}  \psi \d \mu_0 + \frac{1}{\alpha} \int_0^t e^{(s-t)/\alpha}\int_{\RR^p} \psi(x-\alpha \nabla V[\rho_s](x))\d\rho_s(x)\d s.
$$
Since the right-hand side of this equation defines a continuous linear map over $\Cc^b(\RR^p)$ that is nonnegative and maps the constant function equal to $1$ to $1$, this indeed defines a probability measure. Moreover, for any $\psi\in \Cc^b(\RR^p)$, the map $t\mapsto \int \psi \Phi(\rho)_t$ is differentiable and 
\begin{align*}
\frac{\d}{\d t} \int \psi\d\Phi(\rho)_t 
&= -\frac{e^{-t/\alpha}}{\alpha}\int \psi \d\mu_0 -\frac{1}{\alpha^2} \int_0^t e^{(s-t)/\alpha}\int \psi(x-\alpha \nabla V[\rho_s](x))\d\rho_s(x)\d s\\
&\qquad +\frac{1}{\alpha} \int \psi(x-\alpha \nabla V[\rho_t](x))\d\rho_t(x)\\
&= \frac{1}{\alpha} \left[ \int \psi(x-\alpha \nabla V[\rho_t](x))\d\rho_t(x) -\int \psi\d\Phi(\rho_t)\right].
\end{align*}
Clearly, if $\Phi(\rho)=\rho$ then $\rho$ solves~\eqref{eq:iii-PDE}. On the other hand assume that $\rho$ solves~\eqref{eq:iii-PDE}; then by the previous computation, for $\psi \in \Cc^b(\RR^p)$ it holds for almost every $t\in \RR_+$,
$$
\frac{\d}{\d t} \int \psi\d\Phi(\rho)_t  = \frac{\d}{\d t} \int \psi\d\rho_t + \frac{1}{\alpha} \int \psi\d\rho_t -\frac{1}{\alpha }\int \psi\d\Phi(\rho)_t.
$$
So if we let $h(t)=\int \psi \d(\Phi(\rho)_t-\rho_t)$ this function is absolutely continuous with $h(0)=0$ and for almost every $t\in \RR_+$
$$
h'(t) = -\frac{1}{\alpha} h(t)
$$
and so $h(t)=0$. This implies that $\rho$ is a fixed point of $\Phi$. So $\rho$ solves~\eqref{eq:iii-PDE} iff $\Phi(\rho)=\rho$. We are thus looking for fixed points of $\Phi$.

Recall that $\Ss_t = \Cc([0,t];\Pp_1(\RR^p))$ is endowed with the metric $d_t(\rho,\rho')=\sup_{s\in [0,t]} W_1(\rho_s,\rho'_s)$. We will next show that $\Phi$ is a contraction of the complete metric space $(\Ss_t,d_t)$ for $t$ small enough. By the subadditivity of $W_1$ (\cite[Theorem~4.8]{villani2008optimal}), we have
\begin{align*}
W_1(\Phi(\rho)_t,\Phi(\rho')_t) &\leq  \frac{1}{\alpha} \int_0^t e^{(s-t)/\alpha}W_1((I-\alpha \nabla V[\rho_s])_\# \rho_s,(I-\alpha \nabla V[\rho'_s])_\# \rho'_s)\d s.
\end{align*}
Now for all Lipschitz maps $T,T':\RR^p\to \RR^p$ it is not difficult to show that
\begin{align*}
W_1((\Id+T)_\#\rho,&(\Id+T')_\# \rho') \\
&\leq W_1((\Id+T)_\#\rho,(\Id+T)_\# \rho')+W_1((\Id+T)_\#\rho',(\Id+T')_\# \rho')\\
&\leq \text{Lip}(T)W_1(\rho,\rho')+\Vert T-T'\Vert_{L^1(\rho')}.
\end{align*}
Coming back to our previous formula, we deduce
\begin{align*}
W_1(\Phi(\rho)_t,\Phi(\rho')_t) 
&\leq  \frac{1}{\alpha} \int_0^t e^{(s-t)/\alpha}(LW_1(\rho_s,\rho'_s)+\alpha\Vert \nabla V[\rho_s]-\nabla V[\rho'_s]\Vert_\infty)\d s
\end{align*} 
where $L$ here refers to a bound on the Lipschitz constant of any map of the form $x\mapsto -\alpha \nabla V[\rho_s](x)$ which exists under Assumption~\ref{ass:phi}. Since the last term is also bounded by ${C} W_1(\rho_s,\rho'_s)$ {by Lemma \ref{lem:MF-Lip}}, one has that  for some ${C'}>0$, 
$$
W_1(\Phi(\rho)_t,\Phi(\rho')_t) \leq {C'} \int_0^t e^{(s-t)/\alpha}W_1(\rho_s,\rho'_s)\d s.
$$
This implies 
$$
d_t(\Phi(\rho),\Phi(\rho'))\leq {C'} \int_0^t e^{(s-t)/\alpha}W_1(\rho_s,\rho'_s)\d s \leq  {C'} \alpha(1-e^{-t/\alpha}) d_t(\rho,\rho').
$$
Hence for $t$ small enough, $\Phi$ is a contraction and admits a unique fixed point. Since ${C'}$ does not depend on $\mu_0$, we can repeat the argument and get existence and uniqueness on $\RR_+$.
\end{proof}

\subsubsection{Proof of the mean-field limit}

\paragraph{Piecewise constant interpolation.} In order to formalize the sense in which $(X^{i,n})$ converges to $Y$, let us map the discrete-time processes $(X^{i,n})$ into continuous-time ones via a piecewise constant interpolation of the trajectories. For $k\in \NN$, let $t_k = \tau k$, and for $t\in [t_k,t_{k+1}[$ let
\begin{align*}
\bar X^{i,n}_t &= X^{i,n}_{k} , && \bar \mu^n_t = \frac1n \sum_{i=1}^n \delta_{\bar X^{i,n}_t}.
\end{align*}
Claim~\ref{case:jump} of the main theorem (Theorem~\ref{thm:main}) is then a consequence of the following statement:
\begin{proposition}\label{prop:jump-limit}
Assume that $\tau_n\to 0$, that $ \alpha_n \to \alpha>0$ and that $\beta_n\to 0$. For any $t\geq 0$, it holds $\E[W_1(\hat \mu^n_{\lfloor t/\tau{_n}\rfloor},\rho_t)]\xrightarrow[n\to \infty]{}0$ where $\rho$ is the unique solution to~\eqref{eq:iii-PDE}. 
\end{proposition}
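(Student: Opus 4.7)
The plan is to construct an asynchronous coupling between the Bernoulli activation clocks of the discrete particle system and independent Poisson clocks driving copies of the limit process, and to bound the resulting error at the fixed terminal time $t$. I would introduce independent Poisson processes $(N^i)_{i=1}^n$ of rate $1/\alpha$, independent of $(\xi^i)$, and define i.i.d.\ copies $(Y^i)_{i=1}^n$ of the limit process $Y$ driven by $N^i$ with $Y^i_0=\xi^i$ and the deterministic $\rho$ from~\eqref{eq:process-iii-limit}. Since the $Y^i$ are i.i.d.\ with marginal $\rho_t$, Lemma~\ref{lem:varadarajan} gives $\E[W_1(\hat\rho^n_t,\rho_t)]\to 0$ with $\hat\rho^n_t=\frac1n\sum_i\delta_{Y^i_t}$, so it is enough to control $\frac1n\sum_i \E\|X^{i,n}_{\lfloor t/\tau_n\rfloor}-Y^i_t\|_2$, which dominates $\E[W_1(\hat\mu^n_{\lfloor t/\tau_n\rfloor},\hat\rho^n_t)]$.

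Next, I would couple the Bernoulli activation sequence $\sigma^i_1<\sigma^i_2<\cdots$ of neuron $i$ (steps where $\eta^i_k>0$) with the Poisson jump times $T^i_1<T^i_2<\cdots$ of $N^i$, using that $\tau_n\cdot\mathrm{Geom}(q_n)$ converges in $L^1$ to $\mathrm{Exp}(1/\alpha)$ at rate $O(\tau_n+|\alpha_n-\alpha|)$. Over $[0,t]$ this yields, in expectation, a vanishing jump-count discrepancy $|N^i(t)-\#\{j:\sigma^i_j\tau_n\le t\}|$ and, on the event of agreement, matched jump times differing by $O(\tau_n+|\alpha_n-\alpha|)$ uniformly over $j\le O(t/\alpha)$. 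Decomposing the cumulative difference $X^{i,n}_{\lfloor t/\tau_n\rfloor}-Y^i_t$ as a sum over matched jumps produces four error sources: (i) the mean-field error $\nabla V[\hat\mu^n_k]-\nabla V[\rho_{k\tau_n}]$, controlled by $W_1(\hat\mu^n_k,\rho_{k\tau_n})$ via Lemma~\ref{lem:MF-Lip}; (ii) the dropout-induced centered noise in $\bm f^\eta$ coming from the non-activated $\eta^j$, $j\neq i$, of conditional variance $O(\beta_n)\to 0$; (iii) the learning-rate gap $|\alpha_n-\alpha|$; and (iv) the jump-time mismatch, converted into spatial error through the Lipschitz bound $W_1(\rho_s,\rho_{s'})\le L|s-s'|$ from Proposition~\ref{prop:posed-iii-sde}. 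A discrete Gronwall argument on $h^n_t=\frac1n\sum_i\E\|X^{i,n}_{\lfloor t/\tau_n\rfloor}-Y^i_t\|_2$, with the typical jump count $\Theta(t/\alpha)$ per neuron playing the role of the time variable, then yields $h^n_t\to 0$, and the triangle inequality closes the proof.

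The main obstacle will be that this approach does not give uniform-in-time pathwise control: because the neurons fire at independent random times, the clock-coupling and mean-field errors cannot be tamed simultaneously in $\sup_{s\le T}$ norm by a single Gronwall argument, in contrast with case~\ref{case:WGF}. The Bernoulli--Poisson coupling must therefore be calibrated to a fixed terminal time $t$ from the outset, and only marginal convergence at that $t$ is obtained, consistently with the authors' remark that pathwise convergence in case~\ref{case:jump} is left open.
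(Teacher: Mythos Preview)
Your proposal is correct and follows essentially the same route as the paper: asynchronous Bernoulli--Poisson clock coupling via the $\tau_n\cdot\mathrm{Geom}(q_n)\to\mathrm{Exp}(1/\alpha)$ quantile coupling, the same four error sources, and a discrete Gr\"onwall closed through the Lipschitz bound on $t\mapsto\rho_t$ from Proposition~\ref{prop:posed-iii-sde}. The paper's implementation differs only in one technical convenience worth adopting: rather than running Gr\"onwall on the time-synchronized $h^n_t=\frac1n\sum_i\E\|X^{i,n}_{\lfloor t/\tau_n\rfloor}-Y^i_t\|_2$ with jump count as the index, it works with the step-indexed asynchronous error $e^{i,n}_k=X^{i,n}_k-Y^i_{T^i_{M^i_k}}$ (comparing the processes after the \emph{same number of jumps}), obtains the recursion $\E\|e^{i,n}_{k+1}\|_2\le(1+\alpha Cq_n)\E\|e^{i,n}_k\|_2+\alpha q_n C\,\E W_1(\hat\mu^n_k,\hat\rho^n_{k\tau_n})+o(q_n)$ by accounting for the jump probability $q_n$ at each deterministic step, and handles the resynchronization $\E\|Y^i_{T^i_{M^i_k}}-Y^i_{\tau_n k}\|_2=O(\tau_n+|\alpha_n-\alpha|)$ separately---this keeps the mean-field term indexed by a deterministic $k$ common to all particles, which is what makes the averaging over $i$ clean.
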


\begin{proof} For this proof, we will build a coupling between the discrete and the limit dynamics and estimate the errors in an asynchronous way. This requires first to reformulate the discrete dynamics in a form analogous to the limit one.
 \paragraph{Step 1: Reformulation of the discrete dynamics.}
 The first step of this proof is identical to the beginning of the proof of Proposition~\ref{prop:discrete-limit}. The update equation~\eqref{eq:GD-dropout-weights} can be decomposed, for $k\in \NN$, as
 \begin{align*}
  X^{i,n}_{k+1} &= X^{i,n}_{k}  -\tau_n \cdot (1+\eta^i_{k+1})D\bm{\phi}(X^{i,n}_{k})^\top \left(\frac1n \sum_{j=1}^n (1+\eta^j_{k+1}) \bm{\phi}(X^{j,n}_{k})-\bm{y}\right)\\
  &=X^{i,n}_{k}  -\tau_n \cdot (1+\eta^i_{k+1}) \left(\nabla V[\hat \mu^n_k](X^{i,n}_{k}) +\gamma^{i,n}_{k+1}\right)
 \end{align*}
 where we have gathered errors terms in $\gamma^{i,n}_k$ defined  as 
\begin{align*}
\gamma^{i,n}_k &= \frac1n \sum_{j=1}^n \eta^{j}_k D\bm \phi(X^{i,n}_{k-1})^\top \bm \phi(X^{j,n}_{k-1})
\end{align*}
We have under Assumption~\ref{ass:phi}
\begin{align}\label{eq:proof-iii-gamma}
\E \Vert\gamma^{i,n}_k\Vert_2\leq \sqrt{\E \Vert\gamma^{i,n}_k\Vert_2^2} =\sqrt{{O\Big(\frac{n\E [\eta^2]}{n^2}}\Big)} = O(\sqrt{\beta_n})\to 0,
\end{align}
which suggests that these quantities indeed are error terms, as it will be confirmed in the rest of the proof. 

Let $\Delta K^i_\ell$ be the number of steps between jump number $\ell$ and $\ell+1$ {of the $i$-th neuron}, that is if $\eta^i_k>0$ is the $\ell$-th nonnegative term in the sequence $(\eta^i_k)_k$, we set $\Delta K^i_\ell = \min\{k'\;;\; \eta^i_{k+k'}>0\}$. By construction, the variables $\Delta K^i_1,\Delta K^i_2\dots$ are independent
 $\text{Geom}(q)$ random variables, with support $\{1,2,\dots\}$. Also let
$$
K^i_0 =0,\qquad K^i_j = K_0^i+ \sum_{\ell=1}^{j} \Delta K^i_\ell, \qquad M^i_k = \max\{j\;;\; K^i_j\leq k\}
$$
Here $(K^i_j)$ for $j\geq 1$ is the index of the (landing of the) $j$-th jump and corresponds to the $j$-th time $(\eta^i_k)_k$ takes a positive value{, while $M^i_k$ is the number of jumps landing before or on step $k$}.

\paragraph{Step 2: Coupling with the limit dynamics.} Now consider $(Y^1,Y^2,\dots)$ independent copies of $Y$ (the solution to~\eqref{eq:process-iii-limit}). Let us couple these copies with the discrete dynamics as follows: (i) we take $Y^i_0=\xi^i$ and (ii) we take the coupling $(\Delta T^i_j, \Delta K^i_j)$ given by Lemma~\ref{lem:jump-coupling}. For $k\in [K^i_\ell: K^i_{\ell+1}-1]$ (hence it holds $\ell=M^i_k$), we define 
$$
e^{i,n}_k = X^{i,n}_k - Y^i_{T^i_\ell}. 
$$
Since $Y^i$ is constant on the interval $[T^i_\ell, T^{i}_{\ell+1}[$, we could have replaced in this definition $Y_{T^i_\ell}$ by $Y_t$ for any $t$ in this interval without consequences. In words, $e^{i,n}_k$ compares the coupled processes $X^{i,n}$ and $Y^i$ after the same number of jumps. Note that, \emph{in expectation}, this quantity behaves similarly as the difference between these processes \emph{at the same time} since
\begin{align}\label{eq:proof-iii-slight}
\E \Vert X^{i,n}_k - Y^i_{\tau_n k}\Vert_2 &\leq \E \Vert X^{i,n}_k-Y^i_{T^i_{M^i_k}}\Vert_2 + \E \Vert Y^i_{T^i_{M^i_k}} - Y^i_{\tau_n k}\Vert_2 \leq \E \Vert e^{i,n}_k\Vert_2 + {O(\tau_n + |\alpha_n - \alpha|)}
\end{align}
where the estimate $\E \Vert Y^i_{T^i_{M^i_k}} - Y^i_{\tau_n k}\Vert_2 {= \E \Vert Y^i_{T^i_{M^i_k}} - Y^i_{T^i_{N^i_{\tau_n k}}}\Vert_2} =O(\tau_n + |\alpha_n - \alpha|)$ is proved in Lemma~\ref{lem:jump-times-increment}. {However, the comparator $e^{i,n}_k$ will prove simpler to work with in the following.}

If $\eta^i_{k+1}=-1$, then it holds $e^{i,n}_{k+1} = e^{i,n}_k$ (there is no jump), otherwise $\eta^i_{k+1}=(1-q_n)/q_n$ and $k+1=K^i_{\ell+1}$ for $\ell = M^i_k$ so it holds
\begin{align*}
e^{i,n}_{k+1} &= e^{i,n}_k -  \frac{\tau_n}{q_n} \left(\nabla V[\hat \mu^n_k](X^{i,n}_{k}) +\gamma^{i,n}_{k+1}\right) + \alpha \nabla V[\rho_{T^i_{\ell+1}}](Y^{i}_{T^i_{\ell}})
\end{align*}
hence 
\begin{align*}
\Vert e^{i,n}_{k+1}\Vert_2 &\leq \Vert e^{i,n}_k\Vert_2  +  \alpha C \left(W_1(\hat \mu^n_k, \rho_{T^i_{\ell+1}}) + \Vert X^{i,n}_{k}- Y^{i}_{T^i_{\ell}}\Vert_2 +\Vert \gamma^{i,n}_{k+1}\Vert_2\right) \\
&\qquad + (C+\Vert \gamma^{i,n}_{k+1}\Vert_2) \vert \alpha_n -\alpha\vert.
\end{align*}
Moreover, the Wasserstein distance can be decomposed as 
\begin{align*}
W_1(\hat \mu^n_k, \rho_{T^i_{\ell+1}}) &\leq W_1(\hat \mu^n_k, \hat \rho^n_{k\tau_n}) + W_1(\hat \rho^n_{k\tau_n}, \rho_{k\tau_n})+W_1(\rho_{k\tau_n},\rho_{T^i_{\ell+1}}).
\end{align*}
Also, by the Lipschitz property of Proposition~\ref{prop:posed-iii-sde}, it holds 
$$
W_1(\rho_{k\tau_n},\rho_{T^i_{\ell+1}})\leq L|k\tau_n- T^i_{\ell+1}| \leq L\cdot |\tau_n (K^i_{\ell+1}-1)- T^i_{\ell+1}| \leq L\cdot |\tau_n K^i_{\ell+1}- T^i_{\ell+1}|  +o(1).
$$
Hence, still when $k+1=K^i_{\ell+1}$, it holds (assuming $|\alpha_n-\alpha|\leq \alpha C$, which holds eventually):
\begin{multline*}
\Vert e^{i,n}_{k+1}\Vert_2 \leq \Vert e^{i,n}_k\Vert_2  +  \alpha C \Big(W_1(\hat \mu^n_k, \hat \rho^n_{k\tau_n})+ W_1( \hat \rho^n_{k\tau_n}, \rho_{k\tau_n}) \\
\qquad \qquad + L\cdot \sup_{\ell\leq M^i_k}|\tau_n K^i_{\ell+1}- T^i_{\ell+1}| + \Vert e^{i,n}_k\Vert_2  +2\Vert \gamma^{i,n}_{k+1}\Vert_2\Big) +o(1)
\end{multline*}
Now, for a fixed (deterministic) $k$, accounting for the probability $q_n$ of jumps, it follows taking expectation and using~\eqref{eq:proof-iii-gamma} that
\begin{align*}
\E \Vert e^{i,n}_{k+1}\Vert_2 &\leq  \E \Vert e^{i,n}_k\Vert_2 \\
&\,  +  \alpha q_n  C \E\left(W_1(\hat \mu^n_k, \hat \rho^n_{k\tau_n}) + W_1( \hat \rho^n_{k\tau_n}, \rho_{k\tau_n})+L\cdot |\tau_n K^i_{\ell+1}- T^i_{\ell+1}| + \Vert e^{i,n}_k\Vert_2\right) +o(q_n)\\
&\leq  (1+\alpha Cq_n )\E \Vert e^{i,n}_k\Vert_2  +  \alpha q_n C \E W_1(\hat \mu^n_k, \hat \rho^n_{k\tau_n})   + o(q_n)
\end{align*}
where the second inequality uses Corollary~\ref{corollary:sup-jump-times-gap} (that shows $\E \sup_{\ell\leq M^i_k} |\tau_n K^i_{\ell+1}- T^i_{\ell+1}|=o(1)$) and  Lemma~\ref{lem:empirical-iii}  (that shows $\E[\sup_{s\leq t} W_1(\hat \rho^n_{s}, \rho_{s})] \xrightarrow[n\to \infty]{} 0$). We note that the term $o(q_n)$ is uniform for $k\in [0,t/\tau_n]$.

Consider now the quantity $ h^n_k= \frac1n \sum_{i=1}^n \E \Vert e^{i,n}_k\Vert_2$ which satisfies, by~\eqref{eq:proof-iii-slight}, 
\begin{align*}
\E W_1(\hat \mu^n_k, \hat \rho^n_{k\tau_n})\leq \frac1n \sum_{i=1}^n \E\Vert X^{i,n}_k - Y^i_{k\tau_n}\Vert_2\leq h^n_k +o(1).
\end{align*}
 From the previous inequality, we deduce
\begin{align*}
h^n_{k+1} \leq (1+2\alpha Cq_n )h^n_k + o(q_n).
\end{align*}
Since $h^n_0=0$, it follows that
$$
\sup_{k\leq t/\tau_n} h^n_k \leq o(q_n) \sum_{i=0}^{k} (1+2\alpha C q_n)^i \leq \frac{o(q_n)}{\tau_n}(1+2C\tau_n)^{t/\tau_n} {= o\Big(\frac{1}{\alpha_n}\Big)} \xrightarrow[n\to \infty]{} 0.
$$
This shows that for any $t\geq 0$,
\begin{align*}
\E W_1(\hat \mu^n_{\lfloor t/\tau_n\rfloor },\rho_t) &\leq \E W_1(\hat \mu^n_{\lfloor t/\tau_n\rfloor },\hat \rho^n_{\tau_n\lfloor t/\tau_n\rfloor }) + \E W_1(\hat \rho^n_{\tau_n\lfloor t/\tau_n\rfloor }, \rho_{\tau_n\lfloor t/\tau_n\rfloor })+\E W_1(\rho_{\tau_n\lfloor t/\tau_n\rfloor }, \rho_t)\\
&\leq h^n_{\lfloor t/\tau_n\rfloor}+o(1)+ L | t- \tau_n \lfloor t/\tau_n\rfloor | \xrightarrow[n\to \infty]{} 0.\qedhere
\end{align*}
\end{proof}

 \subsubsection{Auxiliary lemmas}
In the following lemma, we consider $\rho$ and $\hat\rho^n$ as defined in the proof of Proposition~\ref{prop:jump-limit}.
\begin{lemma}\label{lem:empirical-iii}
For any $t>0$, it holds $\E [ \sup_{s\leq t} W_1(\hat \rho^n_s,\rho_s)] \to 0$.
 \end{lemma}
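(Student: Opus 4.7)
The plan is to reduce the supremum in time to a finite maximum by discretizing $[0,t]$ on a grid, control oscillation within each cell using that $Y$ is a pure jump process with bounded jump size, and use the pointwise Wasserstein LLN plus Lipschitz continuity of $s\mapsto \rho_s$ to conclude.

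Concretely, fix $\delta>0$ and let $s_k = k\delta$ for $k=0,\dots,K$ with $K=\lceil t/\delta\rceil$, adjusting $s_K=t$. For any $s\in[s_{k-1},s_k]$ the triangle inequality gives
\[
W_1(\hat\rho^n_s,\rho_s) \le W_1(\hat\rho^n_s,\hat\rho^n_{s_{k-1}}) + W_1(\hat\rho^n_{s_{k-1}},\rho_{s_{k-1}}) + W_1(\rho_{s_{k-1}},\rho_s).
\]
The last term is bounded by $L\delta$ by the Lipschitz property of $s\mapsto\rho_s$ established in Proposition~\ref{prop:posed-iii-sde}. For the first term, using the index coupling and the fact that $Y^i$ is piecewise constant with jumps of magnitude at most $\alpha C$ (since $\|\nabla V[\rho]\|_2\le C$ by Lemma~\ref{lem:MF-Lip}), one has
\[
\sup_{s\in[s_{k-1},s_k]}\|Y^i_s-Y^i_{s_{k-1}}\|_2 \le \alpha C\, (N^i_{s_k}-N^i_{s_{k-1}}),
\]
where $N^i$ is the Poisson process driving the $i$-th particle. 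Hence
\[
\sup_{s\in[s_{k-1},s_k]} W_1(\hat\rho^n_s,\hat\rho^n_{s_{k-1}}) \le \frac{\alpha C}{n}\sum_{i=1}^n (N^i_{s_k}-N^i_{s_{k-1}}).
\]

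Putting these together and maximizing over $k$ yields
\[
\E\Big[\sup_{s\le t}W_1(\hat\rho^n_s,\rho_s)\Big] \le \sum_{k=0}^{K-1}\E[W_1(\hat\rho^n_{s_k},\rho_{s_k})] + \alpha C\,\E\Big[\max_{k}\frac{1}{n}\sum_{i=1}^n (N^i_{s_k}-N^i_{s_{k-1}})\Big] + L\delta.
\]
For fixed $K$, each of the $K$ summands in the first term vanishes as $n\to\infty$ by Lemma~\ref{lem:varadarajan} applied in $\Pp_1(\RR^p)$ (noting $\rho_{s_k}\in\Pp_1(\RR^p)$). For the middle term, each increment $N^i_{s_k}-N^i_{s_{k-1}}$ is $\mathrm{Poisson}(\delta/\alpha)$-distributed, so for each fixed $k$ the empirical mean converges in $L^1$ to $\delta/\alpha$; since there are only $K$ indices, the max also converges in $L^1$ to $\delta/\alpha$ (dominated by the integrable sum $\sum_k\frac1n\sum_i(N^i_{s_k}-N^i_{s_{k-1}})$). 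Thus
\[
\limsup_{n\to\infty}\E\Big[\sup_{s\le t}W_1(\hat\rho^n_s,\rho_s)\Big] \le \alpha C\cdot\frac{\delta}{\alpha} + L\delta = (C+L)\delta.
\]
Since $\delta>0$ was arbitrary, letting $\delta\to 0$ concludes the proof.

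The only delicate point is the middle term: one cannot simply bound $\max_k\le\sum_k$, which would produce an $O(1)$ loss. The argument relies crucially on the fact that for fixed $K$ each of the $K$ empirical averages converges to the common mean $\delta/\alpha$, so their maximum also converges. Everything else is a direct combination of the Lipschitz property of $\rho$ (Proposition~\ref{prop:posed-iii-sde}), the uniform bound on $\nabla V$ (Lemma~\ref{lem:MF-Lip}), and the pointwise Wasserstein LLN (Lemma~\ref{lem:varadarajan}).
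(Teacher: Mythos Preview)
Your proof is correct and follows the same time-discretization strategy as the paper: reduce the supremum to a finite grid, use Lemma~\ref{lem:varadarajan} at each grid point, and control the oscillation between grid points. Your treatment is in fact slightly more careful than the paper's: where the paper asserts that $t\mapsto\hat\rho^n_t$ is Lipschitz ``via the same argument'' as~\eqref{eq:proof-iii-lip} (which only yields Lipschitz continuity \emph{in expectation}, not pathwise), you instead bound the pathwise oscillation of $\hat\rho^n$ explicitly via the Poisson increments $N^i_{s_k}-N^i_{s_{k-1}}$ and then show that the empirical average of these increments converges in $L^1$ to $\delta/\alpha$, which cleanly closes the argument.
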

\begin{proof}
We have seen in~\eqref{eq:proof-iii-lip} that $t\mapsto \rho_t$ is Lipschitz continuous for $W_1$, and the same holds for $t\mapsto \hat \rho^n_t$ via the same argument. Let us denote $L$ a common Lipschitz constant. For $\varepsilon>0$, we thus have 
$$
\sup_{s\leq t} W_1(\hat \rho^n_s,\rho_s) \leq \varepsilon +  \sup_{k\in [0: \lceil Lt/\varepsilon \rceil] } W_1(\hat \rho^n_{s_k},\rho_{s_k})
$$
where $s_k = k\varepsilon /L$. Taking expectations, it follows
$$
\E\Big[ \sup_{s\leq t} W_1(\hat \rho^n_s,\rho_s) \Big]\leq \varepsilon + \sum_{k=0}^{\lceil Lt/\varepsilon \rceil]} \E[ W_1(\hat \rho^n_{s_k},\rho_{s_k})] \xrightarrow[n\to \infty]{} \varepsilon
$$
by Lemma~\ref{lem:varadarajan}. Since $\varepsilon$ is arbitrary, this proves the claim. 
\end{proof}

\begin{lemma}[Jump times coupling]\label{lem:jump-coupling} Let $\tau_n,q_n>0$ be such that $\tau_n\to 0$ and $\alpha_n=\frac{\tau_n}{q_n} \to \alpha>0$. If $X\sim \mathrm{Exp}(1/\alpha)$ and $Y_n/\tau_n \sim \mathrm{Geom}(q_n)$ then there exists a sequence of couplings $(X,Y_n)$ such that $\lim_{n\to \infty}\E|Y_n-X|=0$. Moreover, for any $x>0$, these couplings satisfy $\lim_{n\to \infty} \mathrm{ess\text{-}sup} \big[|Y_n-X| \big| X\leq x\big] = 0$.
\end{lemma}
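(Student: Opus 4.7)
The plan is to construct the coupling explicitly via the inverse-CDF (quantile) transform driven by a common uniform random variable. This choice yields a pathwise bound on $|Y_n - X|$, from which both the $L^1$ statement and the conditional essential-supremum statement drop out at once. An expectation-optimal (Wasserstein) coupling would recover only the first claim, since optimal transports need not align conditional upper tails pointwise; this is the key design decision and, to my mind, the only non-routine point in the argument.

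Concretely, I would let $U \sim \mathrm{Unif}(0,1)$ and set
\[
X := -\alpha \log(1-U), \qquad Y_n := \tau_n \left\lceil \frac{\log(1-U)}{\log(1-q_n)} \right\rceil.
\]
These are precisely the generalized inverses of the CDFs of $\mathrm{Exp}(1/\alpha)$ and of $\tau_n \cdot \mathrm{Geom}(q_n)$ evaluated at $U$, so the marginals are correct. Writing $L := -\log(1-U) \sim \mathrm{Exp}(1)$, one has $X = \alpha L$ and $Y_n \in [\beta_n L,\, \beta_n L + \tau_n]$ where $\beta_n := \tau_n / (-\log(1-q_n))$.

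Since the hypotheses $\tau_n \to 0$ and $\tau_n/q_n \to \alpha > 0$ force $q_n \to 0$, the Taylor expansion $-\log(1-q_n) = q_n + O(q_n^2)$ combined with $\alpha_n \to \alpha$ yields $\beta_n = \alpha_n(1 + O(q_n)) \to \alpha$. The pathwise estimate
\[
|Y_n - X| \leq L\,|\beta_n - \alpha| + \tau_n
\]
then has expectation at most $|\beta_n - \alpha| + \tau_n \to 0$ (using $\E L = 1$), proving the first claim. For the second, note that $\{X \leq x\} = \{L \leq x/\alpha\}$; on this event the right-hand side is dominated by the deterministic quantity $(x/\alpha)\,|\beta_n - \alpha| + \tau_n$, which vanishes as $n \to \infty$. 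This bounds the conditional essential supremum and completes the proof.
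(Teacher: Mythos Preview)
Your proof is correct and, despite the different framing via the quantile transform, constructs exactly the same coupling as the paper: writing $L=-\log(1-U)$, both approaches set $X=\alpha L$ and $Y_n=\tau_n\lceil \beta_n L/\tau_n\rceil$ with $\beta_n=\tau_n/(-\log(1-q_n))$, and both derive the identical pathwise bound $|Y_n-X|\le |\beta_n-\alpha|\,L+\tau_n$. The paper simply starts from $X$ and rescales to obtain $X'_n=\beta_n L$ before discretizing, whereas you generate both from a common uniform; the resulting joint law and the subsequent estimates are the same.
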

\begin{proof}
For $\tau>0$, the random variable $Z=\lceil X/\tau \rceil$ has distribution $ \mathrm{Geom}\left(1- e^{-\tau/\alpha} \right)$. Indeed, for $k = 1, 2, \dots$ we have
\begin{align*}
    P(Z \geq k) = P( X > (k-1)\tau ) = \exp(-(k-1) \tau/\alpha) = \left(e^{-\tau/\alpha}\right)^{k-1}
\end{align*}
Recall that a random variable $\zeta$ has distribution $\mathrm{Geom}(\pi)$ iff for $k=1,2,\dots$ it holds that
\begin{align*}
    P(\zeta \geq k) = \sum_{j=k}^\infty (1-\pi)^{j-1} \pi = (1-\pi)^{k-1} \pi \sum_{j=0}^\infty (1-\pi)^j = (1-\pi)^{k-1}.
\end{align*}
Hence, $Z \sim \mathrm{Geom}\left(1- e^{-\tau/\alpha} \right)$. Now consider $X^\prime_n = - X \cdot \tau_n/ (\alpha\log(1-q_n) )$ and $Y_n = \tau_n \lceil X_n^\prime/\tau_n \rceil$. By construction, $X_n^\prime \sim \mathrm{Exp}\left( 
- \log(1-q_n)/\tau_n \right)$ and $Y_n/\tau_n\sim \mathrm{Geom}(q_n)$. Moreover, we have
\begin{align*}
    |X - Y_n| &\leq  |X-X^\prime_n| +  |X^\prime_n - \tau_n \lceil X^\prime_n/\tau_n\rceil|
    \\
    &\leq \left|1-\frac{\tau_n}{-\alpha \log(1-q_n)}\right| |X| + \tau_n 
    \\
    &=o(1) |X| + \tau_n.
\end{align*}
The first claim follows by taking the expectation, the second by taking the essential-supremum conditioned on $X\leq x$.
\end{proof}

\begin{lemma}\label{lem:jump-times-gap}
If $(\Delta T_l, \Delta K_l)$ follows the coupling in Lemma \ref{lem:jump-coupling} the following inequalities hold for large enough $n$
\begin{equation*}
   \Delta T_l \frac{\alpha_n}{(1+q_n)\alpha} \leq \tau_n \Delta K_l < \Delta T_l \frac{\alpha_n}{\alpha} + \tau_n.
\end{equation*}
Moreover, for any $j\leq N_T$ and large enough $n$ we have
\begin{equation*}
    T_j\frac{\alpha_n}{(1+q_n)\alpha} \leq \tau_n K_j < T_j \frac{\alpha_n}{\alpha} + j\tau_n. 
\end{equation*}
In particular, for $n$ large enough, and any $k\leq T/\tau_n$,
\begin{equation*}
M_k = \sup\{ j\;;\; K_j\leq k\} \leq N_{2\tau_nk} = \sup\{ j\;;\; T_j\leq 2\tau_nk\}.
\end{equation*}
\end{lemma}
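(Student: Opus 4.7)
The proof is a direct computation starting from the explicit coupling constructed in Lemma~\ref{lem:jump-coupling}. Recall that, with the identifications $\Delta T_l = X$ and $\tau_n \Delta K_l = Y_n$, the coupling reads
\[
\tau_n \Delta K_l = \tau_n \Big\lceil \tfrac{X'_n}{\tau_n} \Big\rceil, \qquad X'_n = -\Delta T_l \cdot \frac{\tau_n}{\alpha \log(1-q_n)}.
\]
The plan is (i) bound $\tau_n \Delta K_l$ pointwise from above and below by sandwiching $\lceil x \rceil$ between $x$ and $x+1$ and controlling the prefactor $-\tau_n/(\alpha \log(1-q_n))$ by $\alpha_n/\alpha$, (ii) sum these per-jump bounds to obtain the statement for $T_j$ and $\tau_n K_j$, and (iii) deduce the comparison $M_k \leq N_{2\tau_n k}$ from the lower bound on $\tau_n K_j$.

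For the upper bound, I would use $\tau_n \Delta K_l < X'_n + \tau_n$ and reduce the claim $-\tau_n/(\alpha\log(1-q_n)) \leq \alpha_n/\alpha = \tau_n/(\alpha q_n)$ to the inequality $-\log(1-q_n) \geq q_n$, which holds for all $q_n \in (0,1)$ via the expansion $-\log(1-q) = q + q^2/2 + q^3/3 + \dots$. For the lower bound, I would use $\tau_n \Delta K_l \geq X'_n$ and reduce the claim to $-\log(1-q_n) \leq (1+q_n)q_n$; rearranging with the same expansion this amounts to $\tfrac12 + \tfrac{q_n}{3} + \tfrac{q_n^2}{4} + \dots \leq 1$, which fails in general but holds for $q_n$ sufficiently small. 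Since we are in the jump-process regime where $q_n = \tau_n/\alpha_n \to 0$ (as $\tau_n \to 0$ and $\alpha_n \to \alpha > 0$), this holds for all large enough $n$.

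Summing the per-step inequalities from $l=1$ to $j$, using $T_j = \sum_{l=1}^j \Delta T_l$ and $K_j = \sum_{l=1}^j \Delta K_l$ (since $K_0 = 0$), gives the stated bounds on $\tau_n K_j$: the upper bound picks up the additive term $j \tau_n$, while the multiplicative factor $\alpha_n/((1+q_n)\alpha)$ carries through the sum unchanged in the lower bound.

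For the last assertion, by definition of $M_k$ we have $\tau_n K_{M_k} \leq \tau_n k$. Combining with the lower bound applied to $j = M_k$ gives
\[
T_{M_k} \leq \frac{(1+q_n)\alpha}{\alpha_n}\, \tau_n k.
\]
Since $q_n \to 0$ and $\alpha_n \to \alpha > 0$, the prefactor $(1+q_n)\alpha/\alpha_n$ tends to $1$ and is therefore at most $2$ for all $n$ large enough, yielding $T_{M_k} \leq 2 \tau_n k$ and hence $M_k \leq N_{2\tau_n k}$ by the definition of $N$. No real obstacle is expected; the only subtle point is ensuring that $n$ is chosen large enough so that both the inequality $-\log(1-q_n) \leq (1+q_n)q_n$ and the bound $(1+q_n)\alpha/\alpha_n \leq 2$ simultaneously hold, which is immediate under the regime assumptions of case~\ref{case:jump}.
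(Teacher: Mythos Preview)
The proposal is correct and follows essentially the same approach as the paper's proof: both start from the explicit coupling formula, sandwich $-\log(1-q_n)$ between $q_n$ and $q_n(1+q_n)$ (for small $q_n$) via the power series, combine this with the elementary bounds $x \leq \lceil x \rceil < x+1$, sum over jumps, and deduce $M_k \leq N_{2\tau_n k}$ from the lower bound once $(1+q_n)\alpha/\alpha_n \leq 2$.
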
 

\begin{proof}
From the Taylor expansion $-\log(1-q_n) = q_n + q_n^2 /2 + O(q_n^3)$ it follows that for large enough $n$ 
$$
    \frac{1}{\alpha (q_n + q_n^2)} \leq 
    \frac{1}{-\alpha \log(1-q_n)} <
    \frac{1}{\alpha q_n}.
$$
Now recalling that $\tau_n \Delta K_l = \tau_n \lceil -\Delta T_l/ (\alpha \log(1-q_n)) \rceil$ we get
\[
    \tau_n \Delta K_l \leq \tau_n \left\lceil \frac{\Delta T_l}{\alpha_n q_n} \cdot\frac{\alpha_n}{\alpha} \right\rceil \leq \tau_n \left( \frac{\Delta T_l}{\alpha_n q_n} \cdot\frac{\alpha_n}{\alpha} + 1 \right) = \tau_n +  \Delta T_l \frac{\alpha_n}{\alpha}
\]
and
\[
\tau_n \Delta K_l \geq 
\tau_n \left \lceil \frac{\Delta T_l}{\alpha (q_n + q_n^2)} \right\rceil \geq
\Delta T_l \frac{\alpha_n}{(1+q_n)\alpha}.
\]
For the last expression, we use
$$
M^n_k =\sup\{ j\;;\; \tau_n K^n_j\leq \tau_n k\}\leq  \sup\{ j\;;\; T_j\frac{\alpha_n}{(1+q_n)\alpha}\leq \tau_n k\}= N_{\tau_n k \frac{(1 + q_n) \alpha}{\alpha_n}}\leq N_{2\tau_n k}
$$
for $n$ large enough.
\end{proof}

\begin{corollary}\label{corollary:sup-jump-times-gap}
It holds
    $$
    \E \sup_{l\leq M_{T/\tau_n}} |\tau_n K_{l+1} - T_{l+1}| = o(1)
    .$$
\end{corollary}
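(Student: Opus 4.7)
The plan is to extract from Lemma~\ref{lem:jump-times-gap} a pointwise bound on $|\tau_n K_{l+1}-T_{l+1}|$ that is linear in $T_{l+1}$ with a vanishing coefficient, then control the uniform supremum by using the domination $M_{T/\tau_n}\leq N_{2T}$ and the fact that the first arrival time after $2T$ has finite expectation.

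First, the two-sided inequality of Lemma~\ref{lem:jump-times-gap} gives, for any $j\geq 1$ and $n$ large enough,
\begin{align*}
-T_{j}\left(1-\frac{\alpha_n}{(1+q_n)\alpha}\right) \;\leq\; \tau_n K_j - T_j \;<\; T_j\left(\frac{\alpha_n}{\alpha}-1\right) + j\tau_n.
\end{align*}
Setting $\eps_n \coloneqq \max\big(|\alpha_n/\alpha - 1|, |1-\alpha_n/((1+q_n)\alpha)|\big)$, we have $\eps_n \to 0$ because $\alpha_n\to\alpha$ and $q_n\to 0$ (the latter follows from $\tau_n=\alpha_n q_n\to 0$ and $\alpha_n\to \alpha>0$). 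Hence pointwise
\begin{align*}
|\tau_n K_{l+1} - T_{l+1}| \;\leq\; \eps_n\, T_{l+1} + (l+1)\tau_n.
\end{align*}

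Next, taking the supremum over $l\leq M_{T/\tau_n}$ and using that $T_j$ is nondecreasing together with the domination $M_{T/\tau_n}\leq N_{2T}$ from Lemma~\ref{lem:jump-times-gap}, we obtain
\begin{align*}
\sup_{l\leq M_{T/\tau_n}} |\tau_n K_{l+1}-T_{l+1}| \;\leq\; \eps_n\, T_{N_{2T}+1} + (N_{2T}+1)\tau_n.
\end{align*}
Taking expectation, it remains to show that $\E[T_{N_{2T}+1}]<\infty$ and that $\E[N_{2T}]<\infty$. The latter is simply $\E[N_{2T}]=2T/\alpha$ since $N_{2T}\sim \mathrm{Poisson}(2T/\alpha)$. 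For the former, by the memoryless property of the exponential interarrival times, $T_{N_{2T}+1}-2T$ is an $\mathrm{Exp}(1/\alpha)$ random variable, so $\E[T_{N_{2T}+1}] = 2T+\alpha$.

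Combining these, we conclude
\begin{align*}
\E \sup_{l\leq M_{T/\tau_n}} |\tau_n K_{l+1}-T_{l+1}| \;\leq\; \eps_n(2T+\alpha) + (2T/\alpha + 1)\tau_n \;\xrightarrow[n\to\infty]{}\; 0.
\end{align*}
The only mildly delicate point is the control of $\E[T_{N_{2T}+1}]$; everything else is a direct exploitation of the deterministic bounds from Lemma~\ref{lem:jump-times-gap}.
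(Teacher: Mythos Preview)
Your proof is correct and follows essentially the same route as the paper: extract a pointwise bound from Lemma~\ref{lem:jump-times-gap}, dominate the supremum via $M_{T/\tau_n}\leq N_{2T}$, and observe that $\E[N_{2T}]$ and $\E[T_{N_{2T}+1}]$ are finite. Your treatment is arguably cleaner in two minor respects---you replace the paper's $\max$ bound by the simpler sum $\eps_n T_{l+1}+(l+1)\tau_n$, and you compute $\E[T_{N_{2T}+1}]=2T+\alpha$ explicitly via memorylessness rather than leaving it as an $O(1)$.
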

\begin{proof}
By Lemma \ref{lem:jump-times-gap} we have
    $$
        |\tau_n K_{l+1} - T_{l+1}| \leq \max \left\{ T_{l+1} \Bigg| \frac{\alpha_n}{\alpha(1+q_n)}-1 \Bigg|,(l+1) \tau_n + T_{l+1}|\alpha_n/\alpha-1| 
        \right\}.
    $$
Taking supremum and expectation we get
\begin{multline*}
    \E\sup_{l\leq M_{T/\tau_n}} |\tau_n K_{l+1} - T_{l+1}| \leq
    \\
    \E\max \left\{o(1)T_{M_{\lfloor T/\tau_n \rfloor}+1},
    \tau_n (M_{\lfloor T/\tau_n \rfloor}+1) +  o(1) T_{M_{\lfloor T/\tau_n \rfloor}+1} \right\} = o(1).
\end{multline*}
The last identity holds by observing that, by Lemma \ref{lem:jump-times-gap}, for $k:= \lfloor T/\tau_n \rfloor$, $M_k \leq N_{2 \tau_n k}$, hence
\[
\E(M_{k}+1) \leq \E(N_{2 \tau_n k} + 1) = \frac{2 \tau_n k}{\alpha} + 1 = O(1),
\]
and
\[
\E (T_{M_{\lfloor T/\tau_n \rfloor}+1}) 
\leq \E (T_{N_{2 \tau_n k}+1}) \leq \frac{2 \tau_n k + 1}{\alpha} = O(1).
\]
\qedhere
\end{proof}
\begin{lemma}\label{lem:jump-times-increment}
We have    
$$
    \sup_{k \leq T/\tau_n} \E \| Y_{T_{M_k}} - Y_{T_{N_{\tau k}}} \| \leq {C (\tau_n + |\alpha_n - \alpha|)}
$$
{for some constant $C>0$ independent of $n$.}
\end{lemma}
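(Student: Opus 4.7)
The plan is to reduce the bound on $\E\|Y_{T_{M_k}} - Y_{T_{N_{\tau_n k}}}\|_2$ to a counting-process comparison. By Lemma~\ref{lem:MF-Lip}, $\|\nabla V[\mu](x)\|_2 \leq C_0$ uniformly, so every jump of $Y$ is uniformly bounded: $\|Y_{T_{\ell+1}}-Y_{T_\ell}\|_2 = \alpha\|\nabla V[\rho_{T_{\ell+1}^-}](Y_{T_{\ell+1}^-})\|_2 \leq \alpha C_0$. Telescoping over the jumps between any two indices yields
\[
\|Y_{T_{M_k}} - Y_{T_{N_{\tau_n k}}}\|_2 \leq \alpha C_0\,|M_k - N_{\tau_n k}|
\]
almost surely, so it suffices to prove $\sup_{k\leq T/\tau_n}\E|M_k - N_{\tau_n k}| = O(\tau_n + |\alpha_n - \alpha|)$.

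For this, I would exploit the deterministic dominance $M_k \leq N_{\tau_n k/c_n}$ with $c_n = \alpha_n/((1+q_n)\alpha)$, which follows from the inequality $\tau_n K_\ell \geq T_\ell c_n$ of Lemma~\ref{lem:jump-times-gap}. Inserting this dominating quantity and using the triangle inequality,
\[
\E|M_k - N_{\tau_n k}| \leq \E\bigl(N_{\tau_n k/c_n} - M_k\bigr) + \E\bigl|N_{\tau_n k/c_n} - N_{\tau_n k}\bigr|,
\]
where the absolute value can be dropped from the first term. Both terms are then computed in closed form. Since $M_k$ counts the steps $j \leq k$ at which $\eta^i_j > 0$, one has $M_k \sim \mathrm{Bin}(k, q_n)$ and $\E M_k = k q_n$; while $\E N_{\tau_n k/c_n} = \tau_n k/(c_n\alpha) = k q_n(1+q_n)$ after using $1/(c_n\alpha) = (1+q_n)/\alpha_n$ and $q_n = \tau_n/\alpha_n$. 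This gives $\E(N_{\tau_n k/c_n} - M_k) = k q_n^2 \leq T q_n^2/\tau_n = T\tau_n/\alpha_n^2 = O(\tau_n)$ uniformly in $k \leq T/\tau_n$. For the second term, the Poisson identity $\E|N_s - N_t| = |s-t|/\alpha$ and the explicit form of $c_n$ give
\[
\E|N_{\tau_n k/c_n} - N_{\tau_n k}| = \frac{\tau_n k}{\alpha}\Bigl|\tfrac{1}{c_n} - 1\Bigr| \leq \frac{T}{\alpha\alpha_n}\bigl(q_n\alpha + |\alpha - \alpha_n|\bigr) = O(\tau_n + |\alpha_n - \alpha|).
\]
Combining these estimates bounds $\E|M_k - N_{\tau_n k}|$ as desired, and multiplying by $\alpha C_0$ gives the stated inequality.

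The only mildly delicate point is bookkeeping: one must (i) identify $M_k$ precisely with a binomial law, which is immediate from the iid Bernoulli structure of the masks, and (ii) chase the relation $q_n = \tau_n/\alpha_n$ through to recognize $q_n^2/\tau_n$ as $O(\tau_n)$. After the boundedness-of-jumps reduction, the geometric content of the limit SDE disappears and the whole statement becomes a comparison of two coupled counting processes by their intensities.
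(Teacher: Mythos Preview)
Your proof is correct and takes a genuinely different route from the paper's. Both arguments begin with the same reduction $\|Y_{T_{M_k}}-Y_{T_{N_{\tau_n k}}}\|\leq \alpha C_0|M_k-N_{\tau_n k}|$, but diverge in how they bound $\E|M_k-N_{\tau_n k}|$. The paper argues that any discrepancy of size $r$ forces at least $r$ Poisson arrivals to fall into a window $[\tau_n k-\varepsilon,\tau_n k+\varepsilon]$ of width governed by the coupling error of Lemma~\ref{lem:jump-times-gap}; it then conditions on $N_T=m$, invokes the uniform order-statistics representation of the arrival times, and sums over $m$ against the Poisson law. Your approach is more direct: you use only the one-sided dominance $M_k\leq N_{\tau_n k/c_n}$ (with $c_n=\alpha_n/((1+q_n)\alpha)$) from Lemma~\ref{lem:jump-times-gap}, insert this pivot in the triangle inequality, and compute both pieces in closed form from the marginals $M_k\sim\mathrm{Bin}(k,q_n)$ and $N_s\sim\mathrm{Poisson}(s/\alpha)$. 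This avoids the conditional order-statistics step entirely and is arguably more elementary. The paper's argument, on the other hand, is more symmetric (it treats both signs of $M_k-N_{\tau_n k}$ via the same localization idea) and makes the geometric picture---that discrepancies can only come from Poisson points near $\tau_n k$---more explicit. One minor clarification: in your step identifying $M_k\sim\mathrm{Bin}(k,q_n)$, the cleanest justification is that the coupling of Lemma~\ref{lem:jump-coupling} preserves the $\mathrm{Geom}(q_n)$ marginal of each $\Delta K_\ell$, and the standard correspondence between i.i.d.\ geometric inter-arrival times and binomial counting processes then gives the result (the masks $\eta^i_j$ are not directly constructed in the coupling).
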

\begin{proof}
Take some $k \leq T/\tau_n$. First we show that $|N_{\tau_n k} - M_k|$ is dominated by the number of points $T_j$ that land in a neighborhood of $\tau_n k$ of size $\tau_n O(N_T)$. Let $r$ be a natural number{, $\ell = N_{\tau_n k}$,} and observe the following sequence of implications
\begin{align*}
& N_{\tau_n k} - M_k \geq r \implies 
T_l\leq \tau_n k < T_{l+1} \text{ and } K_{l-r+1} > k
\\
& \implies T_{l-r+1}\leq\dots\leq T_l\leq \tau_n k \leq \tau_n K_{l-r+1}
\\ &\implies T_{l-r+1},\dots,T_l \in [\tau_n k - \varepsilon, \tau_n k + \varepsilon],
\end{align*}
where $\varepsilon = \max\{T | \frac{\alpha_n}{\alpha(1+q_n)}-1 |, \tau_n N_T + T|\alpha_n/\alpha-1|\}$ and the last implication uses the bound $|T_{l-r+1} -\tau_n K_{l-r+1}| \leq \varepsilon$ from Lemma \ref{lem:jump-times-gap}. 
Hence, if $N_{\tau_n k} - M_k \geq r$, then at least $r$ points among $T_1,\dots,T_{N_T}$ fall into the interval $[\tau_n k - \varepsilon, \tau_n k + \varepsilon]$. The same way we can show that $M_k - N_{\tau_n k} \geq r$ implies that at least $r$ points fall into the interval $[\tau_n k - \varepsilon, \tau_n k + \varepsilon]$.
Thus we have 
$$
    |N_{\tau_n k} - M_k| \leq \sum_{j=1}^{N_T} \mathbbm{1}\{ T_j \in [\tau_n k - \varepsilon, \tau_n k + \varepsilon] \}
$$

Recall that $T_j |N_T = m$ has the same distribution as $j$-th order statistic $u_{(j)}$ of i.i.d.~uniform rvs $u_1,\dots,u_m \sim U([0,T])$. 
{Before taking the conditional expectation, let us for convenience bound $\varepsilon$ by $(N_T + 1) \varepsilon'$ where
\begin{equation}    \label{eq:def-eps-prime}
 \varepsilon' := \max\Big\{T \Big| \frac{\alpha_n}{\alpha(1+q_n)}-1 \Big|, \tau_n + T\Big|\frac{\alpha_n}{\alpha}-1\Big|\Big\}.   
\end{equation}
}
Hence,
$$
    \E\big(|N_{\tau_n k} - M_k| \big| N_T=m\big) \leq \E \bigg(\sum_{j=1}^{m} \mathbbm{1}\{u_j \in [\tau_n k - (m+1)\varepsilon', \tau_n k + (m+1) \varepsilon']\}\bigg) = \frac{2\varepsilon' m (m+1)}{T} .
$$

Now we can finally derive the upper bound for $\E \|Y_{T_{N_{\tau_n k}}} - Y_{T_{M_k}}\|$.
By construction we have for any $\ell,  \ell^\prime > 0$ that  $\|Y_{T_\ell} - Y_{T_{\ell^\prime}}\| \leq |\ell-\ell^\prime| \max\|\alpha \nabla V[\mu_{T_j}](T_j)\|\leq \alpha C|\ell-\ell'|$ for some $C>0$. Finally, using the law of total expectation  we get
\begin{align*}
    \E \|Y_{T_{N_{\tau_n k}}} - Y_{T_{M_k}}\| 
    &\leq
    \sum_{m=1}^\infty \alpha \max_{\ell \leq m} \| \nabla V[\mu_{T_\ell}](T_\ell) \| \E \big(|N_{\tau_n k} - M_k| \big| N_T = m\big) \P(N_T = m)
    \\& \leq
   \alpha C \sum_{m=1}^\infty \frac{2 \varepsilon' m(m+1)}{T} \cdot e^{-T/\alpha}\frac{(T/\alpha)^m}{m!}
    \\& \leq C \varepsilon',
\end{align*}
{where $C$ is redefined on the last line. One can verify by inspection that $C$ is independent of~$k$ and $n$. Furthermore, from its definition \eqref{eq:def-eps-prime}, we see that $\varepsilon'$ is bounded by a constant times $\tau_n + |\alpha_n - \alpha|$, which concludes the proof.}
\qedhere
\end{proof}
\newpage
\appendix 

\section{Experimental details}

All experiments are coded in JAX \citep{jax2018github}. Our code is provided at \url{https://github.com/PierreMarion23/dropout-phase-diagram}.

\subsection{Teacher-student experiment} \label{subsec:exp-details-trajs}

We train a two-layer ReLU neural networks with $d$-dimensional input and $n$ neurons. The neurons are initialized with i.i.d.~Gaussians with variance $1/d$ on the inner layer and variance~$1$ on the outer layer. The input data is standard Gaussian, while the target is generated from a teacher network with the same architecture. The network is trained with (full-batch) gradient descent on the mean squared error. We use the feature learning scaling, meaning that the output of the network is multiplied by $1/n$, while  the learning rate is $n \eta_0$, where $\eta_0$ is a base learning rate. The numerical values of the parameters are given in Table \ref{tab:parameters-teacher-student}.

\begin{table}[b]
    \centering
    \begin{tabular}{cc}
        \toprule
       Parameter & Value \\
       \midrule
       Data dimension $d$ & 20 \\
       Width $n$ & $\{200; 500; 1,\!000; 2,\!000; 5,\!000\}$  \\
       Number of gradient steps & $1,\!000$  \\
       Dropout probability $p$ & $0.3$ \\
       Training set size & $500$ \\
       Width of the teacher & $15$ \\
       Base learning rate $\eta_0$ & $0.5$ \\
       \bottomrule
    \end{tabular}
    \caption{Parameters for the teacher-student experiment.}
    \label{tab:parameters-teacher-student}
\end{table}

\begin{figure}[!b]
\centering
\includegraphics[scale=0.48]{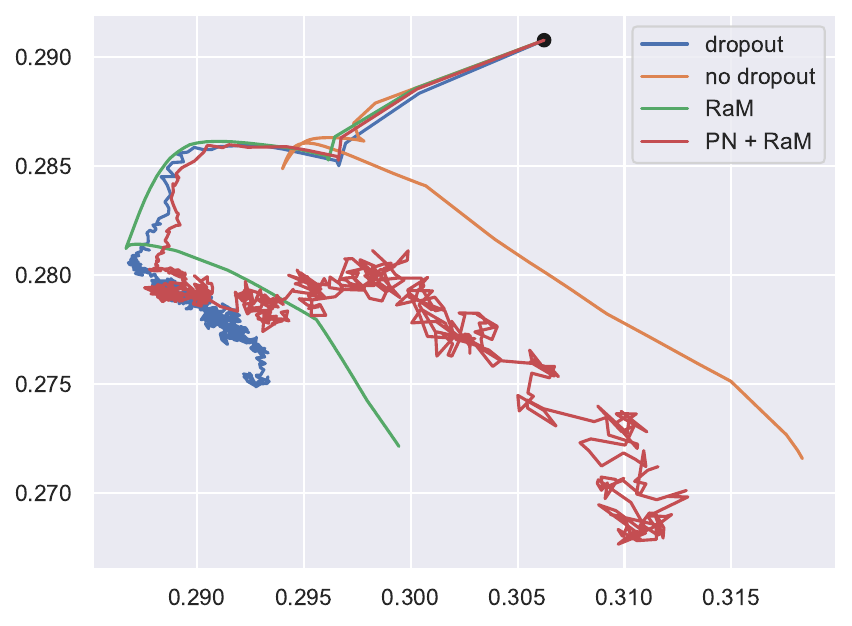}
\hspace{0.5cm}
\includegraphics[scale=0.48]{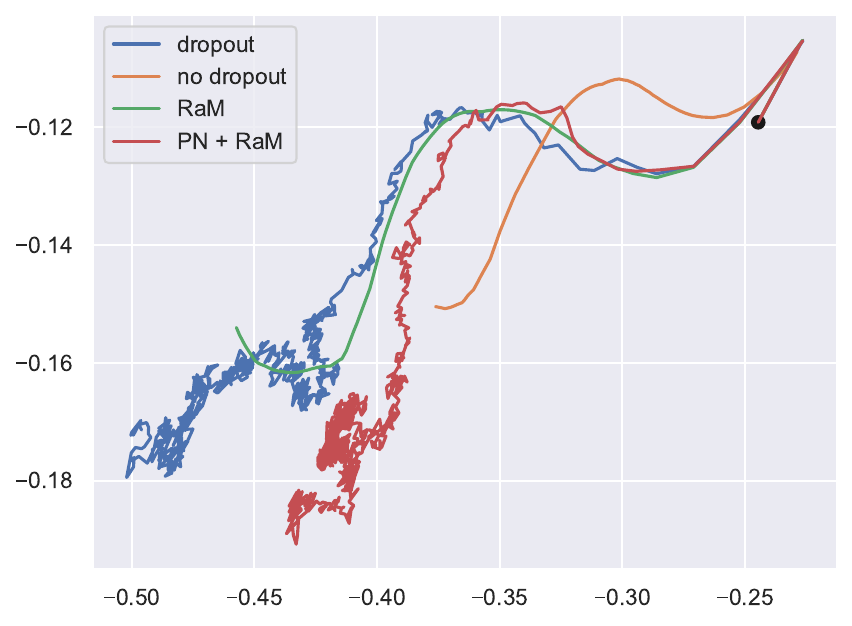}
\caption{Illustration of the pathwise convergence between random metric (RaM) and dropout dynamics (Proposition~\ref{prop:ram-equivalent}) for the teacher-student experiment. Same realization as in Figure~\ref{fig:dropout-ram}, adding GD with propagation noise and random geometry (PN + RaM).}\label{fig:dropout-ram-with-forback}
\end{figure}

We compare four optimization algorithms: plain gradient descent (GD), GD with dropout, GD with random geometry (RaM), and GD with propagation noise and random geometry (PN + RaM), see Section \ref{sec:numerics} for details. For the first three of these algorithms, Figure~\ref{fig:dropout-ram} in the main text shows a realization of the weight trajectory for two arbitrary weights. For completeness, we show in Figure \ref{fig:dropout-ram-with-forback} the same realization for the four algorithms.

\begin{figure}[t]
\centering
\begin{subfigure}{0.45\linewidth}{}
\centering
\includegraphics[scale=0.45]{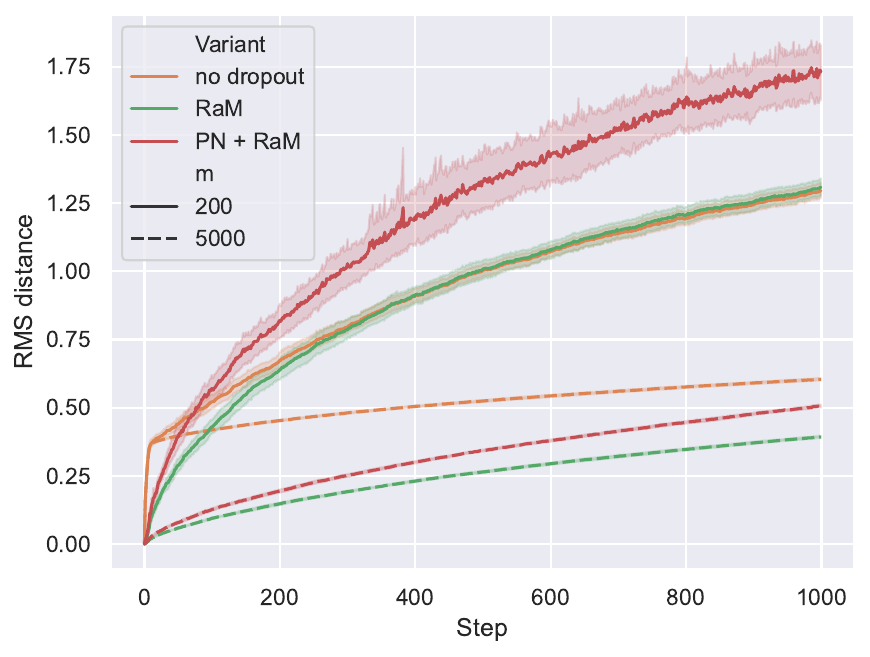}
\caption{As a function of the training time, for two widths.}
\label{fig:dropout-distance-step}
\end{subfigure}%
\hfill
\begin{subfigure}{0.45\linewidth}{}
\centering
\includegraphics[scale=0.45]{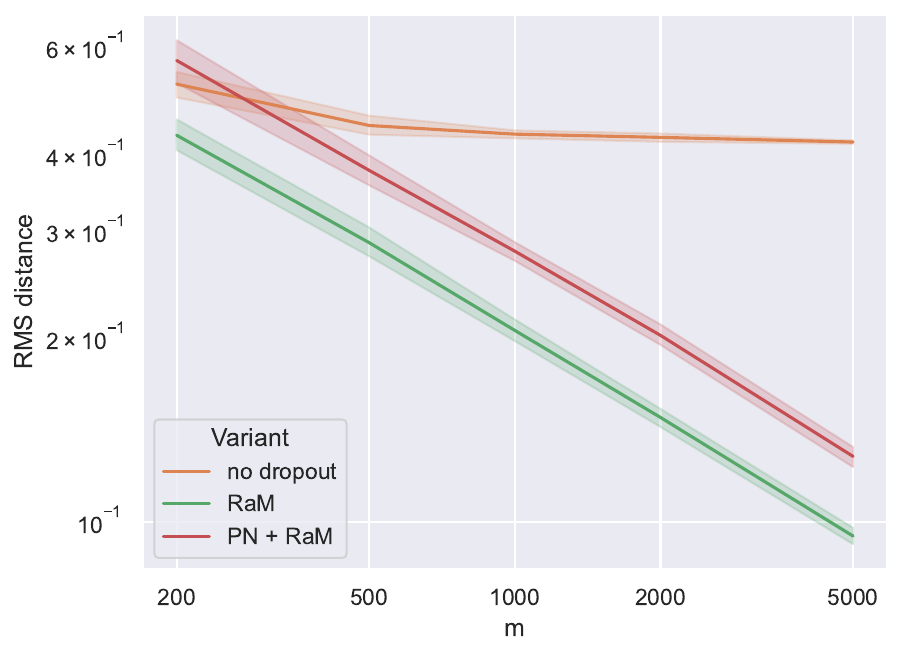}
\caption{As a function of the width, at a fixed training time (after 100 steps).}
\label{fig:dropout-distance-m}
\end{subfigure}%
\caption{RMS distance in parameter space between the dropout algorithm and the three other tested algorithms (plain GD, GD with random geometry and GD with propagation noise and random geometry) for the teacher-student experiment.}\label{fig:dropout-distance}
\end{figure}

\begin{figure}[t]
\centering
\begin{subfigure}{0.45\linewidth}{}
\centering
\includegraphics[scale=0.47]{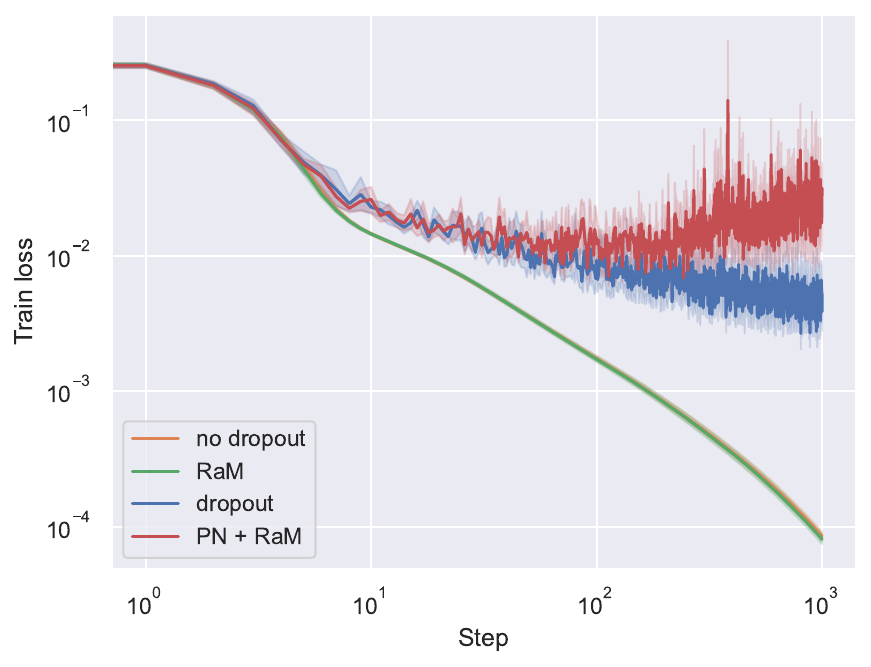}
\caption{Train loss for width $200$.}
\end{subfigure}%
\hfill
\begin{subfigure}{0.45\linewidth}{}
\centering
\includegraphics[scale=0.47]{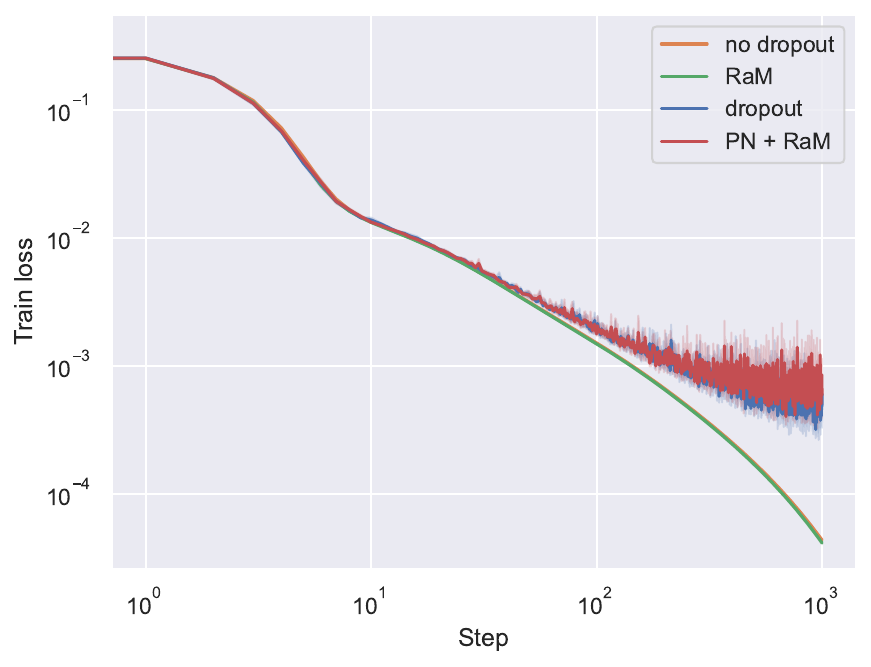}
\caption{Test loss for width $200$.}
\end{subfigure}%
\hfill
\begin{subfigure}{0.45\linewidth}{}
\centering
\includegraphics[scale=0.47]{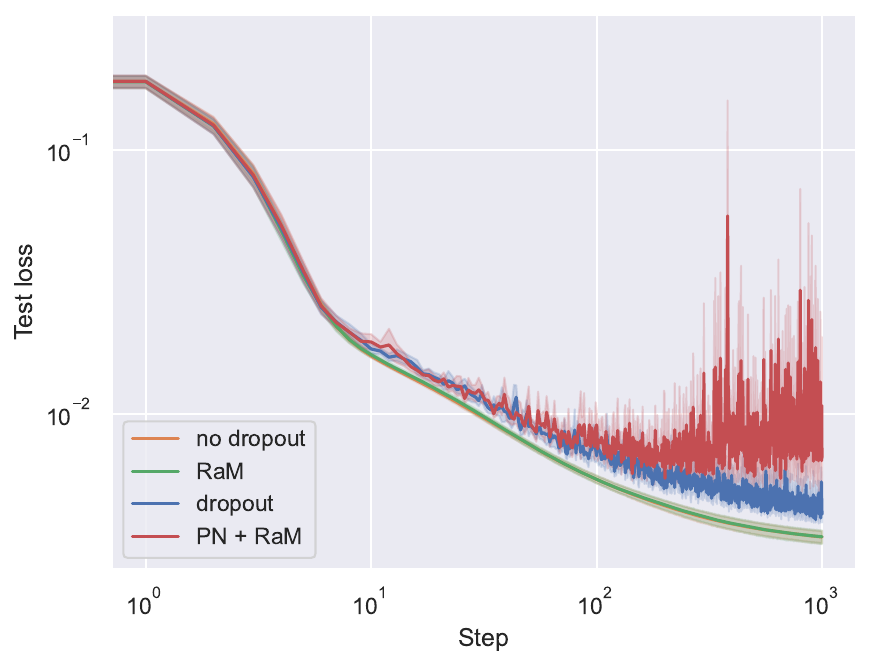}
\caption{Train loss for width $5,\!000$.}
\end{subfigure}%
\hfill
\begin{subfigure}{0.45\linewidth}{}
\centering
\includegraphics[scale=0.47]{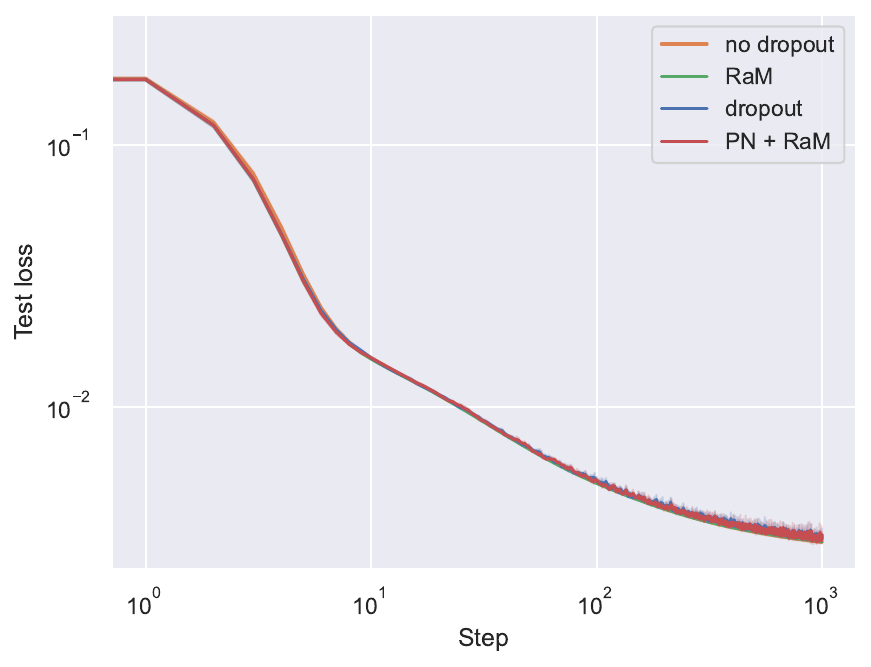}
\caption{Test loss for width $5,\!000$.}
\end{subfigure}%
\caption{Evolution of losses during training for the teacher-student experiment with the four tested optimization algorithms.}\label{fig:losses}
\end{figure}

To provide metrics for the comparison between these algorithms, we compute the RMS distance ($\ell_2$ distance normalized by $\sqrt{n}$) in parameter space between the dropout version and each of the other three versions. The results are presented in Figure \ref{fig:dropout-distance-step} as a function of the number of gradient steps for two different widths, and in Figure \ref{fig:dropout-distance-m} as a function of the width for a fixed number of steps. Confidence intervals over $10$ repetitions are plotted. We observe that dropout is closer to RaM than to plain GD and PN+RaM, especially for large width and small training time, which is expected from our theory (Proposition \ref{prop:ram-equivalent}).

We observe in Figure \ref{fig:losses} that, in terms of performance metrics, in this setting, RaM and plain GD both outperform dropout, and perform very similarly to each other. In other words, in this setting the propagation noise from the forward pass is harmful for performance, even more so for the PN+RaM algorithm.

The experiment takes about 30 minutes to run on a consumer laptop.

\subsection{MNIST experiment} \label{subsec:exp-details-mnist}

We train a two-layer ReLU network on the binary classification of digits 4 and 7 of MNIST. The images are flattened into vectors of dimension $784$, and pixel values are mapped into $[0, 1]$. The loss is the logistic loss. We use the standard data split from MNIST, removing at random $2,\!000$ samples from the training set to serve as validation set. The initialization and scaling of the neural network are as described in Section \ref{subsec:exp-details-trajs}.

We compare five optimization algorithms: plain mini-batch stochastic gradient descent (SGD, no dropout), SGD with dropout, SGD with random geometry (RaM), SGD with propagation noise (PN), SGD with propagation noise and random geometry (PN + RaM), see Section \ref{sec:numerics} for details. The numerical values of the parameters are given in Table \ref{tab:parameters-mnist}. 

Confidence intervals over $10$ repetitions are plotted. In Figure \ref{fig:best-test-loss-mnist}, we log the loss value over the course of training every $100$ steps, and report the minimum logged test loss. 

\begin{table}[t]
    \centering
    \begin{tabular}{cc}
        \toprule
       Parameter & Value \\
       \midrule
       Width $n$ & $4,\!000$  \\
       Number of gradient steps & $40,\!000$  \\
       Dropout probability $p$ & $\{0.1, 0.2, 0.3, 0.4\}$ \\
       Base learning rate $\eta_0$ & $0.25$ \\
       Batch size & $64$ \\
       \bottomrule
    \end{tabular}
    \caption{Parameters for the MNIST experiment.}
    \label{tab:parameters-mnist}
\end{table}

The experiment takes about 3 hours to run on an Nvidia GTX 1080Ti GPU.

\section*{Acknowledgments}
P.M.~is supported by a Google PhD Fellowship. This work was done in part while P.M.~was visiting the Simons Institute for the Theory of Computing.

\newpage

\bibliography{LC.bib}

\end{document}